\keywords{planning, synthesis, determinism, automata}
\newcommand{\cev}[1]{\reflectbox{\ensuremath{\vec{\reflectbox{\ensuremath{#1}}}}}}
\renewcommand*{\path}[1]{\url{#1}}
\theoremstyle{plain} 
\theoremstyle{definition}
\newtheorem{definition}{Definition}[section]
\theoremstyle{plain}
\newtheorem{proposition}{Proposition}[section]
\theoremstyle{plain}
\newtheorem{theorem}{Theorem}[section]
\theoremstyle{plain}
\newtheorem{lemma}{Lemma}[section]
\theoremstyle{plain}
\newtheorem{corollary}{Corollary}[section]
\theoremstyle{plain}
\newtheorem{remark}{Remark}[section]
\def\cf{{\em cf.}}
\begin{document}

%
\title{Synthesis of timeline-based planning strategies avoiding determinization}
\titlecomment{{\lsuper*}This paper is an extended and revised version of~\cite{gandalf24}.}
\thanks{Angelo Montanari acknowledges the support from the Interconnected Nord-Est Innovation Ecosystem (iNEST), which received funding from the European Union Next-GenerationEU (PIANO NAZIONALE DI RIPRESA E RESILIENZA (PNRR) – MISSIONE 4 COMPONENTE 2, INVESTIMENTO 1.5 – D.D. 1058 23/06/2022, ECS00000043) and from the MUR PNRR project FAIR - Future AI Research (PE00000013) also funded by the European Union Next-GenerationEU.
 Dario Della Monica acknowledges the partial support from the M4C2 I1.3
 ``SEcurity and RIghts In the CyberSpace -- SERICS'' (PE00000014 - CUP
 D33C22001300002), under the National Recovery and Resilience Plan (NRRP) funded
 by the European Union-NextGenerationEU and from the Unione
 europea-Next Generation EU, missione 4 componente 2, project MaPSART "Future
 Artificial Intelligence (FAIR)", PNRR, PE00000013-CUP C63C22000770006.
 Dario Della Monica and Angelo Montanari acknowledge the partial support from
 the 2024 INdAM-GNCS project ``Certificazione, monitoraggio, ed
 interpretabilit\`a in sistemi di intelligenza artificiale'' (project
 n. CUP E53C23001670001).
 This manuscript reflects only the authors’ views and opinions, neither the European Union nor the European Commission can be considered responsible for them.
%
%
}	


\author[D.~Della Monica]{Dario Della Monica}[a]
\author[A.~Montanari]{Angelo Montanari}[a]
\author[P.~Sala]{Pietro Sala}[b]

\address{University of Udine, Italy}	
\email{dario.dellamonica@uniud.it, angelo.montanari@uniud.it}  

\address{University of Verona, Italy}	
\email{pietro.sala@univr.it}  

%
%




\begin{abstract}
%
%
  Qualitative timeline-based planning models domains as sets of independent, but
  interacting, components whose behaviors over time, the timelines, are governed
  by sets of qualitative temporal constraints (ordering relations), called
  synchronization rules.
  Its plan-existence problem has been shown to be \PSPACE-complete; in
  particular, \PSPACE-membership has been proved via reduction to the
  nonemptiness problem for nondeterministic finite automata.
  However, nondeterministic automata cannot be directly used to synthesize
  planning strategies as a costly determinization step is needed.
  In this paper, we identify a fragment of qualitative timeline-based planning
  whose plan-existence problem can be directly mapped into the nonemptiness
  problem of deterministic finite automata, which can then be used to
  synthesize strategies.
  In addition, we identify a maximal subset of Allen's relations that fits into
  such a deterministic fragment.
\end{abstract}

\maketitle


\section{Introduction}
\label{sec:intro}

\emph{Timeline-based} planning is an approach that originally emerged and
developed in the context of planning and scheduling of \emph{space}
operations~\cite{Muscettola94}. In contrast to common action-based formalisms,
such as PDDL~\cite{FoxL03}, timeline-based languages do not distinguish
among actions, states, and goals. Rather, the domain is modeled as a set of
independent, but interacting, components, the timelines, whose behavior over
time is governed by a set of temporal constraints.
It is worth pointing out that timeline-based planning was born with an
application-oriented flavor, with various successful stories, and only
relatively recently foundational work about its expressiveness and
complexity has been done.
The present paper aims at bringing back theory to practice by searching for expressive enough and computationally well-behaved fragments.

\sloppy
Timeline-based planning has been successfully employed by planning systems
developed at NASA~\cite{ChienRKSEMESFBST00,ChienSTCRCDLMFTHDSUBBGGDBDI04} and at
ESA~\cite{FratiniCORD11} for both short- to long-term mission planning and
on-board autonomy. More recently, timeline-based planning systems such as
PLATINUm~\cite{UmbricoCMO17} are being employed in collaborative robotics
applications~\cite{UmbricoCO23}. All these applications share a deep reliance on
\emph{temporal reasoning} and the need for a tight integration of planning with
\emph{execution}, both features of the timeline-based framework.
The latter feature is usually achieved by the use of \emph{flexible timelines},
which represent a set of possible executions of the system that differ in the
precise timing of the events, hence handling the intrinsic \emph{temporal
uncertainty} of the environment. A formal account of timeline-based planning
with uncertainty has been provided by \cite{CialdeaMayerOU16}. A lot of theoretical research followed, including
\emph{complexity}~\cite{BozzelliMMP18b,BozzelliMMP18a,GiganteMCO17} and
\emph{expressiveness}~\cite{DellaMonicaGMS18,GiganteMMO16} analyses, based on
the  formalization given in \cite{CialdeaMayerOU16}, which is the one we use here as well.

To extend reactivity and adaptability of timeline-based systems beyond
temporal uncertainty, the framework of (two-player) \emph{timeline-based games} has been
recently proposed. In timeline-based games, the system player tries to build a
set of timelines satisfying the constraints independently from the choices of
the environment player. This framework allows one to handle general
nondeterministic environments in the timeline-based setting. However, its
expressive power comes at the cost of increasing the complexity of the problem.
While the plan-existence problem for timeline-based planning is
EXPTIME-complete~\cite{GiganteMCO17}, deciding the existence of strategies for
timeline-based games is 2EXPTIME-complete~\cite{GiganteMOCR20}, and a controller
synthesis algorithm exists that runs in doubly exponential
time~\cite{AcamporaGGMP2022}.

Such a high complexity motivates the search for simpler fragments
%
%
that can nevertheless be useful in practical scenarios. One of them is the
\emph{qualitative} fragment, where temporal constraints only concern the
relative order between pairs of events and not their
%
%
distance. The qualitative fragment already proved itself to be easier for the
plan-existence problem, being PSPACE-complete~\cite{DellaMonicaGTM20}, and this
makes it a natural candidate for the search of a good fragment for the strategy-existence problem.
Unfortunately, a \emph{deterministic} arena is crucial to synthesize a non-clairvoyant strategy
in \emph{reactive synthesis} problems~(see, for instance,
\cite{PnueliR89}), and determinizing the nondeterministic automaton of exponential size built for the qualitative case in \cite{DellaMonicaGTM20} would cause an exponential blowup, thus resulting in a procedure of doubly-exponential
complexity. The strategy synthesis procedure from a deterministic arena is a well-known, standard construction in reactive synthesis~\cite{PnueliR89}. Once the deterministic arena is obtained, one solves a reachability/safety game on it via classical fixpoint algorithms.

In this paper, we identify a class of qualitative timeline-based planning
problems, called the \emph{eager fragment}, that admits a
characterization of solution plans in terms of \emph{deterministic} finite automata
(DFA) of size at most exponential, thereby enabling an exponential solution to
the strategy-existence problem (the synthesis problem for short).\footnote{As a matter of fact, the eager fragment was introduced in~\cite{gandalf24}. 
However, there was a mistake in its definition, because
it allows disjunctions (inside rules), while they must be disallowed for the result to hold.}
Intuitively, the eager fragment aims to remove
\emph{nondeterminism} from qualitative timeline-based planning problems.
We identify two sources of nondeterminism: disjunctions, which require automata
to \emph{guess} the disjuncts witnessing their satisfaction, and some particular
conjunctions of constraints, imposing partial ordering on events, which require
automata to \emph{guess} the exact (linear) order of the events in advance.

On the one hand, we prove that restricting to the eager fragment is a
\emph{sufficient} condition to directly synthesize a DFA of singly-exponential size, thus usable as an arena to play the game in an
asymptotically optimal way.
On the other hand, we give evidence of the expressive power of such a fragment
by showing that eager
synchronization rules can systematically encode all major control flow patterns of Business Process Model and Notation (BPMN), including
sequential
execution, parallel branching, exclusive choice, and iterative loops (\autoref{sec:eager-expressiveness}); moreover, 
 we demonstrate
that the eager fragment is expressive enough to capture a large subset of
Allen's relations~\cite{Allen83} (\autoref{sec:allen}).

It is not known, instead, whether both restrictions imposed by the eager fragment to remove the two source of nondeterminism (disjunctions and particular
conjunctions imposing a partial ordering on events) are \emph{necessary}, or one suffices.
Towards an answer to this question, 
 we identify a class of (non-eager) qualitative
timeline-based planning problems for which a characterization of solution plans
in terms of DFA of size at most exponential does
not exist (\autoref{sec:expAutomataLowerBound}).
Such a class relaxes one of the two restrictions (by allowing for disjunctions).
However, to show that both restrictions are necessary, a similar result should
be proved for a class of problems obtained by relaxing the other restriction.

The rest of the paper is organized as follows.
\autoref{sec:preliminaries} recalls some background knowledge on timeline-based
planning.
\autoref{sec:fragment} defines the eager fragment, that directly maps into a DFA
of singly exponential size.
\autoref{sec:wordsToPlans} gives a word encoding of timelines, and vice versa.
\autoref{sec:expAutomataLowerBound} proves that it is not possible to encode the
solution plans for (non-eager) qualitative timeline-based planning problems
using deterministic finite automata of exponential size.
\autoref{sec:dfaForPlans} builds an automaton to recognize plans, and
\autoref{sec:dfaForSolutions} shows how to construct an automaton that accepts
solution plans.
\autoref{sec:eager-expressiveness} demonstrates the practical relevance of
the eager fragment through a comprehensive case study that systematically
translates BPMN diagrams into eager
timeline-based qualitative planning problems, showing that the fragment can
capture all major control flow patterns.
\autoref{sec:allen} identifies the maximal subset of Allen's relations which is
captured by the eager fragment.
Finally, \autoref{sec:conclusion} summarizes the main contributions of the work
and discusses possible future developments.

This paper is a considerably revised and extended version of~\cite{gandalf24}.
In particular, missing proofs are added, \autoref{sec:expAutomataLowerBound}
and~\autoref{sec:eager-expressiveness} are completely new,
\autoref{sec:fragment} has been enriched with more examples that make it clear
the intuition behind eager rules, and \autoref{sec:allen} has been considerably
extended with a summarizing table and with several explanatory examples.



\section{Background}
\label{sec:preliminaries}

In this section, we recall the basic notions of timeline-based
planning and of its qualitative variant.
As usual, $\mathbb N$ is the set of natural numbers, and  \natpos stands for $\mathbb N \setminus \{ 0 \}$.

\subsection{Timeline-based planning}

The key notion is that of \emph{state variable}.
%

\begin{definition}[State variable]
  \label{def:statevar}
  A \emph{state variable} is a tuple $x=(V_x,T_x,D_x)$, where:
  \begin{itemize}
  \item $V_x$ is the \emph{finite domain} of the variable;
  \item $T_x:V_x\to2^{V_x}$ is the \emph{value transition function}, which maps
        each value $v\in V_x$ to the set of values that can (immediately) follow it;
  \item $D_x:V_x\to\natpos\times(\natpos\cup\set{+\infty})$ is a function that maps
        each $v\in V_x$ to the pair $(d^{x=v}_{min},d^{x=v}_{max})$ of
        minimum and maximum durations allowed for intervals where $x=v$.
  \end{itemize}
\end{definition}

A \emph{timeline} describes how the value of a state variable $x$ evolves over time. It consists of a finite sequence of \emph{tokens}, each denoting a value $v$ and (the duration
of) a time interval $d$.

\begin{definition}[Token and timeline]
  \label{def:timeline}
  A \emph{token} for $x$ is a tuple $\tau=(x,v,d)$, where $x$ is a state
  variable, $v\in V_x$ is the value held by the variable, and $d\in\natpos$ is
  the \emph{duration} of the token, with $D_x(v) = (d^{x=v}_{min},d^{x=v}_{max})$ 
  and $d^{x=v}_{min} \le d \le d^{x=v}_{max}$.
  A \emph{timeline} for a state variable $x$ is a finite sequence
  $\timeline=\seq{\tau_1,\ldots,\tau_k}$ of tokens for $x$, for some
  $k\in\N$, such that, for any $1\le i < k$, if $\tau_i = (x,v_i,d_i)$,
  then $v_{i+1} \in T_x(v_i)$.
\end{definition}

For every timeline $\timeline = \seq{\tau_1,\ldots,\tau_k}$ and
token $\tau_i=(x,v_i,d_i)$ in $\timeline$, we define the functions
$\starttime(\timeline,i) = \sum_{j=1}^{i-1} d_j$ and
$\endtime(\timeline,i) = \starttime(\timeline,i) + d_i$. 
We call the \emph{horizon} of $\timeline$ the end time of the last token in
$\timeline$, that is, $\endtime(\timeline, k)$.  We write
$\starttime(\tau_i)$ and $\endtime(\tau_i)$ to indicate
$\starttime(\timeline,i)$ and $\endtime(\timeline,i)$, respectively, when
there is no ambiguity.

The overall behavior of state variables is subject to
a set of temporal constraints known as \emph{synchronization rules} (or
simply \emph{rules}). We start by defining their basic building blocks.
Let $\toknames$ be a finite set of \emph{token names}. \emph{Atoms} are
formulas of the following form:
\begin{equation*}
\begin{split}
  \mathit{atom} &\coloneqq \mathit{term} \before_{l,u} \mathit{term} \mid \mathit{term} \before*_{l,u} \mathit{term} \\
  \mathit{term} &\coloneqq \tokstart(a) \| \tokend(a) \| t
\end{split}
\end{equation*}
%
%
where $a\in\toknames$, $l,t\in\N$, and $u\in\N\cup\set{+\infty}$. 
Terms $\tokstart(a)$ and $\tokend(a)$ respectively denote the start and the end
of the token associated with the token name $a$.
As an example, atom $\tokstart(a)\before_{l,u}\tokend(b)$ (resp.,
$\tokstart(a)\before*_{l,u}\tokend(b)$) relates tokens $a$ and $b$ by stating
that the end of $b$ cannot precede (resp., must succeed) the beginning of $a$,
and the distance between these two endpoints must be at least $l$ and at most
$u$.
An atom $\mathit{term}_1\before_{l,u}\mathit{term}_2$, with $l=0$, $u=+\infty$,
and $\mathit{term}_1,\mathit{term}_2 \notin \N$, is \emph{qualitative} (the
subscript is usually omitted in this case).
We sometimes use the abbreviation $\mathit{term}_1 = \mathit{term}_2$ for
$\mathit{term}_1 \before \mathit{term}_2 \wedge \mathit{term}_2 \before
\mathit{term}_1$.
%


An \emph{existential statement} $\E$ is a constraint of the form:
\[ \exists a_1[x_1=v_1]a_2[x_2=v_2] \ldots a_n[x_n=v_n]. \ \rulebody \] where
$x_1, \ldots, x_n$ are state variables, $v_1, \ldots, v_n$ are values, with $v_i\in V_{x_i}$, for $i = 1, \ldots, n$, $a_1,\ldots,a_n$ are token names from \toknames, and
$\rulebody$ is a finite conjunction of atoms, called a \emph{clause},
involving only tokens $a_1,\ldots,a_n$, plus, possibly, the \emph{trigger token}
(usually denoted by $a_0$) of the \emph{synchronization
  rule} in which the existential statement is embedded, as shown
below.%
\footnote{Without loss of generality, we assume that
  \begin{enumerate*}[label={\it (\roman*)}]
  \item if a token $a$ appears in the
    quantification prefix $\exists a_1[x_1=v_1]a_2[x_2=v_2] \ldots a_n[x_n=v_n]$
    of \E, then at least one among $\tokstart(a)$ and $\tokend(a)$ occurs in one
    of its atoms, and
  \item trivial atoms, i.e., atoms of the form $\tokstart(a) \before
    \tokstart(a)$, $\tokend(a) \before \tokend(a)$, $\tokstart(a) \before
    \tokend(a)$, or $\tokstart(a) \before* \tokend(a)$, never occur in
    existential statements, even though they clearly hold by the definition
    of token.
  \end{enumerate*}
}
Intuitively, an existential statement asks for the existence of tokens
$a_1,a_2,\dots,a_n$ whose state variables take the corresponding values
$v_1,v_2,\dots,v_n$ and are such that their start and end times satisfy the
atoms in $\rulebody$.

\emph{Synchronization rules} have one of
the following forms:
\begin{align*}
    a_0[x_0=v_0]\implies \E_1\lor\E_2\lor\ldots\lor\E_k \\ 
    \top \implies \E_1\lor\E_2\lor\ldots\lor\E_k
\end{align*}
where $a_0 \in \toknames$, $x_0$ is a state variable, $v_0 \in V_{x_0}$,
and $\E_i$ is an existential statement, for each $1\le i\le k$. 
In the former case, $a_0[x_0=v_0]$ is called \emph{trigger} and $a_0$ is the \emph{trigger token}, and the rule
is considered \emph{satisfied} if \emph{for all} the tokens for $x_0$ with
value $v_0$, at least one of the existential
statements is satisfied.
In the latter case, the rule is said to be \emph{triggerless}, and it
states the truth of the body without any precondition.%
\footnote{%
  Without loss of generality, if $a_0$ is the trigger token of a non-triggerless
  rule, then both $\tokstart(a_0)$ and $\tokend(a_0)$ occur in the 
  existential statements of the rule.
  In particular, as an exception, we allow trivial atoms over trigger tokens (e.g.,
  $\tokstart(a_0) \before* \tokend(a_0)$).
}
We refer the reader to~\cite{CialdeaMayerOU16} for a formal account of the
semantics of the rules.
A synchronization rule is \emph{disjunction-free} if it contains only one
existential statement, that is, $k=1$.

A \emph{timeline-based planning problem} consists of a set of state variables
and a set of rules that represent the problem domain and the goal.

\begin{definition}[Timeline-based planning problem]
  \label{def:planningProblem}
  A \emph{timeline-based planning problem} is defined as a pair $P = (\SV, S)$,
  where $\SV$ is a set of state variables and $S$ is a set of synchronization
  rules involving state variables in $\SV$.
  Moreover, $P$ is \emph{disjunction-free} if so are all of its synchronization
  rules.
\end{definition}

A \emph{solution plan} for a given timeline-based planning problem is a set of
timelines, one for each state variable, that satisfies all the synchronization
rules.

\begin{definition}[Plan and solution plan]
  \label{def:plan}
  A \emph{plan} over a set of state variables $\SV$ is a finite set of timelines
  with the same horizon, one for each state variable $x\in\SV$.
  A \emph{solution plan} (or, simply, \emph{solution}) for a timeline-based
  planning problem $P=(\SV,S)$ is a plan over \SV such that all the rules in $S$
  are satisfied.
\end{definition}

The problem of determining whether a solution plan exists for a given
timeline-based planning problem is \EXPSPACE-complete \cite{GiganteMCO17}.
%

\begin{definition}[Qualitative timeline-based planning]
\label{def:qual:tbp}
    A timeline-based planning problem $P=(\SV,S)$ is said to be \emph{qualitative} if the following conditions hold: 
    \begin{enumerate}
    \item $D_x(v)=(1,+\infty)$, for all state variables $x\in\SV$ and $v\in
      V_x$.
    \item synchronization rules in $S$ involve \emph{qualitative} atoms only.
    \end{enumerate}
\end{definition}


%
%

In the rest of the paper, we focus on qualitative 
timeline-based planning.
Its complexity is shown to be PSPACE-complete in~\cite{DellaMonicaGTM20}, where
a reduction to the nonemptiness problem for non-deterministic finite automata
(NFA) is given.

\section{A well-behaved fragment}
\label{sec:fragment}


In this section, we introduce a meaningful fragment of qualitative
timeline-based planning for which it is possible to construct a DFA of singly exponential size.
The fragment is characterized by conditions on the admissible patterns of
synchronization rules (\emph{eager rules}).
The distinctive feature of eager rules is that their satisfaction with respect
to a given plan can be checked using an eager/greedy strategy, that is, when a
relevant event (start/end of a token involved in some atom) occurs, we are
guaranteed that the starting/ending point of such a token is useful for rule
satisfaction. With non-eager rules, instead, a relevant event may occur that is
not useful for rule satisfaction: some analogous event in the future will be.


As a preliminary step, we define a sort of reflexive
and transitive closure of a clause.
By slightly abusing the notation, we identify a clause \rulebody
%
%
with the finite set of atoms occurring in it. Let $t$, $t_1$, $t_2$, and $t_3$ be
terms of the form $\tokstart(a)$ or $\tokend(a)$, with $a \in \toknames$.
The \emph{closure} of $\rulebody$, denoted by $\hat \rulebody$, is defined as
the smallest set of atoms including $\rulebody$ such that:
\begin{enumerate*}[label={\it (\roman*)}]
\item if term $t$ occurs in $\rulebody$, then atom $t \before t$ belongs to
  $\hat \rulebody$,
\item if both terms $\tokstart(a)$ and $\tokend(a)$ occur in $\rulebody$ for
  some token name $a$, then atom $\tokstart(a) \before* \tokend(a)$ belongs to
  $\hat \rulebody$,
\item if atom $t_1 \before* t_2$ belongs to $\hat
  \rulebody$, then atom $t_1 \before t_2$ belongs to
  $\hat \rulebody$ as well,
\item if atoms $t_1 \before t_2$ and $t_2
  \before t_3$ belong to $\hat \rulebody$, then atom $t_1\before t_3$
  belongs to $\hat \rulebody$ as well,
\item if atoms $t_1 \before* t_2$ and $t_2
  \before t_3$ belong to $\hat \rulebody$, then atom $t_1 \before* t_3$ 
  belongs to $\hat \rulebody$ as well,
\item if atoms $t_1 \before t_2$  and $t_2
  \before* t_3$ belong to $\hat \rulebody$, then atom $t_1 \before* t_3$ belongs to $\hat \rulebody$ as well.
\end{enumerate*}%
\footnote{%
  Without loss of generality, we assume that $\hat\rulebody$ is consistent, \ie
  it admits at least a solution. This check can be done in polynomial time,
  since it is an instance of linear programming.
}
We write $t_1 \equiv t_2 \in \hat \rulebody$ as an abbreviation to mean
that both atoms $t_1 \before t_2$ and $t_2 \before t_1$ belong to $\hat
\rulebody$, and $t_1 \equiv t_2 \not\in \hat \rulebody$ for its negation.
Clearly, the set (of terms occurring in) $\hat \rulebody$ and the relation
$\before$ define a preorder.

Intuitively, the closure $\hat{\rulebody}$ captures all the temporal ordering relationships that are logically implied by the explicit constraints in $\rulebody$. It includes not only the directly stated ordering constraints, but also their transitive consequences and the implicit relationships that follow from the nature of tokens (such as the fact that every token has a start time before its end time). As an example, if $\rulebody$ contains atoms $\tokstart(a) \before \tokstart(b)$ and $\tokstart(b) \before \tokend(c)$, then $\hat{\rulebody}$ will also contain the atom $\tokstart(a) \before \tokend(c)$, which follows by transitivity, even though this relationship was not explicitly stated. Similarly, if both $\tokstart(a)$ and $\tokend(a)$ appear in $\rulebody$, then $\hat{\rulebody}$ automatically includes $\tokstart(a) \before* \tokend(a)$, reflecting the fundamental property that tokens have positive duration. 
On the other hand, the absence of a relationship between token endpoints
indicate that either the opposite relationship holds 
or the relationship between such endpoints is not implied by the existing constraints.
For instance, if $\tokend(a) \before \tokstart(b)$ does not belong to the
closure, then it can be that $\tokstart(b) \before* \tokend(a)$ is in the
closure (and thus $b$ must begin before the end of $a$) or, simply, that the end
of $a$ is not constrained to occur before the beginning of $b$.
This closure operation ensures that we have a complete picture of all temporal
relationships that must hold in any plan satisfying the rule, which is essential
for determining whether a rule can be checked eagerly or not.
Notice that trivial atoms over non-trigger tokens are allowed in the closure of
a clause (while they are disallowed in the clause itself).

Let us  now define the fundamental notion of \emph{eager rule}.

\begin{definition}[Ambiguous token, eager rule, eager planning problem]
  \label{def:eager:rule}
  Let $\Rule$ be a synchronization rule, $\rulebody$ one of the clauses of
  \Rule, and $a$ a token name occurring in \rulebody.
  Then,
  \begin{enumerate}[label={\it (\Alph*)},ref={\it \Alph*}]
  \item \label{item:left-ambiguous}
    $a$ is \emph{left-ambiguous} if all of the following are verified:
    \begin{enumerate}[label={\it (\ref{item:left-ambiguous}\arabic*)},ref={\it
        \ref{item:left-ambiguous}\arabic*}]
    \item \label{item:left-ambiguous-i}
      if $a_0$ is the trigger token of \Rule, then $\tokstart(a) \equiv
      \tokstart(a_0) \not\in \hat\rulebody$ and $\tokstart(a) \equiv
      \tokend(a_0) \not\in \hat\rulebody$, and
    \item \label{item:left-ambiguous-ii}
      there is a term $t \not\in \{ \tokstart(a),
      \tokend(a) \}$
%
%
      such that $\tokstart(a) \before t \in \hat \rulebody$ and $\tokend(a)
      \before t \not\in \hat \rulebody$;
    \end{enumerate}
  \item \label{item:right-ambiguous}
    $a$ is \emph{right-ambiguous} if
%
%
    there is a term $t \not\in \{ \tokstart(a),
    \tokend(a) \}$ such that $t \before \tokend(a) \in \hat \rulebody$ and $t
    \before \tokstart(a) \not\in \hat \rulebody$;

  \item \label{item:ambiguous} $a$ is \emph{ambiguous} if it is not the trigger
    token of \Rule, and it is both left- and right-ambiguous.
  \end{enumerate}

  \noindent Rule \Rule is \emph{unambiguous} if none of the token names
  occurring in it is ambiguous.

  \noindent Rule \Rule is \emph{eager} if it is unambiguous and
  disjunction-free.

  \noindent
  Finally, a qualitative timeline-based planning problem $P = (\SV, S)$ is
  \emph{eager} if $S$ only contains eager rules.
\end{definition}
Clearly, deciding whether a rule is disjunction-free is linear in its size.
Moreover, deciding whether a disjunction-free rule is also eager amounts to
computing the closure of its unique clause (which requires time at most
polynomial in the size of the rule), and then verifying that none of its the
token names is ambiguous (which ultimately reduces to a series of membership
tests, and thus requires time linear in the size of the closure).
Therefore, deciding whether a rule is eager can be done in time at most
polynomial in the size of the rule.

From now on, we focus on eager qualitative timeline-based planning
problems. For the sake of brevity, we sometimes refer to them simply as
\emph{planning problems}.
Intuitively, eager rules remove the \emph{nondeterminism} (a sort of ambiguity)
that comes from two factors: disjunctions, which require to \emph{guess} a
disjunct, and some patterns of conjunctions of atoms, which require to
\emph{guess} the right occurrences of events relevant for satisfaction.
We claim that restricting to eager rules (thus, removing these two sources of
nondeterminism) suffices to obtain a singly exponential DFA, whose construction
will be illustrated in the next sections. We give here a short intuitive account
of the rationale behind the conditions of \autoref{def:eager:rule}.

\medskip

Consider the following synchronization rule:
\begin{equation*}
  a_0[x_0=v_0] \implies \exists a_1[x_1=v_1]. \ (\tokstart(a_0) = \tokstart(a_1)
                \wedge \tokend(a_0) \before \tokend(a_1)).
\end{equation*}
According to \autoref{def:eager:rule}, it is an eager rule because $a_0$ is not
ambiguous (trivially, as $a_0$ is the trigger token), and neither is $a_1$,
since $\tokstart(a_1) \equiv \tokstart(a_0) \in \hat\rulebody$, which implies
that $a_1$ is not left-ambiguous (see \autoref{item:left-ambiguous-i} in
\autoref{def:eager:rule}).
In particular, having $\tokstart(a_0) = \tokstart(a_1)$ is crucial for any DFA
$\autom$ recognizing solution plans, because, when $\autom$ reads the event
$\tokstart(a_0)$, it can \emph{eagerly} and \emph{deterministically} go to a
state representing the fact that both $\tokstart(a_0)$ and $\tokstart(a_1)$ have
happened.  Moreover, if later it reads the event $\tokend(a_1)$, but it has not
read $\tokend(a_0)$ yet, then it transitions to a rejecting state, that is, a
state from which it cannot accept any plan; if, instead, it reads the event
$\tokend(a_1)$ only after reading $\tokend(a_0)$ (or at the same time), then it
transitions to an accepting state.

\medskip

As a more subtle example, consider the following rule, obtained from the previous one by replacing $=$ by $\before$ in the clause:
\begin{equation}\label{eq:example-eager}
  a_0[x_0=v_0] \implies \exists a_1[x_1=v_1]. \ (\tokstart(a_0) \before
                \tokstart(a_1) \wedge \tokend(a_0) \before \tokend(a_1)).
\end{equation}
This rule is eager as well, because, once again, trigger token $a_0$ is
trivially not ambiguous and token name $a_1$ is not left-ambiguous
(\autoref{item:left-ambiguous-ii} in \autoref{def:eager:rule} is falsified).
It is not immediate, though, to see that this rule can always be handled
\emph{eagerly} by a DFA.
Indeed, consider the plan (partially) depicted in \autoref{fig:eager-example}.
The picture shows a token of value $v_0$ for timeline $x_0$ that starts at time
1 and ends a time 4.
Such a token matches the trigger token $a_0$, thus triggering the rule.
\autoref{fig:eager-example} also depicts two tokens matching value ($v_1$) and
timeline ($x_1$) associated with token name $a_1$: the first one starting at
time 2 and the second one starting at time 5.
The plan satisfies the rule from Equation~\eqref{eq:example-eager} above, thanks
to the token starting at time 5.
Now, consider a DFA that tries to \emph{eagerly} verify the rule.
Initially, the DFA idles until the beginning of a token that matches the trigger
token $a_0$.
When, at time 1, the token for $x_0$ starts, the DFA transitions into a state
where $\tokstart(a_0)$ has happened, and the automaton waits for the beginning
of a token for $x_1$ of value $v_1$ or the end of the token for $x_0$ (the two
events can also happen at the same time).
At time 2, a token starts that matches token name $a_1$; even if this token is
not useful to certify the fulfillment of the rule (since it ends before the
ending of the token for $x_0$), a DFA that behaves eagerly will match this event
with $\tokstart(a_1)$, and will thus transition into a state where it waits for
the token for $x_0$ to end before (or at the same time of) the token for $x_1$.
Since the token for $x_1$ ends at time 3, strictly before the token for $x_0$,
it seems that the eager choice of the DFA will lead to a rejection.
Fortunately, since $\tokstart(a_1)$ is not constraint to happen before some
other event (there is no atom $\tokstart(a_1) \before t$ in $\hat \rulebody$,
with $t \in \{ \tokstart(a_0), \tokend(a_0) \}$ -- see
\autoref{item:left-ambiguous-ii} in \autoref{def:eager:rule}), the DFA can be
instructed to
ignore the ending, at time 3, of the token for $x_1$, thus, as a matter of fact,
\emph{adjusting} the first match of $\tokstart(a_1)$: it will be (tacitly)
re-matched with some future token start.
At time 4, the token for $x_0$ ends, and the DFA transitions into a state where
it waits for the end of a token for $x_1$ of value $v_1$, which will happen at
time 6.
Tacitly, term $\tokstart(a_1)$ is necessarily re-matched with the beginning of
the token ending at time 6.

\begin{figure}[t]
  \centering

  \begin{tikzpicture}[x=1cm,y=1cm]

    \foreach \t in {0,1,2,3,4,5,6}
    {
      \draw[gray!35] (\t,1.7) -- (\t,-1.15);
      \node[gray, font=\footnotesize] at (\t,1.9) {\t};
    }

    \node[left] at (-0.5,1) {$x_0$};

    \draw[thick] (1,1) -- (4,1);
    \draw[thick] (1,0.9) -- (1,1.1);
    \draw[thick] (4,0.9) -- (4,1.1);
    \node[above] at (2.5,1) {$v_0$};

    \node[left] at (-0.5,0) {$x_1$};

    \draw[thick] (2,0) -- (3,0);
    \draw[thick] (2,-0.1) -- (2,0.1);
    \draw[thick] (3,-0.1) -- (3,0.1);
    \node[above] at (2.5,0) {$v_1$};

    \draw[thick] (3,0) -- (5,0);
    \draw[thick] (3,-0.1) -- (3,0.1);
    \draw[thick] (5,-0.1) -- (5,0.1);
    \node[above] at (4,0) {$v_2$};

    \draw[thick] (5,0) -- (6,0);
    \draw[thick] (5,-0.1) -- (5,0.1);
    \draw[thick] (6,-0.1) -- (6,0.1);
    \node[above] at (5.5,0) {$v_1$};

    \node[left] at (-0.5,-1) {$x_2$};

    \draw[thick] (2,-1) -- (3,-1);
    \draw[thick] (2,-.9) -- (2,-1.1);
    \draw[thick] (3,-.9) -- (3,-1.1);
    \node[above] at (2.5,-1) {$v_1$};

    \draw[thick] (3,-1) -- (5,-1);
    \draw[thick] (3,-0.9) -- (3,-1.1);
    \draw[thick] (5,-0.9) -- (5,-1.1);
    \node[above] at (4,-1) {$v_1$};

  \end{tikzpicture}

  \caption{A (partial) plan satisfying the rule from
    Equation~\eqref{eq:example-eager}, which is eager, and the one from
    Equation~\eqref{eq:example-2-eager}, which is not.
    The token starting at time 1 triggers both rules.
    In both cases, the satisfaction of the rule is witnessed by the second token
    of value $v_1$ in the relevant timeline (not the first one).
    However, a DFA that behaves eagerly can be instructed to identify the
    witnessing token for the eager rule, but not for the non-eager one.
    Note that the token for $x_1$ starting at time 3 has value $v_2 \neq v_1$;
    thus, it does not match token name $a_1$, and cannot be used to fulfill the
    rule from Equation~\eqref{eq:example-eager}.
  }
  \label{fig:eager-example}
\end{figure}


\medskip

Let us provide now an example of a non-eager rule, thus not amenable to being
checked in an eager/greedy fashion:
\begin{equation}\label{eq:example-2-eager}
  a_0[x_0=v_0] \implies \exists a_3[x_2=v_1]. \ (\tokstart(a_0) \before
                \tokstart(a_3) \wedge \tokstart(a_3) \before
                \tokend(a_0) \wedge \tokend(a_0) \before \tokend(a_3)).
\end{equation}
This rule is \emph{not} eager, because token name $a_3$ is ambiguous (as it is
not the trigger token, and it is
both left-and right-ambiguous), and thus cannot be handled by a DFA that acts
eagerly like the one described above.
As a matter of fact, such a DFA would not accept the plan (partially) depicted
in~\autoref{fig:eager-example}, even though it satisfies the rule.
In particular, due to $a_3$ being left-ambiguous (there is $t \in \{
\tokstart(a_0), \tokend(a_0) \}$ such that $\tokstart(a_3) \before t \in \hat
\rulebody$ and $\tokend(a_3) \before t \not\in \hat \rulebody$ -- see
\autoref{item:left-ambiguous-ii} in \autoref{def:eager:rule}), it is not
possible
to
instruct a DFA, as before, to somehow match $\tokstart(a_3)$ with the beginning
of the token for $x_2$ starting at time 2 and $\tokend(a_3)$ with the end of a
different token for $x_2$, the one ending at time 5.
In other words, the eager choice of the DFA to match $\tokstart(a_3)$ with the
beginning of the token for $x_2$ starting at time 2 is final and cannot be
adjusted, and the DFA is not able to deterministically establish that the token
witnessing the rule is the second one for $x_2$ (rather than the first one): it
is something that must be \emph{guessed nondeterministically}.

In what follows, we give a reduction from the plan-existence problem for the
eager fragment of qualitative timeline-based planning to the nonemptiness
problem of DFAs of \emph{singly exponential} size with respect to the original
problem.
The approach is inspired by those in~\cite{DellaMonicaGTM20,DellaMonicaGMS18}
for non-eager timeline-based planning, where an NFA of exponential size is built
for any given timeline-based planning problem.
It is important to note that there is a bijective correspondence between the set
of solutions of a planning problem and the language accepted by the automaton
built from that problem.
However, the reductions presented in~\cite{DellaMonicaGTM20,DellaMonicaGMS18}
produce nondeterministic automata, which cannot be used as arenas to solve
timeline-based games without a preliminary determinization step that would cause a
second exponential blowup.

In the following, we first show how to encode timelines and plans as finite words, and vice versa
(\autoref{sec:wordsToPlans}).
Using such an encoding, we show that it is not possible to characterize the
solution plans for non-eager qualitative timeline-based planning problems using DFA of exponential size
(\autoref{sec:expAutomataLowerBound}).
Then, given an eager qualitative timeline-based planning problem $P = (\SV, S)$,
we show how to build a DFA whose language encodes the set of solution plans for
$P$.  The DFA consists of the intersection of two DFAs: one checks that the
input word correctly encodes a (candidate) plan over \SV that fulfills the
constraint on the alternation of token values expressed by functions $T_x$, for
$x \in \SV$ (\autoref{sec:dfaForPlans}); the other one verifies that the
encoded plan is indeed a solution plan for $P$, i.e., synchronization rules in
$S$ are fulfilled (\autoref{sec:dfaForSolutions}).




\section{From plans to finite words and vice versa}
\label{sec:wordsToPlans}

In this section, as a first step towards the construction of the DFA
corresponding to an eager qualitative timeline-based planning problem, we show
how to encode timelines and plans as \emph{words} that can be recognized by an
automaton, and \viceversa.

Let $P = (\SV, S)$ be an eager qualitative timeline-based planning
problem, and let $V = \cup_{x\in \SV}V_x$.
We define the \emph{initial alphabet} $\Sigma_\SV^I$ as $(\{ - \} \times
V)^\SV$, that is the set of functions from $\SV$ to $(\{ - \} \times V)$.%
\footnote{%
  The symbol $\{ - \}$ is a technicality that allows us to consider pairs
  instead of single values in $V$, to be uniform with symbols of the non-initial
  alphabet.
}
Similarly, we define the \emph{non-initial alphabet} $\Sigma_\SV^N$ as $((V
\times V) \cup \{ \circlearrowleft \} )^\SV$, where the pairs $(v,v') \in
V \times V$ are supposed to encode the value $v$ of the token that has just
ended and the value $v'$ of the token that has just started, and
$\circlearrowleft$ represents the fact that the value for the state
variable has not changed.
The \emph{input alphabet} (or, simply, \emph{alphabet}) associated with
$\SV$, denoted by $\Sigma_\SV$, is the union $\Sigma_\SV^I \cup
\Sigma_\SV^N$.
%
%
%
%
Observe that the size of the alphabet $\Sigma_\SV$ is at most exponential
in the size of $\SV$, precisely $\abs{\Sigma_\SV} = \abs{\Sigma_\SV^I}
+ \abs{\Sigma_\SV^N} = \abs{V}^{\abs{\SV}} + (\abs{V}^2 + 1)^{\abs{\SV}}$.
%



We now show how to encode the basic structure%
\footnote{%
  With ``basic structure'' we refer to the fact that, in this section, we
  neither take into account the transition functions $T_x$ of state
  variables nor their domains $V_x$ (\cf~\autoref{def:statevar}), which will be
  dealt with in \autoref{sec:dfaForPlans}.
  Recall that functions $D_x$ are irrelevant as we only consider qualitative
  planning problems (\cf~\autoref{def:qual:tbp}).
}
underlying each plan over \SV as a word in $\Sigma_\SV^I \cdot
(\Sigma_\SV^{N})^* \cup \{ \varepsilon \}$, where $\varepsilon$ is the
empty word (corresponding to the empty plan), $(\Sigma_\SV^{N})^*$ is the
Kleene's closure of $\Sigma_\SV^{N}$, and $\cdot$ denotes the concatenation
operation.
Intuitively, let $\sigma$ be the symbol at position $i$ of a word $\lambda \in
\Sigma_\SV^I \cdot (\Sigma_\SV^{N})^* \cup \{ \varepsilon \}$.
Then, if $\sigma(x) = (v, v')$ for some $x \in \SV$, then at time $i$ a new
token begins in the timeline for $x$ with value $v'$; instead, if $\sigma(x) =
\circlearrowleft$, then no change happens at time $i$ in the timeline for $x$,
meaning that no token ends at that time point in the timeline for $x$.
The value $v$ of the token ending at time $i$ will come in handy later in the
construction of the automata.

We remark that not all words in $\Sigma_\SV^I \cdot (\Sigma_\SV^{N})^* \cup
\{ \varepsilon \}$ correspond to plans over \SV: for a word to correctly
encode a plan, the information carried by the word about the value of
a starting token and the one associated to the end of the same token must
coincide.
Formally, given a word $\lambda=\seq{\sigma_0,\ldots,\sigma_{|\lambda|-1}} \in
\Sigma_\SV^I \cdot (\Sigma_\SV^{N})^* \cup \{ \varepsilon \}$ and a state
variable $x \in \SV$, let $\mathit{changes}(x) =
(i^x_0,i^x_1,\ldots,i^x_{k^x-1})$, for some $k^x \in \N$, be the increasing
sequence of positions where $x$ changes, \ie $i \in \mathit{changes}(x)$ if and
only if $\sigma_i(x) \neq \circlearrowleft$, for all $i \in \{ 0, \ldots,
|\lambda|-1 \}$.  We denote by $\cev{\sigma}^x_{i}$ and
$\vec{\sigma}^x_{i}$ the first and the second component of $\sigma_i(x)$,
respectively, for all $x \in \SV$ and $i \in \mathit{changes}(x)$.
We omit superscripts $^x$ when there is no risk of ambiguity.
\begin{definition}[Words weakly-encoding plans]
  \label{def:weak:encode}
  Let $\lambda \in \Sigma_\SV^I \cdot (\Sigma_\SV^{N})^* \cup \{ \varepsilon \}$
  and let $\mathit{changes}(x) = (i_0,i_1,\ldots,i_{k-1})$.  We say that
  \emph{$\lambda$ weakly-encodes a plan over \SV} if $\vec{\sigma}^x_{i_{h-1}}
  = \cev{\sigma}^x_{i_{h}}$ for all $x \in \SV$ and $h \in \{ 1, \ldots, k-1 \}$.
  If this is the case, then the \emph{plan induced by $\lambda$} is the set $\{
  \timeline_x \mid x \in \SV \}$, where
  $\timeline_x=\seq{(x,\vec{\sigma}^x_{i_0}, i_1-i_0),
    (x,\vec{\sigma}^x_{i_1},i_2-i_1), \ldots, (x,\vec{\sigma}^x_{i_{k-1}},
    i_k-i_{k-1})}$ and $i_k = |\lambda|$, for all $x \in \SV$.
\end{definition}
\noindent
Intuitively, if a word weakly-encodes a plan, then it captures the
dynamics of a state variable, but it ignores its domain and transition
function, which will be taken care of in the next sections.
%
%
%
A converse correspondence from plans to words can be
defined accordingly.

Before concluding the section, we introduce another couple of notions that will
come in handy later.
We denote by \eventsOnesigma the set of events (beginning/ending of a token)
occurring at a given time, encoded in the alphabet symbol $\sigma$.
Formally, \eventsOnesigma is the smallest set such that:
\begin{itemize}
\item if $\sigma(x) = (v,v') $ for some $x$, then
    $\{ \mathit{end}(x,v),\mathit{start}(x,v') \} \subseteq
    \eventsOnesigma$, and
\item if $\sigma(x) = (-,v')$ for some $x$, then $\mathit{start}(x,v') \in
  \eventsOnesigma$.
\end{itemize}
Finally, we say that $\sigma$ \emph{triggers} a rule \Rule if
$\mathit{start}(x_0,v_0) \in \eventsOnesigma$ and $a_0[x_0=v_0]$ is the trigger
of \Rule.


\section{Instances for which no exponential automata exist}
\label{sec:expAutomataLowerBound}


We let $\firstnatn = \{ 0, 1, \ldots, n \}$ and $\firstnatNOzeron = \{
1, \ldots, n \}$, for all $n \in \mathbb N$.
For a planning problem $P$, we denote by $|P|$ its size, that is, the length of
its representation\footnote{
  The length of the representation of a qualitative planning problem $(\SV, S)$
  is the sum of the length of the representation of \SV, which is at most
  polynomial in the number of variables in it and the size of the domains $V_x$,
  for $x \in \SV$, and the one of the representation of $S$, which is at most
  polynomial in the number of synchronization rules and the size of the longest
  rule in it.
}, and by $\mathcal L(P)$ the language of words
encoding solution plans for $P$.

In this section, we show that a characterization of the solution plans for
(non-eager) qualitative timeline-based planning problems using deterministic
finite automata of exponential size does not exist.
More precisely, we define a schema $\{P_n\}_{n \in \natpos}$ of (non-eager)
qualitative timeline-based planning problems and we show that, for $n$ large
enough, the smallest automata accepting $\mathcal L(P_n)$ has size more than
exponential in the size of $P_n$ (\autoref{thm:exp-lower-bound} below).

For all $n \in \natpos$, we let $P_n = (\SV_n, S_n)$, where

\begin{itemize}
\item $\SV_n = \{ x_i \}_{i \in \firstnatn}$, with
  \begin{itemize}
  \item $x_i = (V_{x_i}, T_{x_i}, D_{x_i})$ for all $i \in \firstnatn$,
  \item $V_{x_0} = \{ v_0, \bar{v}_0 \}$,
  \item $V_{x_i} = \{ v_i, v'_i, \bar{v}_i \}$ for all $i \in \firstnatNOzeron$,
  \item $T_{x_i}(v) = V_{x_i}$ for all $i \in \firstnatn$ and $v \in V_{x_i}$,
  \item $D_{x_i}(v) = (1,+\infty)$ for all $i \in \firstnatn$ and $v \in
    V_{x_i}$.
%
  \end{itemize}

\item $S_n$ is a singleton containing the following synchronization rule

  \smallskip

\noindent{\centering

$
\begin{array}{@{\hspace{-6.7pt}}r@{\hspace{2mm}}p{6mm}@{\hspace{0mm}}l}
  a_0[x_0=v_0] & \ensuremath{\implies}
  & \exists a_1[x_1=v_1]a'_1[x_1=v'_1].  \tokstart(a_0) \before
  \tokstart(a_1) \wedge \tokstart(a_1) \before \tokend(a_0) \wedge \tokend(a_0)
  \before \tokstart(a'_1) \\
         & \multicolumn{1}{r}{\lor} & \exists
           a_2[x_2=v_2]a'_2[x_2=v'_2]. \tokstart(a_0) \before
           \tokstart(a_2) \wedge \tokstart(a_2) \before \tokend(a_0) \wedge
           \tokend(a_0) \before \tokstart(a'_2) \\
         & \multicolumn{1}{r}{\lor} & \multicolumn{1}{l}{\ldots} \\
         & \multicolumn{1}{r}{\lor} & \exists
           a_n[x_n=v_n]a'_n[x_n=v'_n]. \tokstart(a_0) \before
           \tokstart(a_n) \wedge \tokstart(a_n) \before \tokend(a_0) \wedge
           \tokend(a_0) \before \tokstart(a'_n) \\
\end{array}
$

}

\end{itemize}

In what follows, we identify, for any given $n \in \natpos$, a set of plans
(equivalently, words encoding plans) over $\SV_n$ that contains more than $2^n$
plans that are pairwise distinguished by some extension, in the sense clarified
later.
The thesis (\cf~\autoref{thm:exp-lower-bound}) then follows from Myhill-Nerode
theorem~\cite{hopcroft2006introduction}.
Note also that the above synchronization rule is not eager, as it is not
disjunction-free (it is unambiguous, though -- \cf~\autoref{def:eager:rule}).
Therefore, while restricting to disjunction-free and unambiguous (thus, eager)
rules is a sufficient condition towards the construction of the desired
deterministic automata, it is unclear whether both restrictions are necessary or
if restricting only to disjunction-free rule -- while still allowing ambiguous
ones -- is already a sufficient condition.

Now, consider plans over $\SV_n$, for $n \in \natpos$, such that:
\begin{itemize}
\item the timeline associated with $x_0$ is a sequence of $v_0$-valued tokens of
  duration two, followed by a $\bar{v}_0$-valued final token of duration one;
\item each other timeline, associated with variable $x_j$ for some $j \in
  \firstnatNOzeron$,
  \begin{itemize}
  \item begins with a $\bar{v}_j$-valued token whose duration is one and
  \item continues with a sequence of tokens of duration two and whose value is
    either $v_j$ or $\bar{v}_j$.
  \end{itemize}
\end{itemize}
Formally, for every $n \in \natpos$, let $W_n$ be the set of words of odd length
encoding plans over $\SV_n$ (that is, every word $\lambda \in W_n$ has the form
$\sigma_0 \sigma_1 \ldots \sigma_h$ for some even positive natural number $h$)
such that:

\begin{itemize}
\item the timeline associated with state variable $x_0$ evolves as follows:
  \begin{itemize}
  \item $\sigma_0(x_0) = (-,v_0)$ (a $v_0$-valued token starts at the
    beginning of the timeline),

  \item $\sigma_{i}(x_0) = \circlearrowleft$, for all odd $i \in \firstnath$ (at
    odd time instants no token starts/ends),

  \item $\sigma_{i}(x_0) = (v_0,v_0)$, for all even $i \in \{ 2, \ldots, h-2 \}$
    (at even time instants, except for $h$, a new $v_0$-valued token starts),

  \item $\sigma_{h}(x_0) = (v_0,\bar{v}_0)$ (the last token has value
    $\bar{v}_0$), and

  \end{itemize}

\item other timelines, associated with state variables $x_j$, for $j \in
  \firstnatNOzeron$, evolve as follows:

  \begin{itemize}
  \item $\sigma_0(x_j) = (-,\bar{v}_j)$ (a $\bar{v}_j$-valued token starts at
    the beginning of the timeline),

  \item $\sigma_{i}(x_j) = \circlearrowleft$, for all even $i \in
    \firstnatNOzeroh$ (at even time instants, except for zero, no token
    starts/ends),

  \item $\sigma_{i}(x_j) = (v_{\mathit{old}},v_{\mathit{new}})$ for some
    $v_{\mathit{new}} \in \{ v_{j}, \bar{v}_{j} \}$ and some $v_{\mathit{old}}
    \in V_{x_j}$, and for all odd $i \in \firstnath$ (at odd time instants a new
    token starts whose value can be either $v_j$ or $\bar{v}_j$).

  \end{itemize}

\end{itemize}

\begin{figure}
  \centering
\begin{tikzpicture}
    \def\n{3}  
    \def\ysep{.6}  
    \def\xstart{0}  
    \def\recth{0.4} 
    \def\rectl{2} 
    \def\unit{.75}  
    \def\last{5}  

    \foreach \i in {0,1,2,3,4,5,6} {
        \node[above] at (\xstart + \i*\unit, 0.5) {$\sigma_{\i}$};
        \draw[thick,lightgray,dotted] (\xstart + \i*\unit, 0.4) -- ++(0, -3.8);
    }
    \node[above] at (\xstart + 8*\unit, 0.5) {$\dots$};
    \node[above] at (\xstart + 11*\unit, 0.5) {$\sigma_h$};
    \draw[thick,lightgray,dotted] (\xstart + 11*\unit, 0.4) -- ++(0, -3.8);

    \draw (\xstart + 13*\unit, .9) -- ++(0, -4.3);

    \node[above] at (\xstart + 14*\unit, 0.5) {$\sigma$};
    \draw[thick,lightgray,dotted] (\xstart + 14*\unit, 0.4) -- ++(0, -3.8);

    \node[left] at (\xstart, 0) {$x_0$};
    \foreach \i in {0,1,2} {
        \draw[thick] (\xstart + \rectl*\i*\unit, -\recth/2) rectangle ++(\rectl*\unit, \recth) node[midway] {$v_0$};
    }
    \node at (\xstart + 8*\unit, -\recth/2) {$\dots$};
    \draw[thick] (\xstart + 9*\unit, -\recth/2) rectangle ++(\rectl*\unit, \recth) node[midway] {$v_0$};
    \draw[thick] (\xstart + 11*\unit, -\recth/2) rectangle ++(\unit, \recth) node[midway] {$\overline v_0$};

    \draw[thick] (\xstart + 14*\unit, -\recth/2) rectangle ++(\rectl*\unit, \recth) node[midway] {$\overline v_0$};

    \foreach \j in {1,...,\n} {
        \node[left] at (\xstart, -\j*\ysep) {$x_\j$};
        \draw[thick] (\xstart, -\j*\ysep-\recth/2) rectangle ++(\unit, \recth) node[midway] {$\overline{v}_\j$};
        \foreach \i in {0,1,2} {
            \draw[thick] (\xstart + \unit + \rectl*\i*\unit, -\j*\ysep-\recth/2) rectangle ++(\rectl*\unit, \recth) node[midway] {$v_\j / \overline{v}_\j$};

        }
        \node at (\xstart + 8*\unit, -\j*\ysep-\recth/2) {$\dots$};
        \draw[thick] (\xstart + 10*\unit, -\j*\ysep-\recth/2) rectangle ++(\rectl*\unit, \recth) node[midway] {$v_\j / \overline{v}_\j$};

        \draw[thick] (\xstart + 14*\unit, -\j*\ysep-\recth/2) rectangle ++(\rectl*\unit, \recth) node[midway] {$v'_\j / \overline{v}_\j$};

    }

    \node[left] at (\xstart, -\last*\ysep+\ysep) {$\dots$};
    \node[above] at (\xstart + 8*\unit, -\last*\ysep+\ysep-0.25) {$\dots$};

    \node[left] at (\xstart, -\last*\ysep) {$x_{n}$};
    \draw[thick] (\xstart, -\last*\ysep-\recth/2) rectangle ++(\unit, \recth) node[midway] {$\overline{v}_{n}$};
    \foreach \i in {0,1,2} {
      \draw[thick] (\xstart + \unit + \rectl*\i*\unit, -\last*\ysep-\recth/2) rectangle ++(\rectl*\unit, \recth) node[midway] {$v_{n} / \overline{v}_{n}$};
    }
    \node at (\xstart + 8*\unit, -\last*\ysep-\recth/2) {$\dots$};
    \draw[thick] (\xstart + 10*\unit, -\last*\ysep-\recth/2) rectangle ++(\rectl*\unit, \recth) node[midway] {$v_{n} / \overline{v}_{n}$};

    \draw[thick] (\xstart + 14*\unit, -\last*\ysep-\recth/2) rectangle ++(\rectl*\unit, \recth) node[midway] {$v'_{n} / \overline{v}_{n}$};


\end{tikzpicture}
\caption{Graphical representation of sets $W_n$, with $n \in \mathbb N$.
  $W_n$ includes words $\lambda=\sigma_0\sigma_1 \ldots \sigma_h$, for all even
  positive natural numbers $h$, encoding plans over $\SV_n$ and such that:
  the timeline associated with variable $x_0$ features $h/2$ $v_0$-valued
  tokens and a final $\overline v_0$-valued token;
  timelines associated with variables other than $x_0$ feature an initial
  $\overline v_j$-valued token, followed by $h/2$ tokens labeled with either
  $v_j$ or $\overline v_j$, for $j \in \firstnatNOzeron$.
}
\label{fig:planCounterexample}
\end{figure}

A graphical account of words in $W_n$, for $n \in \mathbb N$, is given in
\autoref{fig:planCounterexample}, where $h$ is any even positive natural number.
It is easy to see that none of the words in $W_n$ represents a solution plan for
$P_n$.
However, each of them can be completed into a solution plan by adding only one
more token in each timeline.
Formally, let $\Lambda$ be the set of all functions $\sigma : \SV \rightarrow
\bigcup_{i \in \firstnatn} V_{x_i}$ such that $\sigma(x_0) = \bar v_0$ and
$\sigma(x_i) \in \{ v'_i, \bar v_i \}$, for all $i \in \firstnatNOzeron$ (see
\autoref{fig:planCounterexample}, right-hand side).
We have that
\begin{equation}
  \parbox{.9\textwidth}{\emph{
      \\[-2mm]
      \noindent
      the concatenation of a plan $\sigma_0 \sigma_1 \ldots \sigma_h$
      in $W_n$ with a function $\sigma \in \Lambda$ is a solution plan for $P_n$
      if and only if for all odd natural numbers $i \in \firstnath$ there is $j
      \in \firstnatNOzeron$ such that $\sigma_i(x_j) = v_j$ and $\sigma(x_j) =
      v'_j$.
  }}
\label{text:completition}
\end{equation}
Two plans $\sigma_0 \sigma_1 \ldots \sigma_h$ and $\sigma'_0 \sigma'_1 \ldots
\sigma'_{h'}$ in $W_n$ are \emph{distinguished} by $\Lambda$ if and only if
there is $\sigma \in \Lambda$ such that exactly one between $\sigma_0 \sigma_1
\ldots \sigma_h \sigma$ and $\sigma'_0 \sigma'_1 \ldots \sigma'_{h'} \sigma$ is
a solution plan for $P_n$.
In what follows, we show that, for $n$ large enough, $W_n$ contains more than
$2^n$ plans that are pairwise distinguished by $\Lambda$.
The thesis then follows from Myhill-Nerode
theorem~\cite{hopcroft2006introduction}.

To this end, notice that timeline associated with $x_0$ is the same for all
plans in $W_n$ of a given length, and thus a plan in $W_n$ is univocally
identified by its length and the sequences of $v_j/\bar{v}_j$ tokens in all the
other timelines.
Therefore, there is a bijection $\biject$ between plans $\sigma_0 \sigma_1
\ldots \sigma_h$ in $W_n$ of length $h+1$ and sequences $\mu_1 \mu_2 \ldots
\mu_{\frac{h}{2}}$, of length $\frac{h}{2}$, of sets of natural numbers, defined
as follows: $\biject(\sigma_0 \sigma_1 \ldots \sigma_h) = \mu_1 \mu_2 \ldots
\mu_{\frac{h}{2}}$, where $\mu_i \subseteq \firstnatNOzeron$ for all $i \in
\firstnatNOzerohdivtwo$ and $j \in \mu_i$ if and only if in the timeline
associated with $x_j$ the $i$th token after the initial $\bar{v}_j$-valued token
has value $v_j$.
(Observe that it is essential for the existence of bijection $\biject$ to have
state variables with disjoint domains, i.e., $V_{x_i} \cap V_{x_j} = \emptyset$
for all $i \neq j$.)
Analogously, all functions in $\Lambda$ have the same value at $x_0$; therefore,
there is another bijection, which we call $\biject$ abusing the notation,
between $\Lambda$ and the powerset of \firstnatNOzeron (namely, $2^{\{ 1, 2,
  \ldots, n \}}$), which maps every $\sigma \in \Lambda$ to set $\biject(\sigma)
= \mu \subseteq \firstnatNOzeron$, with $j \in \mu$ if and only if $\sigma(x_j)
= v'_j$ (and thus $j \not\in \mu$ if and only if $\sigma(x_j) = \bar v_j$).
Now, given a plan $\sigma_0 \sigma_1 \ldots \sigma_h$ in $W_n$ and $\sigma \in
\Lambda$ such that $\biject(\sigma_0 \sigma_1 \ldots \sigma_h) = \mu_1 \mu_2
\ldots \mu_{\frac{h}{2}}$ and $\biject(\sigma)=\mu$, we can rephrase
statement~\eqref{text:completition} above as:
\begin{equation}
  \parbox{.9\textwidth}{\emph{
      \\[-2mm]
      \noindent
      the concatenation of $\mu_1 \mu_2 \ldots \mu_{\frac{h}{2}}$ with $\mu$
      represents a solution plan for $P_n$ if and only if $\mu_i \cap \mu \neq
      \emptyset$ for all $i \in \firstnatNOzerohdivtwo$.
  }}
\label{text:completitionbis}
\end{equation}
Clearly, the order of the elements in sequence $\mu_1 \mu_2 \ldots
\mu_{\frac{h}{2}}$, as well as the presence of repetitions therein, is
irrelevant.
To make this formal, for a plan $\vec{\sigma} = \sigma_0 \sigma_1 \ldots
\sigma_h$ in $W_n$, let $\mathit{set}(\vec{\sigma})$ be the support of the
sequence of sets $\biject(\vec{\sigma})$, i.e., $\mathit{set}(\vec{\sigma}) = \{
\mu \mid \mu \in \biject(\vec{\sigma}) \}$ is the set of elements occurring in
$\biject(\vec{\sigma})$, unsorted and devoid of repetitions.
Then, if two plans $\vec{\sigma},\vec{\sigma'} \in W_n$ are such that
$\mathit{set}(\vec{\sigma}) = \mathit{set}(\vec{\sigma'})$, then they are not
distinguished by $\Lambda$.
The converse implication does not hold in general, but it does if we restrict
ourselves to plans $\vec{\sigma}$ where at every odd time instant there are
exactly $\floor{\frac{n}{2}}$ different timelines where a $v_j$-valued token
starts (and, consequently, exactly $\ceil{\frac{n}{2}}$ different timelines
where a $\bar v_j$-valued token starts instead -- see
\autoref{fig:planCounterexample}).
Formally, we have the following result.
\begin{proposition}
  Let $\vec{\sigma},\vec{\sigma'} \in W_n$ be such that
  \begin{itemize}
  \item $|\mu| = \floor{\frac{n}{2}}$ for all $\mu \in \biject(\vec{\sigma})
    \cup \biject(\vec{\sigma'})$, and
  \item $\mathit{set}(\vec{\sigma}) \neq \mathit{set}(\vec{\sigma'})$.
  \end{itemize}
  Then, there is $\sigma \in \Lambda$ that distinguishes them.
\end{proposition}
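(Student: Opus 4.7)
The plan is to construct the distinguishing completion $\sigma \in \Lambda$ explicitly, by letting it correspond (under the bijection $\tau$ restricted to $\Lambda$) to the complement of some element witnessing the set inequality.

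More precisely, I would first assume without loss of generality that $\mathit{set}(\vec{\sigma}) \setminus \mathit{set}(\vec{\sigma'})$ is nonempty (swapping the roles of $\vec{\sigma}$ and $\vec{\sigma'}$ otherwise) and pick some $\mu^* \in \mathit{set}(\vec{\sigma}) \setminus \mathit{set}(\vec{\sigma'})$. I would then define $\mu := \firstnatNOzeron \setminus \mu^*$ and let $\sigma \in \Lambda$ be the unique function with $\tau(\sigma) = \mu$. A brief remark checks that $\mu \subseteq \firstnatNOzeron$, hence lies in the codomain of the bijection $\tau : \Lambda \to 2^{\firstnatNOzeron}$, so $\sigma$ is well defined. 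Note that by the hypothesis $|\mu^*| = \floor{n/2}$, and therefore $|\mu| = \ceil{n/2}$, which in particular is positive whenever the second hypothesis can hold nontrivially (i.e., $n \geq 2$), so the construction makes sense.

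The two remaining verifications use statement~\eqref{text:completitionbis}. First, since $\mu^* \in \mathit{set}(\vec{\sigma})$ and $\mu \cap \mu^* = \emptyset$ by construction, the concatenation of $\vec{\sigma}$ with $\sigma$ fails the intersection condition at the index where $\mu^*$ appears, and hence is \emph{not} a solution plan for $P_n$. Second, I would check that for every $\mu' \in \mathit{set}(\vec{\sigma'})$ the intersection $\mu' \cap \mu$ is nonempty: since $\mu' \neq \mu^*$ (because $\mu^* \notin \mathit{set}(\vec{\sigma'})$ while $\mu' \in \mathit{set}(\vec{\sigma'})$) and $|\mu'| = |\mu^*| = \floor{n/2}$ by the first hypothesis, strict inclusion $\mu' \subsetneq \mu^*$ is ruled out by cardinality, so $\mu' \not\subseteq \mu^*$, which gives $\mu' \cap (\firstnatNOzeron \setminus \mu^*) = \mu' \cap \mu \neq \emptyset$. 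Hence the concatenation of $\vec{\sigma'}$ with $\sigma$ \emph{is} a solution plan for $P_n$, and $\sigma$ distinguishes $\vec{\sigma}$ and $\vec{\sigma'}$.

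The only mildly delicate step is the cardinality argument in the last paragraph, which is where the hypothesis $|\mu| = \floor{n/2}$ for all $\mu$ in the supports is actually used: without equal cardinalities one could have $\mu' \subsetneq \mu^*$ with $\mu' \neq \mu^*$, in which case the complement of $\mu^*$ would not intersect $\mu'$, and the construction would fail. Apart from this observation the proof is a direct unwinding of the reformulation~\eqref{text:completitionbis} together with the choice of $\sigma$ as the complement of a witness of the set difference.
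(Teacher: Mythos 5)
Your proof is correct, but it constructs the distinguishing completion differently from the paper. You take a single witness $\mu^* \in \mathit{set}(\vec{\sigma}) \setminus \mathit{set}(\vec{\sigma'})$ and let $\sigma$ correspond to its \emph{complement} $\firstnatNOzeron \setminus \mu^*$; the equal-cardinality hypothesis then rules out $\mu' \subsetneq \mu^*$, so every $\mu' \in \mathit{set}(\vec{\sigma'})$ meets the complement, while $\mu^*$ itself obviously does not. The paper instead builds the completion as a \emph{transversal}: for each $\mu' \in \tau(\vec{\sigma'})$ it picks an element $e_{\mu'} \in \mu' \setminus \mu^*$ (which exists, again by equal cardinalities plus $\mu' \neq \mu^*$) and sets $\mu'' = \{ e_{\mu'} \mid \mu' \in \tau(\vec{\sigma'}) \}$; this $\mu''$ is disjoint from $\mu^*$ and meets every $\mu'$ by construction. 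Both arguments invoke the reformulation~\eqref{text:completitionbis} in exactly the same way, and both use the cardinality hypothesis at the single place where it is needed. Your version is arguably slightly cleaner in that the completion is canonical (no choice of representatives), at the cost of a set-complement cardinality argument; the paper's version makes the non-emptiness of each intersection $\mu'' \cap \mu'$ immediate by construction. Your side remark that $|\mu| = \ceil{\frac{n}{2}}$ must be positive is unnecessary --- the empty set is a perfectly valid element of $2^{\firstnatNOzeron}$ and corresponds to some $\sigma \in \Lambda$ --- but it does no harm.
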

\begin{proof}
  Due to $\mathit{set}(\vec{\sigma}) \neq \mathit{set}(\vec{\sigma'})$, we have
  that:
  \begin{itemize}
  \item there is $\mu \in \biject(\vec{\sigma})$ such that $\mu \neq \mu'$ for
    all $\mu' \in \biject(\vec{\sigma'})$, or
  \item there is $\mu' \in \biject(\vec{\sigma'})$ such that $\mu \neq \mu'$ for
    all $\mu \in \biject(\vec{\sigma})$.
  \end{itemize}
  Assume, without loss of generality, that the former holds (the other case is
  dealt with analogously).
  Since $\mu$ has the same cardinality as every set $\mu' \in
  \biject(\vec{\sigma'})$, there must be an element $e_{\mu'} \in \mu' \setminus
  \mu$ for each $\mu' \in \biject(\vec{\sigma'})$.
  Let us collect these elements in a set $\mu'' = \{ e_{\mu'} \mid \mu' \in
  \biject(\vec{\sigma'}) \}$.
  Clearly, $\mu'' \cap \mu = \emptyset$; thus, according
  to~\eqref{text:completitionbis}, the concatenation of $\biject(\vec{\sigma})$
  with $\mu''$ does not represent a solution plan for $P_n$.
  In addition, for all $\mu' \in
  \biject(\vec{\sigma'})$ it holds that $\mu'' \cap \mu' \neq \emptyset$; thus,
  according to~\eqref{text:completitionbis}, the concatenation of
  $\biject(\vec{\sigma'})$ with $\mu''$ represents a solution plan for $P_n$.
%
%
  The thesis follows, since $\mu'' = \biject(\sigma)$, for some $\sigma \in
  \Lambda$.
\end{proof}

Using the above proposition, showing that, for $n$ large enough, there are more
than $2^n$ plans in $W_n$ that are pairwise distinguished by $\Lambda$ amounts
to showing that for any given set of cardinality $n$ there are more than $2^n$
distinct sets of subsets of cardinality $\floor{\frac{n}{2}}$, as formally
stated in the next proposition (note that
$\mathit{set}(\vec{\sigma})$ is a set of subsets of \firstnatNOzeron, for all
$\vec{\sigma} \in W_n$).

\begin{proposition}
  For every $n \in \mathbb N$, with $n \geq 4$, the cardinality of the powerset
  of the set $\{ S \subseteq \firstnatNOzeron \mid |S| = \floor{\frac{n}{2}} \}$
  is greater than $2^n$.
\end{proposition}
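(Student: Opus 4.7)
The plan is to reduce the statement to a simple inequality on binomial coefficients and then verify it via elementary properties of such coefficients. First, I would observe that the set $A_n = \{ S \subseteq \firstnatNOzeron \mid |S| = \floor{\frac{n}{2}} \}$ has cardinality exactly $\binom{n}{\floor{n/2}}$, so its powerset has cardinality $2^{\binom{n}{\floor{n/2}}}$. Hence the inequality to be established, namely $2^{\binom{n}{\floor{n/2}}} > 2^n$, is equivalent to $\binom{n}{\floor{n/2}} > n$, which becomes the actual target inequality to prove for all $n \geq 4$.

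For this, I would exploit the well-known fact that $\binom{n}{\floor{n/2}}$ is the \emph{central} binomial coefficient, which is the maximum among all $\binom{n}{k}$ for $k \in \firstnatn$ (by the unimodality of the row of Pascal's triangle). In particular, when $n \geq 4$ we have $\floor{n/2} \geq 2$, and so $\binom{n}{\floor{n/2}} \geq \binom{n}{2} = \frac{n(n-1)}{2}$. A one-line algebraic manipulation then shows that $\frac{n(n-1)}{2} > n$ iff $n - 1 > 2$, i.e., $n > 3$, which holds by hypothesis. Chaining these two inequalities yields $\binom{n}{\floor{n/2}} > n$, and hence the desired strict inequality between powers of two.

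No substantial obstacle is expected here: the argument is entirely elementary, relying only on the maximality of the central binomial coefficient and a trivial algebraic check. The threshold $n \geq 4$ in the statement is exactly what is needed to ensure $\floor{n/2} \geq 2$, so that the comparison with $\binom{n}{2}$ is available and the resulting inequality $\frac{n(n-1)}{2} > n$ holds strictly; for smaller $n$ one could in principle verify the claim by direct inspection, but this is not required by the proposition.
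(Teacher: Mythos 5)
Your proof is correct and follows essentially the same route as the paper: compute the cardinality of the set as $\binom{n}{\floor{n/2}}$, pass to the powerset, and reduce to $\binom{n}{\floor{n/2}} > n$ for $n \geq 4$. The only difference is that you explicitly justify this last inequality (via maximality of the central binomial coefficient and the comparison with $\binom{n}{2} = \frac{n(n-1)}{2}$), whereas the paper simply asserts it; your added detail is accurate and harmless.
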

\begin{proof}
  The cardinality of $\{ S \subseteq \firstnatNOzeron \mid |S| =
  \floor{\frac{n}{2}} \}$ is $\binom{n}{\floor{\frac{n}{2}}}$.
  Thus, the cardinality of its powerset is $2^{\binom{n}{\floor{\frac{n}{2}}}}$,
  which is strictly greater than $2^n$ for all $n \geq 4$.
\end{proof}

\begin{theorem} \label{thm:exp-lower-bound}
  There exists a set $\mathcal P$ of (non-eager) qualitative timeline-based
  planning problems of unbounded size such that for all problems $P \in \mathcal
  P$ it holds that the smallest automaton recognizing $\mathcal L(P)$ has size
  greater than $2^{p(|P|)}$, for any polynomial $p$.
\end{theorem}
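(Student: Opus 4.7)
The plan is to take $\mathcal P = \{ P_n \mid n \geq n_0 \}$ for a sufficiently large $n_0$ and to invoke the Myhill-Nerode theorem: for each $n$, I will produce a family of words in $W_n$ that are pairwise distinguishable with respect to $\mathcal L(P_n)$ and whose cardinality exceeds $2^{|P_n|}$, thereby forcing the minimal DFA for $\mathcal L(P_n)$ to have more states than $2^{|P_n|}$.

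First, I would observe that, by the bijection $\tau$ introduced before the first proposition, every plan $\vec{\sigma} \in W_n$ corresponds to a sequence of subsets of $\firstnatNOzeron$, and that the two propositions together give a lower bound on the number of Myhill-Nerode equivalence classes. More precisely, for any subfamily $\mathcal F \subseteq \{ S \subseteq \firstnatNOzeron \mid |S| = \floor{n/2} \}$, I can pick a plan $\vec{\sigma}_{\mathcal F} \in W_n$ such that $\mathit{set}(\vec{\sigma}_{\mathcal F}) = \mathcal F$ (for instance, by concatenating, in any order, the alphabet symbols corresponding to the subsets in $\mathcal F$, padded with the appropriate $\circlearrowleft$ entries for $x_0$ and the correct alternation on the other timelines). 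By the first proposition, plans indexed by distinct families $\mathcal F$ are pairwise distinguished by $\Lambda$, hence belong to different Myhill-Nerode classes of $\mathcal L(P_n)$. The number of such families is $2^{\binom{n}{\floor{n/2}}}$, and the second proposition guarantees that for $n \geq 4$ this quantity already exceeds $2^n$, though for the theorem I need a stronger bound.

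Next, I would estimate $|P_n|$. The problem $P_n$ has $n+1$ state variables, each of constant-size domain and trivially-defined transition function and duration function, plus a single synchronization rule containing $n$ existential statements, each of constant size. Thus $|P_n|$ is bounded by some polynomial $p(n)$ in $n$. On the other hand, by Stirling's approximation $\binom{n}{\floor{n/2}} = \Theta\!\bigl(2^n/\sqrt{n}\bigr)$, so $2^{\binom{n}{\floor{n/2}}}$ grows doubly exponentially in $n$, whereas $2^{|P_n|} \leq 2^{p(n)}$ grows only singly exponentially in $n$. Consequently, there exists $n_0$ such that for every $n \geq n_0$ one has $2^{\binom{n}{\floor{n/2}}} > 2^{|P_n|}$.

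Putting things together, for $n \geq n_0$ the language $\mathcal L(P_n)$ admits more than $2^{|P_n|}$ pairwise-distinguishable prefixes, and hence any DFA recognizing $\mathcal L(P_n)$ has more than $2^{|P_n|}$ states. Defining $\mathcal P = \{P_n \mid n \geq n_0\}$ yields a set of non-eager qualitative timeline-based planning problems of unbounded size with the required property. The main technical point to be careful with is the explicit construction of a plan in $W_n$ realizing any prescribed family $\mathcal F$ of $\floor{n/2}$-subsets as $\mathit{set}(\vec{\sigma}_{\mathcal F})$; once this is in place, the rest is a routine Myhill-Nerode argument combined with the asymptotic comparison between $\binom{n}{\floor{n/2}}$ and the polynomial bound on $|P_n|$.
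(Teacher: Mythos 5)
Your proposal is correct and follows essentially the same route as the paper: the family of plans in $W_n$ indexed by sets of $\lfloor n/2\rfloor$-subsets of $\{1,\dots,n\}$, pairwise distinguished by $\Lambda$ via the first proposition, combined with the Myhill--Nerode theorem. You are in fact slightly more careful than the paper's own exposition, which only compares $2^{\binom{n}{\lfloor n/2\rfloor}}$ against $2^n$ rather than against $2^{|P_n|}$; your observation that the former grows doubly exponentially in $n$ while $|P_n|$ is only polynomial in $n$ is exactly what is needed to justify the bound $2^{|P_n|}$ in the theorem statement (at the cost of restricting to $n\geq n_0$).
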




\section{DFA accepting plans}
\label{sec:dfaForPlans}

Given an eager qualitative timeline-based planning problem $P = (\SV, S)$, we
show how to build a DFA $\mathcal T_{\SV}$, of size at most exponential in the
size of \SV (and thus the one of $P$), accepting words that
\emph{encode plans over \SV}, i.e., words that weakly-encode plans
(\cf~\autoref{def:weak:encode}) and, additionally, are compatible with the
domain functions $V_x$ and comply with the constraints on the alternation of
token values expressed by functions $T_x$, for $x \in \SV$.
In the next section, we show how to obtain a DFA, once again of size at most
exponential in the size of $P$, that accepts exactly the \emph{solution
  plans} for $P$.

For every planning problem $P = (\SV, S)$, the DFA $\mathcal T_{\SV}$ is the
tuple $\langle Q_{\SV}, \Sigma_\SV, \delta_{\SV}, q^0_{\SV}, \allowbreak F_{\SV}
\rangle$, whose components are defined as follows.
\begin{itemize}

\item $Q_{\SV}$ is the finite set of \emph{states}.
  Intuitively, a state of $\mathcal{T}_{\SV}$ keeps track of the token values of
  the timelines at the current and the previous step of the run.
  Therefore, a state is a function mapping each state variable $x$ into a pair
  $(v,v')$, where $v'$ (resp., $v$) denotes the token value of timeline $x$ at
  the current (resp., previous) step.
  To formally define $Q_{\SV}$, we exploit the alphabet
  $\Sigma_\SV$ (\cf~\autoref{sec:wordsToPlans}): states are the
  alphabet symbols except for those functions $\sigma \in
  \Sigma_\SV$ that assign $\circlearrowleft$ to at least one state variable $x
  \in \SV$.
  For technical reasons, we also need a fresh \emph{initial state} $q^0_{\SV}$
  and a fresh \emph{rejecting sink state} \sinkT.
  Formally, $Q_{\SV} = \left(\Sigma_\SV \setminus \bar{Q}_{\SV} \right) \cup \{
  q^0_{\SV}, \sinkT \}$, where $\bar{Q}_{\SV} = \{ \sigma \in \Sigma_\SV \mid
  \sigma(x) = \circlearrowleft \text{ for some } x \in \SV \}$.
  Clearly, the size of $Q_{\SV}$ is at most as the size of $\Sigma_\SV$, which
  is in turn at most exponential in the size of $P$.

%
%

\item $\Sigma_\SV$ is the \emph{input alphabet}, defined as in
  \autoref{sec:wordsToPlans}.
\item $\delta_{\SV} : Q_{\SV} \times \Sigma_\SV \rightarrow Q_{\SV}$ is
  the \emph{transition function}.
  Towards a definition of $\delta_{\SV}$, we say that an alphabet symbol $\sigma
  \in \Sigma_\SV$ is \emph{compatible} with a state $\sigma_1 \in Q_{\SV}$ (we
  use for states the same symbols as for the alphabet, i.e., $\sigma, \sigma_1,
  \sigma_2, \ldots$, to stress the fact that states are closely related to
  alphabet symbols) if one of the following holds:
  \begin{enumerate*}[label={\it (\roman*)}]
  \item $\sigma_1 = q^0_{\SV}$ is the initial state and $\sigma \in
    \Sigma_\SV^I$ is an initial symbol such that for each $x \in \SV$ it holds
    that $\sigma(x) = (-,v)$ with $v \in V_x$;
  \item $\sigma_1 \in \Sigma_\SV \setminus \bar{Q}_{\SV}$ and
    $\sigma \in \Sigma_\SV^N$ is a non-initial alphabet symbol such that for
    each $x \in \SV$ either $\sigma(x) = \circlearrowleft$ or
    $\sigma_1(x) = (v,v')$ and
    $\sigma(x) = (v',
    v'')$ with $v'' \in T_x(v') \cap V_x$.
  \end{enumerate*}
%
%
%
%
%
  Now, $\delta_{\SV} : Q_{\SV} \times \Sigma_\SV \to Q_{\SV}$ is defined as
  follows.
  For all $\sigma_1 \in Q_{\SV}$ and $\sigma \in \Sigma_\SV$, if $\sigma$ is not
  compatible with $\sigma_1$ or $\sigma_1$ is the sink state (i.e., $\sigma_1 =
  \sinkT$), then $\delta(\sigma_1, \sigma) = \sinkT$; otherwise
  \begin{itemize}
  \item if $\sigma_1$ is the initial state (i.e., $\sigma_1 = q^0_{\SV}$), then
    $\delta(\sigma_1, \sigma) = \sigma$; in other words, in this case the
    automaton transitions to the state represented by the letter that is being
    read;
  \item if $\sigma_1 \in \Sigma_\SV \setminus \bar{Q}_{\SV}$, then
    $\delta(\sigma_1, \sigma) = \sigma_2$, where $\sigma_2(x) = \sigma_1(x)$ if
    $\sigma(x) = \circlearrowleft$, and $\sigma_2(x) = \sigma(x)$ otherwise, for
    all $x \in \SV$; intuitively, the automaton transitions into a state that
    updates the information about tokens whose value is changed, according to
    the letter that is being read.
  \end{itemize}
  We point out that, in both cases, the automaton transitions to the next state in a deterministic fashion.
\item $F_{\SV} = Q_{\SV} \setminus \{ \sinkT \}$ is the set of \emph{final
  states}.
\end{itemize}

Correctness of the DFA $\mathcal T_{\SV}$ is proved by the next
lemma.

\begin{lemma}
  \label{lem:automatonPlans}
  Let $P = (\SV, S)$ be an eager qualitative timeline-based planning problem.
  Then, words accepted by $\mathcal T_{\SV}$ are exactly those encoding plans
  over \SV.
  Moreover the size of $\mathcal T_{\SV}$ is at most exponential in the size of
  $P$.
\end{lemma}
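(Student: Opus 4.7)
The proof has two parts: the size bound and the correctness of the accepted language. The size bound follows immediately from the construction: $|Q_{\SV}| \le |\Sigma_\SV| + 2$, and with $V = \bigcup_{x \in \SV} V_x$ we have $|\Sigma_\SV| = |V|^{|\SV|} + (|V|^2 + 1)^{|\SV|}$, which is at most singly exponential in $|\SV|$, hence in $|P|$. The transition table, of size polynomial in $|Q_{\SV}| \cdot |\Sigma_\SV|$, is still singly exponential.

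For correctness, I plan to establish both inclusions between $\mathcal L(\mathcal T_{\SV})$ and the set of words encoding plans over $\SV$ by induction on the length of the input, relying on the following invariant: after reading a non-rejected prefix of length at least one, the automaton is in some state $\sigma^{\mathrm{cur}} \in \Sigma_\SV \setminus \bar{Q}_{\SV}$ whose second component, for each $x \in \SV$, records the most recently started (or initial) token value for the timeline of $x$, while its first component records the value held immediately before that last change. This invariant is preserved by $\delta_{\SV}$: the clause $\sigma_2(x) = \sigma_1(x)$ when $\sigma(x) = \circlearrowleft$ propagates the last-seen pair, while the clause $\sigma_2(x) = \sigma(x)$ overwrites it whenever a change occurs at $x$.

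For the soundness direction, if the word is accepted, then no transition lands in $\sinkT$. Compatibility with the initial state $q^0_{\SV}$ forces the first letter to lie in $\Sigma_\SV^I$ and assign each $x$ a value in $V_x$; compatibility at subsequent steps forces every non-$\circlearrowleft$ entry $\sigma(x) = (v', v'')$ to have $v'$ equal to the value currently tracked for $x$ in the state and $v'' \in T_x(v')$. Combined with the invariant, this yields exactly the weak-encoding condition $\vec{\sigma}^x_{i_{h-1}} = \cev{\sigma}^x_{i_h}$ of \autoref{def:weak:encode}, together with compliance with each $V_x$ and $T_x$. For the completeness direction, given a word encoding a plan, the same invariant ensures that each letter is compatible with the state reached after the previous step, so $\delta_{\SV}$ never returns $\sinkT$ and the run ends in a state of $F_{\SV}$.

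The main bookkeeping subtlety concerns timelines that evolve asynchronously: different variables may change values at different positions, and the state must nonetheless track, independently for each $x \in \SV$, the correct value against which the first component of the next non-$\circlearrowleft$ letter for $x$ will be compared. Once the invariant above is verified from the two-case definition of $\delta_{\SV}$, which is immediate, both directions of the equivalence follow by a routine induction on word length, so I do not anticipate any deeper obstacle; the real technical work lies in the construction itself rather than in this correctness argument.
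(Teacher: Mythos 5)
Your proposal is correct and follows essentially the same route as the paper's proof: both reduce correctness to the observation that $\mathcal T_{\SV}$ enters the sink state exactly when a letter violates the weak-encoding, domain, or transition conditions relative to the per-variable pair $(v,v')$ tracked in the state, plus the same counting argument for the size bound. The only difference is presentational—you run a direct induction with an explicit state invariant where the paper argues by contradiction on the minimum violating index—so no further comparison is needed.
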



\begin{proof}
  Let $W$ denote the set of words that weakly-encode plans over $\SV$ (in the
  sense of Definition~\ref{def:weak:encode}) and additionally satisfy the domain
  and transition constraints, that is, for each word
  $\lambda=\seq{\sigma_0,\ldots,\sigma_{|\lambda|-1}} \in W$, each $x \in \SV$,
  and each position $i \in \mathit{changes}(x)$, we have $\vec{\sigma}_i^x \in
  V_x$ and, if $i$ is not the first position in $\mathit{changes}(x)$, then
  $\vec{\sigma}_i^x \in T_x(\cev{\sigma}_i^x)$. We establish that
  $\mathcal{L}(\mathcal{T}_{\SV}) = W$, which precisely corresponds to the set
  of words encoding plans over $\SV$.
  We prove the equivalence by showing both containments
  $\mathcal{L}(\mathcal{T}_{\SV}) \subseteq W$ and $W \subseteq
  \mathcal{L}(\mathcal{T}_{\SV})$ using contradiction arguments based on the
  minimum index where violations occur.

  To show $\mathcal{L}(\mathcal{T}_{\SV}) \subseteq W$, let us suppose, towards
  a contradiction, that there exists a word $\lambda = \seq{\sigma_0, \sigma_1,
    \ldots, \sigma_{n-1}} \in \mathcal{L}(\mathcal{T}_{\SV})$ such that $\lambda
  \notin W$.
  Since $\mathcal{T}_{\SV}$ accepts $\lambda$, there is an accepting run
  $q^0_{\SV} \xrightarrow{\sigma_0} q_1 \xrightarrow{\sigma_1} \cdots
  \xrightarrow{\sigma_{n-1}} q_n$ with $q_n \in F_{\SV}$. Let $i$ be the minimum
  index such that the constraints defining $W$ are violated at position $i$. We
  consider the possible violations.
  If $i = 0$ and either $\sigma_0 \notin \Sigma_\SV^I$ or $\sigma_0(x) = (-,v)$
  with $v \notin V_x$ for some $x \in \SV$, then symbol $\sigma_0$ is not
  compatible with the initial state $q^0_{\SV}$, and thus the transition
  $\delta_{\SV}(q^0_{\SV}, \sigma_0)$ leads to the sink state $\sinkT$,
  contradicting the fact that $\lambda$ is accepted.
  If $i > 0$, then $q_i$ is not the initial state, and in order for symbol
  $\sigma_i$ to be compatible with $q_i$, it must be $\sigma_i \in \Sigma_\SV^N$
  ($\sigma_i$ is a non-initial alphabet symbol).
  If the violation occurs because the weak encoding condition fails (i.e.,
  $\vec{\sigma}^x_{i_{h-1}} \neq \cev{\sigma}^x_{i_h}$ for some consecutive
  changes), or because domain constraints are violated ($\vec{\sigma}_i^x \notin
  V_x$), or because transition constraints fail ($\vec{\sigma}_i^x \notin
  T_x(\cev{\sigma}_i^x)$), then again symbol $\sigma_i$ is not compatible with
  state $q_i$, and transition $\delta_{\SV}(q_i, \sigma_i)$ leads to the sink
  state $\sinkT$, again contradicting acceptance of $\lambda$.
  Since we reach a contradiction in all cases, we conclude
  $\mathcal{L}(\mathcal{T}_{\SV}) \subseteq W$.

  To show $W \subseteq \mathcal{L}(\mathcal{T}_{\SV})$, let us suppose, towards
  a contradiction, that there exists a word $\lambda = \seq{\sigma_0, \sigma_1,
    \ldots, \sigma_{n-1}} \in W$ such that $\lambda \notin
  \mathcal{L}(\mathcal{T}_{\SV})$.
  Consider the unique run of $\mathcal{T}_{\SV}$ on $\lambda$: $q^0_{\SV}
  \xrightarrow{\sigma_0} q_1 \xrightarrow{\sigma_1} \cdots
  \xrightarrow{\sigma_{n-1}} q_n$.
  Since $\lambda$ is not accepted, from the final states being $Q_{\SV}
  \setminus \{\sinkT\}$ (i.e., all the states but the sink one) we have that the
  run eventually hits the sink state $\sinkT$.
  Let $i$ be the minimum index such that $q_i = \sinkT$.
  We can
  assume that $i>0$ since the fresh initial state $q^0_{\SV}$ is distinct from
  the sink one.
  Then, we have $\delta_{\SV}(q_{i}, \sigma_i) = \sinkT$.
  This happens exactly when $\sigma_i$ is not compatible with state $q_{i}$.
  However, since $\lambda \in W$, all the constraints that define compatibility
  are satisfied: the weak encoding condition and the domain/transition
  constraints ensure compatibility.
  This contradicts the incompatibility detected by the automaton, and thus we
  can conclude $W \subseteq \mathcal{L}(\mathcal{T}_{\SV})$.

  Finally, for the bound on the size of the automaton, we have that the size of
  $\mathcal{T}_{\SV}$ is at most polynomial in $|Q_{\SV}|$ and $|\Sigma_\SV|$.
  From the construction, we have
  $|Q_{\SV}| \leq |\Sigma_{\SV}| = |\Sigma_{\SV}^I| + |\Sigma_{\SV}^N| =
  |V|^{|\SV|} + (|V|^2 + 1)^{|\SV|}$, where $V = \cup_{x\in \SV}V_x$.
  Since each domain $V_x$ has size polynomial in the size of $P$, the
  total size $|V| = |\bigcup_{x \in \SV} V_x|$ is polynomial in $|P|$.
  Then, $|\Sigma_\SV|$, and thus $|Q_{\SV}|$ as well, is at most exponential in
  $|P|$.
\end{proof}



\section{DFA accepting solution plans}
\label{sec:dfaForSolutions}

%

In this section, we go through the construction of an automaton recognizing
solution plans for a planning problem.
Let $P = (\SV, S)$ be an eager qualitative timeline-based planning problem,
and let $V = \cup_{x\in \SV}V_x$. We show how to build a DFA
$\automAP$, whose size is at most exponential in the size of $P$, that
accepts exactly those words encoding solutions plans for $P$ when
restricted to words encoding plans over \SV. In different terms, if a word
encodes a plan over \SV, then it is accepted by \automAP if and only if it
encodes a solution plan for $P$. However, \automAP may also accept words
that do not encode a plan over \SV. Therefore, we need the intersection of
such a DFA \automAP with DFA $\mathcal T_{\SV}$ from the previous section.

Since, as already noticed, the ordering relations imposed by conjunctions of
atoms in synchronization rules induce preorders, in the following we use labeled
directed acyclic graphs (labeled DAGs) to represent them.
Intuitively, each conjunction of atoms $\rulebody$ identifies a DAG whose
vertices are the equivalence classes of terms $\tokstart(a)/\tokend(a)$
occurring in it.
Formally, let $\Rule$ be a synchronization rule, and let $\E$ and $\rulebody$
be, respectively, the existential statement and the conjunction of atoms of
\Rule.
%
%
For each term $t$ occurring in \rulebody we denote by $[t]_{\equiv_{\rulebody}}$
its \emph{equivalence class with respect to $\rulebody$}, defined as the set $\{
u \mid t \equiv u \in \hat \rulebody \}$.
We omit the subscript $_\rulebody$ when it is clear from the context.
Moreover, if $t = \tokstart(a)$ (resp., $t = \tokend(a)$) and $a[x=v]$ either
occurs in (the quantifier prefix of) \E or is the trigger of \Rule, then
$\termToEventt = \mathit{start}(x,v)$ (resp., $\termToEventt =
\mathit{end}(x,v)$); with an abuse of notation, we lift function \termToEventFun
to the domain of sets of terms, i.e., $\termToEventT = \{ \termToEventt \mid t
\in T \}$, for all sets of terms $T$.
The \emph{(labeled) DAG associated with \Rule}, denoted $G_\Rule$, is defined as
the tuple $(V_\Rule,A_\Rule,A^<_\Rule,\ell_\Rule)$, where (we omit the subscript
$_\Rule$ when clear from the context):
\begin{itemize}
\item $V = \{ [t]_\equiv \mid t $ is a term occurring in \rulebody $ \}$,
\item $A = \{ ([t]_\equiv, [u]_\equiv) \in V \times V \mid [t]_\equiv \neq
  [u]_\equiv$ and $t \before u \in \hat \rulebody \}$,
\item $A^< = \{ ([t]_\equiv, [u]_\equiv) \in A \mid t \before* u \in \hat
  \rulebody \}$, and
\item $\ell$ is the \emph{labeling function} assigning to each vertex
  $[t]_\equiv \in V$ its \emph{label} $\ell([t]_\equiv) =
  \termToEvent{[t]_\equiv}$.
%
\end{itemize}
Notice that labeling function $\ell$ is crucial to establish a correspondence
between vertices of the DAG and sets of events \eventsOnesigma associated with
alphabet symbols $\sigma$.
It is convenient to further lift function $\termToEventFun$ to deal with sets of
vertices (i.e., sets of sets of terms): $\termToEvent{V'} = \bigcup_{[t]_\equiv
  \in V'}\termToEvent{[t]_\equiv}$ for all $V' \subseteq V$.
%
%
When convenient, we may use dashed and solid arrows to denote arcs in $A$ and
$A^<$, respectively, thus writing $[t]_\equiv \dashrightarrow [u]_\equiv$ and
$[t]_\equiv \rightarrow [u]_\equiv$ for $([t]_\equiv,[u]_\equiv) \in A$ and
$([t]_\equiv,[u]_\equiv) \in A_<$, respectively.
We likewise use $\not\dashrightarrow$ and $\not\rightarrow$ to denote the
absence of the corresponding arcs.
\autoref{fig:dag-begin-strict-nonstrict} shows such a difference.


\begin{figure}[t]
\centering
\begin{tikzpicture}




\node [draw,rectangle,rounded corners,align=center] at (0,0) (n1) {\scriptsize
  $n_1 {=}
  \left\{ \hspace{-2mm} \begin{array}{l}
    \tokstart(a_0), \\
    \tokstart(a_1)
  \end{array} \hspace{-2mm} \right\}
$};
\path (n1) ++(3,0) node[draw,rectangle,rounded corners] (n2) {\scriptsize
  $n_2 {=} \{ \tokend(a_0) \}$};
\path (n2) ++(3,0) node[draw,rectangle,rounded corners] (n3) {\scriptsize
  $n_3 {=} \{ \tokend(a_1) \}$};

\draw [->,>=stealth, distance = 0.05cm] (n1.east) -- (n2.west);

\draw [dashed,->,>=stealth] (n2.east) -- (n3.west);

\node[font=\scriptsize, above] at (n1.north) {(non-strict)};




\path [align=center]
(n1) ++ (0,-2) node[draw,rectangle,rounded corners] (n1prime) {\scriptsize
  $n'_1 {=}
  \left\{ \hspace{-2mm} \begin{array}{l}
    \tokstart(a_0), \\
    \tokstart(a_1)
  \end{array} \hspace{-2mm} \right\}
$};
\path (n1prime) ++(3,0) node[draw,rectangle,rounded corners] (n2prime) {
  \scriptsize
  $n'_2 {=} \tokend(a_0)$};
\path (n2prime) ++(3,0) node[draw,rectangle,rounded corners] (n3prime) {
  \scriptsize
  $n'_3 {=} \tokend(a_1)$};

\draw [->,>=stealth, distance = 0.05cm] (n1prime.east) -- (n2prime.west);

\draw [->,>=stealth, distance = 0.05cm] (n2prime.east) -- (n3prime.west);

\node[font=\scriptsize, above] at (n1prime.north) {(strict)};

\node[scale=.8] at (10,-1) {
  $\begin{array}{l}
     \ell(n_1) = \ell(n'_1) =
     \left\{ \hspace{-2mm} \begin{array}{l}
       \mathit{start}(x_0,v_0), \\
       \mathit{start}(x_1,v_1)
     \end{array} \hspace{-2mm} \right\} \\[5mm]
     \ell(n_2) = \ell(n'_2) = \{ \mathit{end}(x_0,v_0) \} \\[2.5mm]
     \ell(n_3) = \ell(n'_3) = \{ \mathit{end}(x_1,v_1) \}
  \end{array}$
};

\end{tikzpicture}

\caption{
  The picture shows two DAGs on the left-hand side.
  The one on the top is associated with (the existential statement of) the rule
  $a_0[x_0=v_0] \implies \exists a_1[x_1=v_1].
  (\tokstart(a_0) \before \tokstart(a_1) \wedge \tokstart(a_1) \before
  \tokstart(a_0) \wedge \tokend(a_0) \before \tokend(a_1))$.
%
%
%
  It forces token $a_0$ to either be a prefix of or coincide with token $a_1$.
  The one on the bottom is associated with the existential statement obtained
  replacing $\tokend(a_0) \before \tokend(a_1)$ with $\tokend(a_0) \before*
  \tokend(a_1)$, that forces $a_0$ to be a (strict) prefix of $a_1$.
  The labeling function is the same for both DAGs, and it is shown on the
  right-hand side.}
\label{fig:dag-begin-strict-nonstrict}
\end{figure}

\subsection{Viewpoints}
\label{sec:viewpoints}
%
%

In order to obtain the desired automaton construction, it is useful to introduce
an auxiliary structure,
%
%
called viewpoint.
%
%
A \emph{viewpoint} \viewpointV for \Rule is a pair $(G, K)$, where $G =
(V,A,A^<,\ell)$ is the (labeled) DAG associated with \Rule and $K \subseteq V$ is a
\emph{downward
  closed} subset of vertices of $G$, that is, $n \in K$ implies $n' \in K$ for
all $n,n' \in V$ with $n' \dashrightarrow n$.
%
The number of different viewpoints for \Rule is $2^{|V|}$, hence at most
exponential in the size of $P$.
%
%
If $K = \emptyset$, then $\viewpointV = (G,K)$ is \emph{initial}; analogously,
if $K$ is the entire set of vertices of $G$, then $\viewpointV$ is \emph{final}.
If \viewpointV is a viewpoint for \Rule, then we use \rulefunV to refer to
\Rule.
%
%
Intuitively, the downward closed component of a viewpoint for a synchronization
rule is meant to collect the events that are relevant to witness the
satisfaction of the rule, among those that are encoded in the letters of the
input word that have been read so far.
In other words, a viewpoint retains information about relevant tokens
encountered so far along the plan.
In what follows, we describe how viewpoints evolve.
%

According to the formalization provided below, states of automata \automAP are
sets of viewpoints containing at least one
viewpoint for each rule of $P$ (besides a fresh rejecting sink state \sinkAP).
Therefore, to define automata runs, we first show how viewpoints evolve upon
reading an alphabet symbol.
To this end, we introduce the following notions.
Let $\viewpointV = (G,K)$ be a viewpoint, with $G = (V,A,A^<,\ell)$.
We set $\nextV = K'$, where $K'$ is the largest downward closed subset of $V$
for which there is no pair of vertices $v,v' \in K'\setminus K$ with $v
\rightarrow v'$.
Notice that it is always the case that $\nextV \supseteq K$.
Moreover, given an alphabet symbol $\sigma \in \Sigma_\SV$, we define
$\consumedVsigma = K'$, where $K'$ is the largest downward closed subset of
vertices of $\nextV$ such that $\termToEvent{K' \setminus K} \subseteq
\eventsOnesigma$.
Once again, it holds that $\consumedVsigma \supseteq K$.
Intuitively, a viewpoint $\viewpointV = (G,K)$
evolves by suitably extending $K$; to this end, $\nextV$ identifies the only
vertices that can possibly be added to $K$ (independently from the alphabet
symbol read), that is, vertices of $V \setminus K$ reachable from $K$ with no
arcs in $A_{<}$ (solid arrows) among them.
The exact extension, however, depends on the actual symbol $\sigma$ read by the
automaton: $K$ cannot be extended with events that are not included in $\sigma$.
Therefore, \consumedVsigma identifies precisely how viewpoint \viewpointV
evolves upon reading $\sigma$.

There are events that \emph{must} be consumed by a viewpoint \viewpointV when
certain symbols (i.e., symbols that encode, among others, such events) are read
by the automaton.
We aim to collect such events in a set called the \emph{waiting list} of
\viewpointV, defined in the following.
The idea is that an automaton must reject if an event in the waiting list of
\viewpointV is read (through the symbol $\sigma$) and the viewpoint is not
\emph{ready} to consume it (i.e., the event is not featured in \consumedVsigma).
Let the \emph{past} of $\viewpointV$, denoted \pastV, be the set
$\bigcup_{[t]_{\equiv} \in K} [t]_{\equiv}$, and the \emph{future} of
$\viewpointV$, denoted \futureV, be the set $\bigcup_{[t]_{\equiv} \in V
  \setminus K} [t]_{\equiv}$.
Then, one would expect the waiting list of \viewpointV to include those events
$\tokend(a)$ (token ending) that are in the future of \viewpointV, but whose
corresponding  $\tokstart(a)$ (token beginning) is in the past of \viewpointV,
that is, the
beginning of a token has been read (and consumed), but its ending has not yet.
This intuition is quite correct, but there is an exception.
Let \rulebody be the unique clause occurring in the rule corresponding to
\viewpointV, namely \rulefunV, and let $\tokstart(a) \in \pastV$ and $\tokend(a)
\in \futureV$.
Then, $\tokend(a)$ is \emph{not} included in the waiting list of \viewpointV if
$a$ is not the trigger token of \rulefunV and there is no term $t \neq
\tokstart(a)$ such that $\tokstart(a) \before t \in \hat \rulebody$ and
$\tokend(a) \before t \not\in \hat \rulebody$.
The idea is that if there is no term $t$ that is required to occur after (or
together with) $\tokstart(a)$ and possibly before than $\tokend(a)$, then it is
safe for the automaton to be indulgent, and allow the viewpoint to ignore the
event $\tokend(a)$ that matches with $\tokstart(a)$ and wait instead for some
future $\tokend(a)$ event (that refers to the end of a different token than the
one $\tokstart(a)$ refers to), thus having non-matching $\tokstart(a)$ and
$\tokend(a)$.
Intuitively, the reason why it is safe for the automaton to be indulgent is
that, under these conditions, the positions of the non-matching event
$\tokstart(a)$ and the event $\tokstart(a)$ that matches the future $\tokend(a)$
coincide relative to the other events.
This is formally captured by defining the \emph{waiting list} of $\viewpointV$,
denoted \waitingV, as the set

\bigskip

\noindent
{

  \centering

  $\left\{\begin{array}{ll}
   \tokend(a) \mid
   & \tokstart(a) \in \pastV, \tokend(a) \in \futureV \text{,
     and at least one of the following holds:} \\[-3mm]
   & \parbox[t]{.8\linewidth}{
     \begin{itemize}
     \item $a$ is the trigger token of \rulefunV,
     \item there is a term $t \not\in \{ \tokstart(a), \tokend(a) \}$ such that
       $\tokstart(a) \before t \in \hat \rulebody$ and $\tokend(a) \before t
       \not\in \hat \rulebody$
     \end{itemize} }
  \end{array} \right\}$



}

\bigskip

%
%
\noindent
where \rulebody is the unique clause occurring in \rulefunV.
We finally say that $\viewpointV$ is \emph{compatible} with symbol $\sigma$ if
$\termToEvent{\waitingV} \cap \eventsOnesigma \subseteq
\termToEvent{\consumedVsigma \setminus K}$.
Observe that this last notion of compatibility ensures that a viewpoint is
allowed to evolve upon reading a symbol only if no token ending event in the
waiting list is overlooked (in other words, events read by the automaton that
belong to the waiting list \emph{must} be consumed).

We can now define the \emph{evolution of a viewpoint} $\viewpointV = (G,K)$ upon
reading an alphabet symbol $\sigma \in \Sigma_\SV$, denoted \evolviewVsigma, as
the viewpoint $(G, \consumedVsigma)$, if $\viewpointV$ is compatible with
$\sigma$; \evolviewVsigma is undefined otherwise.
%

\subsection{States, initial state, and final states of \texorpdfstring{\automAP}{AP}}

We have already mentioned that \emph{states} of \automAP are sets of viewpoints
containing at least one viewpoint for each rule $\Rule \in S$ (recall that $S$
is the set of rules in planning problem $P$), besides a fresh rejecting sink
state \sinkAP.
However, since it is crucial for us to bound the size of \automAP to be at most
exponential in the one of $P$, we impose the \emph{linearity condition},
formalized in what follows.
To this end, it is useful to assume, without loss of generality, that different
rules of $P$ involves disjoint sets of token names.
%
%
For each rule $\Rule \in S$, let \viewpointsetR be the set of all viewpoints
for \Rule, and let $\viewpointsetP = \bigcup_{\Rule \in S} \viewpointsetR$.
We define an ordering relation $\preceq$ between viewpoints: for all
$\viewpointV, \viewpointVprime \in \viewpointsetP$, with $\viewpointV = (G,K)$
and $\viewpointVprime = (G', K')$, it holds that $\viewpointV \preceq
\viewpointVprime$ if and only if
\begin{enumerate*}[label={\it (\roman*)}]
\item $G = G'$, that is, $\viewpointV, \viewpointVprime \in \viewpointsetR$ for
  some $\Rule \in S$, and
\item $K \subseteq K'$.
\end{enumerate*}
Intuitively, $\viewpointV \preceq \viewpointVprime$ captures the fact that
\viewpointVprime has gone further than \viewpointV in matching input symbols
with the beginning/end of a token, in order to satisfy a rule.
%
%
%
%


At this point, we can formalize the linearity condition, crucial to constrain
the size of \automAP (\autoref{lem:automatonSolutions}).

\begin{definition}[Linearity condition]
  \label{def:linearity}
  A set of viewpoints $\viewpointset$ satisfies the \emph{linearity condition}
  if for all viewpoints $\viewpointV, \viewpointVprime \in \viewpointset$ and
  rules $\Rule \in S$, if $\viewpointV, \viewpointVprime \in \viewpointsetR$,
  then $\viewpointV \preceq \viewpointVprime$ or $\viewpointVprime \preceq
  \viewpointV$ holds.
\end{definition}
\noindent Intuitively, we impose all viewpoints for the same rule in a state of \automAP to be linearly ordered.

We are now ready to formally characterize the set of \emph{states} of \automAP,
consisting of the sets $\viewpointset \subseteq \viewpointsetP$ of viewpoints
that contain at least one viewpoint for each rule $\Rule \in S$ and that satisfy
the linearity condition, and including, in addition, a fresh \emph{rejecting
  sink state} \sinkAP.
We denote it by $Q_P$; formally, $Q_P = \{ \sinkAP \} \cup \{ \viewpointset
\subseteq \viewpointsetP \mid \viewpointset \cap \viewpointsetR \neq \emptyset
\text{ for all } \Rule \in S \text{ and } \viewpointset \text{ satisfies the
  linearity condition} \}$.
The \emph{initial state} $q^0_P$ of \automAP is the set $\{
\viewpointV^0_{\Rule} \mid \Rule \in S \}$, where $\viewpointV^0_\Rule$ is the
initial viewpoint of rule \Rule.
Towards a definition of the set $F_P$ of \emph{final states} of \automAP, we
introduce the notion of \emph{enabled viewpoints}. A viewpoint $\viewpointV =
(G,K)$ for rule $\Rule \in S$ is \emph{enabled} if either \Rule is triggerless
or \Rule has trigger token $a_0$ and $[\tokstart(a_0)]_{\equiv} \in K$.
A state $q$ of \automAP is \emph{final} if it is not the rejecting sink state
\sinkAP and every enabled viewpoint therein is final.

\subsection{Transition function of \texorpdfstring{\automAP}{AP}}

The last step of our construction is the definition of the \emph{transition
  function} $\delta_P$ for automaton \automAP.
To this end, we first introduce the notion of alphabet symbol \emph{enabling} a
viewpoint \viewpointV, along with the one of state of \automAP \emph{compatible}
with an alphabet symbol.
Let \viewpointV be a viewpoint for a non-triggerless rule \Rule with trigger
token $a_0$ and $\sigma \in \Sigma_\SV$ an alphabet symbol.
We say that $\sigma$ \emph{enables} $\viewpointV = (G,K)$ if
$[\tokstart(a_0)]_\equiv \in \consumedVsigma \setminus K$.
Moreover, we say that a state $q \in Q_P \setminus \{ \sinkAP \}$ is
\emph{compatible} with $\sigma$ if
\begin{enumerate*}[label={\it (\roman*)}]
\item all viewpoints in $q$ are compatible with $\sigma$ and
\item for all non-triggerless rules $\Rule \in S$, if $\sigma$ \emph{triggers}
  \Rule (i.e., $\mathit{start}(x_0,v_0) \in \eventsOnesigma$ with $a_0[x_0 =
  v_0]$ trigger token of \Rule), then there is a viewpoint $\viewpointV \in q$
  for \Rule such that $\sigma$ enables \viewpointV.
\end{enumerate*}

We are now ready to define the \emph{transition function} $\delta_P$ of
\automAP.
For all $q \in Q_P$ and alphabet symbol $\sigma \in
\Sigma_\SV$: \begin{itemize}
\item if $q = \sinkAP$ or $q$ is not compatible with $\sigma$, then
  $\delta(q,\sigma) = \sinkAP$;
\item otherwise, $\delta(q,\sigma) = q'$, where
  $q'$ is the smallest set such that for all $\viewpointV = (G,K) \in q$
  \begin{itemize}
  \item $\evolviewVsigma \in q'$ and
  \item if $\sigma$ enables \viewpointV, then $(G,K') \in q'$, where $K' =
    \consumedVsigma \setminus \{ [t]_\equiv \mid \tokstart(a_0) \before t \in
    \hat \rulebody \}$ and $a_0$ is the trigger token of \rulefunV.
  \end{itemize}
\end{itemize}

As an example, \autoref{fig:automaton-nonstrict} and
\autoref{fig:automaton-strict} depict the automata $\automAP$ for two
single-rule planning problems whose associated DAGs are shown in
\autoref{fig:dag-begin-strict-nonstrict} (non-strict and strict case,
respectively).
Each state $q_i$ is annotated with the set of downward closed subsets of DAG
vertices representing the viewpoint component~$K$ (the graph of the viewpoints
is always the same, i.e., the one shown in
\autoref{fig:dag-begin-strict-nonstrict}).
Transition labels encode conditions on the alphabet symbol~$\sigma$ being
read, expressed as pairs of constraints on events for timelines $x_0$ and
$x_1$; $v_i$ denotes the presence of the relevant event, $\neg v_i$ its
absence, $*$ no constraints, and $\circlearrowleft$ that the corresponding
timeline preserves its current value.
The strict case (\autoref{fig:automaton-strict}) requires six non-sink states,
compared to four non-sink states for the non-strict case
(\autoref{fig:automaton-nonstrict}), reflecting the finer tracking needed to
enforce the strict ordering constraint $\tokend(a_0) \before*
\tokend(a_1)$.

\begin{figure}[t]
\centering
\includegraphics[width=0.9\textwidth]{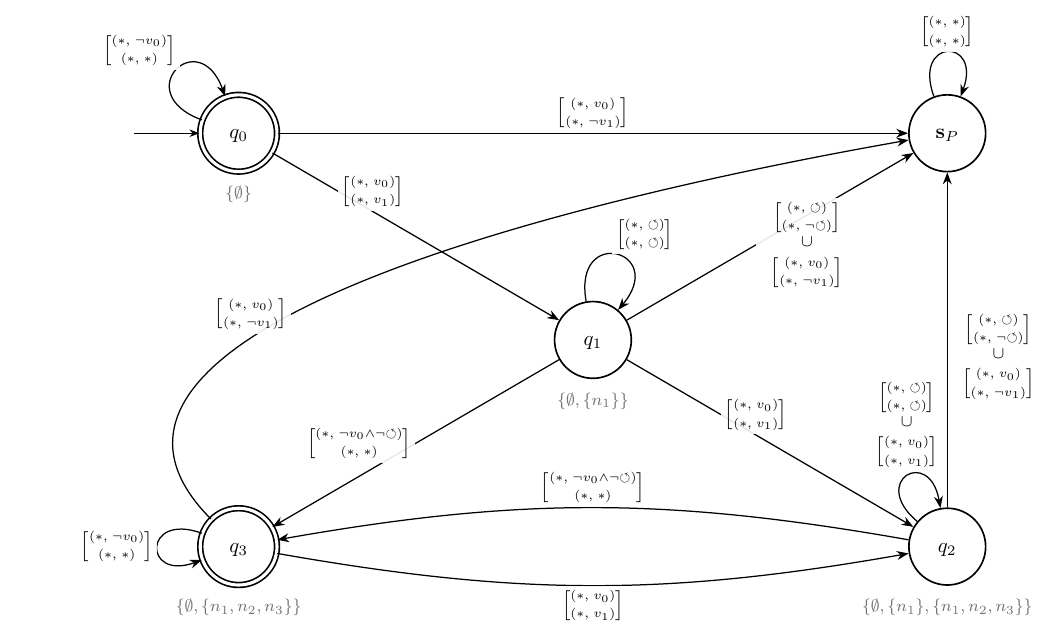}
\caption{The 
automaton $\automAP$ for the non-strict rule whose DAG is
depicted in \autoref{fig:dag-begin-strict-nonstrict} (non-strict), with four
non-sink states.}
\label{fig:automaton-nonstrict}
\end{figure}

\begin{figure}[t]
\centering
\includegraphics[width=0.9\textwidth]{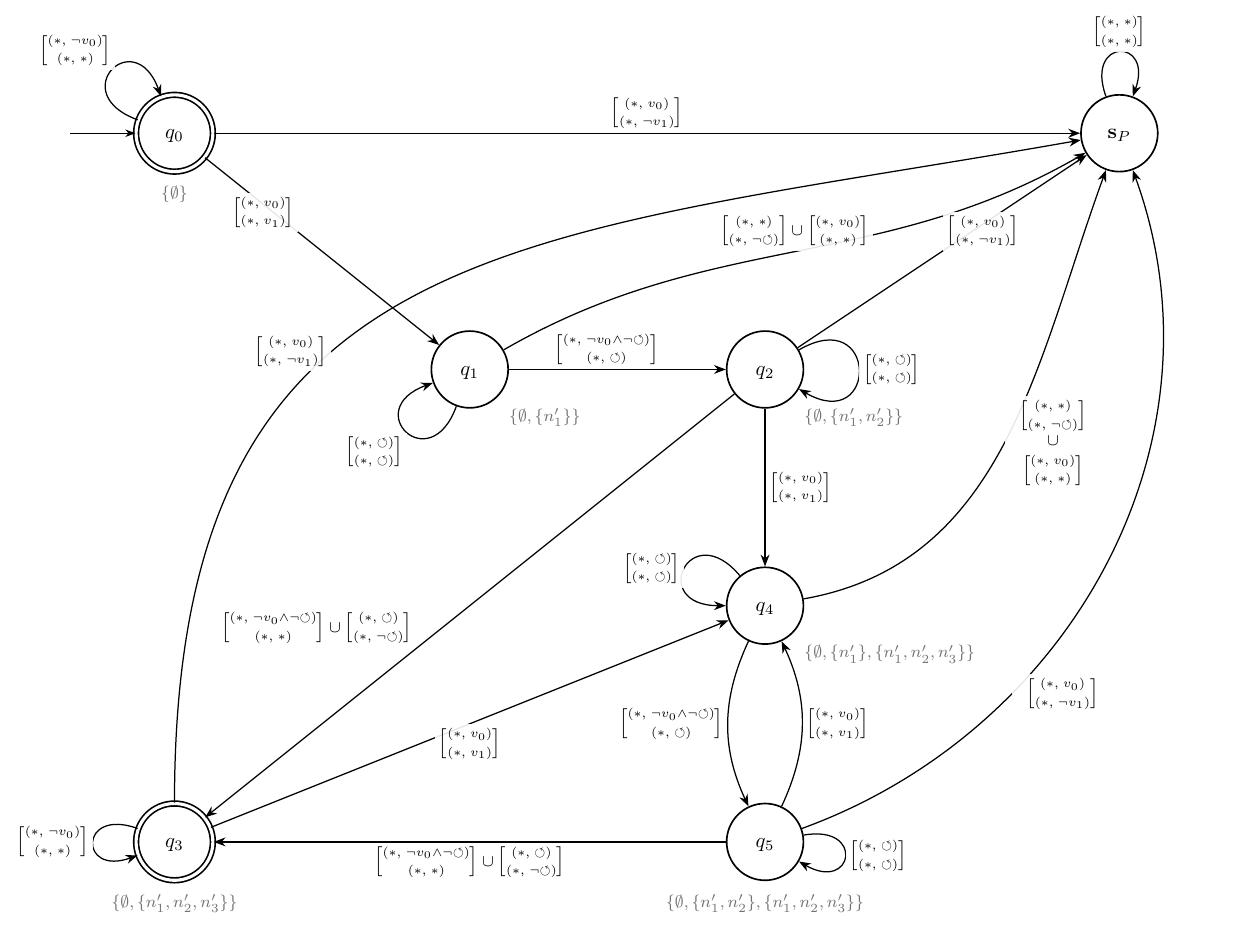}
\caption{The automaton $\automAP$ for the strict rule whose DAG is
depicted in \autoref{fig:dag-begin-strict-nonstrict} (strict), with six
non-sink states.}
\label{fig:automaton-strict}
\end{figure}

The proof of the following lemma is quite technical and notationally heavy.
Therefore, its proof is given in \autoref{app:proofSolutionAutomaton}.
\begin{restatable}{lemma}{lemAutomatonSolutions}
  \label{lem:automatonSolutions}
  Let $P = (\SV, S)$ be an eager qualitative timeline-based planning problem.
  Each finite word over $\Sigma_\SV$ that encodes a plan over \SV is
  accepted by \automAP if and only if it encodes a solution plan for $P$.
  Moreover, the size of $\automAP$ is at most exponential in the size of $P$.
\end{restatable}

Our final result follows from \autoref{lem:automatonPlans} and
\autoref{lem:automatonSolutions}.

\begin{theorem}\label{thm:automaton}
  Let $P = (\SV, S)$ be an eager qualitative timeline-based planning problem.
  Then, the words accepted by the intersection automaton of $\automAP$ and
  $\mathcal T_{\SV}$ are exactly those encoding solution plans for $P$.
  Moreover, the size of the intersection automaton of $\automAP$ and $\mathcal
  T_{\SV}$ is at most \emph{exponential} in the size of $P$.
\end{theorem}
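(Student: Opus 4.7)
The plan is to combine the two preceding lemmas via the standard product (intersection) construction for DFAs, so the proof is essentially a bookkeeping argument with no new ideas required.

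First I would recall that Lemma~\ref{lem:automatonPlans} gives that $\mathcal{L}(\mathcal T_{\SV})$ is exactly the set of words over $\Sigma_\SV$ encoding plans over \SV, and that $|\mathcal T_{\SV}|$ is at most exponential in $|P|$. Then I would invoke Lemma~\ref{lem:automatonSolutions}, which says that, restricted to words that already encode a plan over \SV, $\automAP$ accepts exactly those encoding solution plans for $P$; and that $|\automAP|$ is also at most exponential in $|P|$.

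Next I would construct the intersection automaton $\mathcal A_{P \cap \SV}$ by the usual product construction, with state set $Q_P \times Q_{\SV}$, initial state $(q^0_P, q^0_{\SV})$, transition function $\delta((q, q'), \sigma) = (\delta_P(q, \sigma), \delta_{\SV}(q', \sigma))$, and set of final states $F_P \times F_{\SV}$. By construction, $\mathcal L(\mathcal A_{P \cap \SV}) = \mathcal L(\automAP) \cap \mathcal L(\mathcal T_{\SV})$.

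For the language equivalence I would argue both inclusions. For one direction: if $w \in \mathcal L(\mathcal A_{P \cap \SV})$, then $w \in \mathcal L(\mathcal T_{\SV})$, so by Lemma~\ref{lem:automatonPlans} $w$ encodes a plan over \SV; and $w \in \mathcal L(\automAP)$, so applying Lemma~\ref{lem:automatonSolutions} to this word (which encodes a plan) yields that $w$ encodes a solution plan for $P$. For the other direction: if $w$ encodes a solution plan for $P$, then in particular $w$ encodes a plan over \SV, so $w \in \mathcal L(\mathcal T_{\SV})$ by Lemma~\ref{lem:automatonPlans}; and since $w$ encodes a plan that happens to be a solution, Lemma~\ref{lem:automatonSolutions} gives $w \in \mathcal L(\automAP)$. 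Hence $w \in \mathcal L(\mathcal A_{P \cap \SV})$.

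Finally, for the size bound, the product automaton has at most $|Q_P| \cdot |Q_{\SV}|$ states. Since each factor is at most exponential in $|P|$ by the two lemmas, their product is also at most exponential in $|P|$. There is no real obstacle here; the only thing to be careful about is that Lemma~\ref{lem:automatonSolutions} only characterizes $\automAP$'s behavior on well-formed plan encodings, which is precisely why the intersection with $\mathcal T_{\SV}$ is needed to rule out ill-formed words that $\automAP$ alone might spuriously accept.
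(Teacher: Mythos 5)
Your proposal is correct and matches the paper's approach: the paper derives the theorem directly from Lemma~\ref{lem:automatonPlans} and Lemma~\ref{lem:automatonSolutions} via the standard product construction, exactly as you do. Your write-up merely spells out the routine bookkeeping (both language inclusions and the product of the two exponential state-set bounds) that the paper leaves implicit.
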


\autoref{thm:automaton} enables the classical strategy synthesis procedure from a DFA arena \cite{PnueliR89}. Once the DFA is obtained, one solves a reachability/safety game on it via classical fixpoint algorithms. 





\section{Expressiveness of the eager fragment: A BPMN case study}\label{sec:eager-expressiveness}


\begin{figure}[t]
\centering
\includegraphics[width=0.9\textwidth]{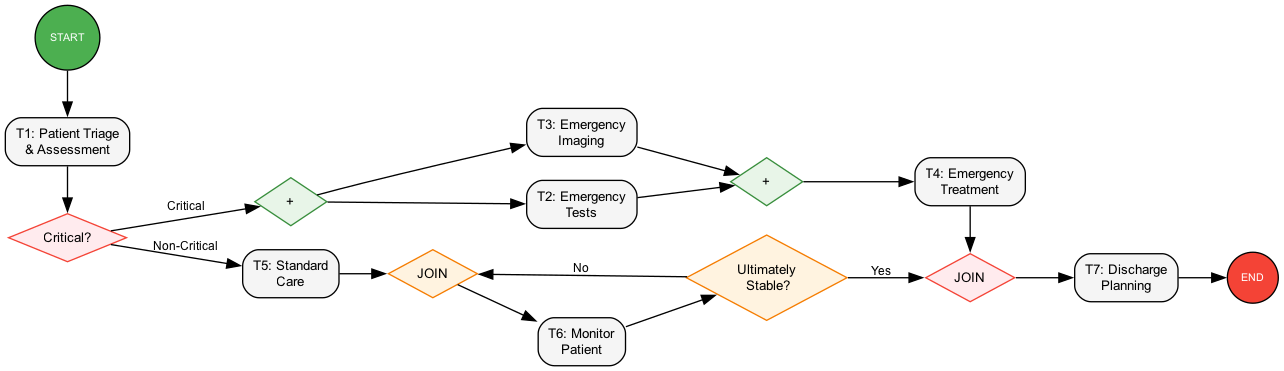}
\caption{Emergency Department BPMN Process Diagram}
\label{fig:example}
\end{figure}

Having established the theoretical foundations of the eager fragment and demonstrated its decidability properties through deterministic finite automata, we now turn to examining its practical expressiveness. To illustrate the modeling capabilities of the eager fragment, we present a comprehensive case study involving the translation of Business Process Model and Notation (BPMN) diagrams into timeline-based planning problems. BPMN provides a rich set of control flow constructs including sequential flows, parallel execution, exclusive choice (XOR), and loops, making it an ideal test-bed for demonstrating that the eager fragment can capture complex real-world process semantics. Through this translation, we show how various BPMN constructs can be systematically encoded using eager synchronization rules, thereby demonstrating the expressiveness of the fragment for representing sophisticated temporal and control flow constraints.

More precisely, the section is articulated in four stages. First, we introduce BPMN through a concrete example that illustrates the key modeling constructs and their intended semantics. Second, we introduce the concept of parse trees that capture the hierarchical decomposition of the well-structured class of BPMN diagrams into Single Entry Single Exit (SESE) blocks, which are the building blocks that will be compiled into the eager fragment. Third, we present the general translation methodology that systematically maps each type of BPMN construct to corresponding eager timeline synchronization rules. Finally, we apply this translation framework to our working example, demonstrating how the resulting timeline-based planning problem preserves the original process semantics while remaining within the eager fragment.

To illustrate our translation approach, we present a simplified Emergency Department process that demonstrates all major BPMN control flow patterns.
The process shown in Figure~\ref{fig:example} models patient flow through an emergency department. Upon arrival, every patient undergoes an initial triage and assessment phase (T1) where medical staff evaluate the patient condition and assign a priority level. Based on this assessment, patients are classified through an XOR gateway that routes them along different treatment paths according to the severity of their condition. Critical patients follow an intensive care pathway where emergency tests (T2) and emergency imaging (T3) are performed concurrently to rapidly diagnose the condition, after which emergency treatment (T4) is administered. Non-critical patients instead follow a standard care pathway where they receive routine medical care (T5) and are subsequently monitored (T6) in a loop structure that continues until their condition stabilizes. Regardless of which treatment path is followed, all patients eventually proceed to discharge planning (T7) where arrangements are made for their release or transfer to appropriate care facilities.

To systematically translate BPMN diagrams into timeline-based planning problems, we employ a structural decomposition approach based on parse trees and Single Entry Single Exit (SESE) blocks. A SESE block is a process fragment that has exactly one entry point and one exit point, meaning that control flow can only enter the block through a single incoming edge and can only leave through a single outgoing edge. This structural constraint ensures that SESE blocks are compositional units that can be translated independently as we will see.

The hierarchical structure of well-formed BPMN diagrams naturally decomposes into a parse tree where each node represents a 
SESE block of a specific type. Figure~\ref{fig:sesetree} shows the complete SESE decomposition of our Emergency Department process. 
We distinguish five fundamental block types that capture the essential BPMN control flow patterns. Task blocks 
($b_6$, $b_8$, $b_{11}$, $b_{12}$, $b_{13}$, $b_{15}$, $b_{16}$) represent atomic activities that cannot be further decomposed. 
Flow blocks ($b_1$, $b_2$, $b_4$, $b_7$) capture sequential execution where exactly two sub-blocks are executed in strict temporal order, with a ``before''
 child that must complete prior to the ``after'' child beginning execution. Parallel blocks ($b_5$) model concurrent execution where exactly two branches are activated simultaneously and execution continues only after both branches complete. Loop blocks ($b_9$) represent iterative structures where a body block may be executed multiple times based on continuation conditions. XOR blocks ($b_3$) implement exclusive choice where exactly one of two alternative branches is selected for execution based on process state or external conditions. When more than two sequential steps, parallel branches, or choice alternatives are required, they can be obtained by chaining multiple blocks of the same type, just as we demonstrate with the flow blocks $b_1$ and $b_2$   in our decomposition of Figure~\ref{fig:sesetree}.

\begin{figure}[t]
\centering
\includegraphics[width=0.8\textwidth]{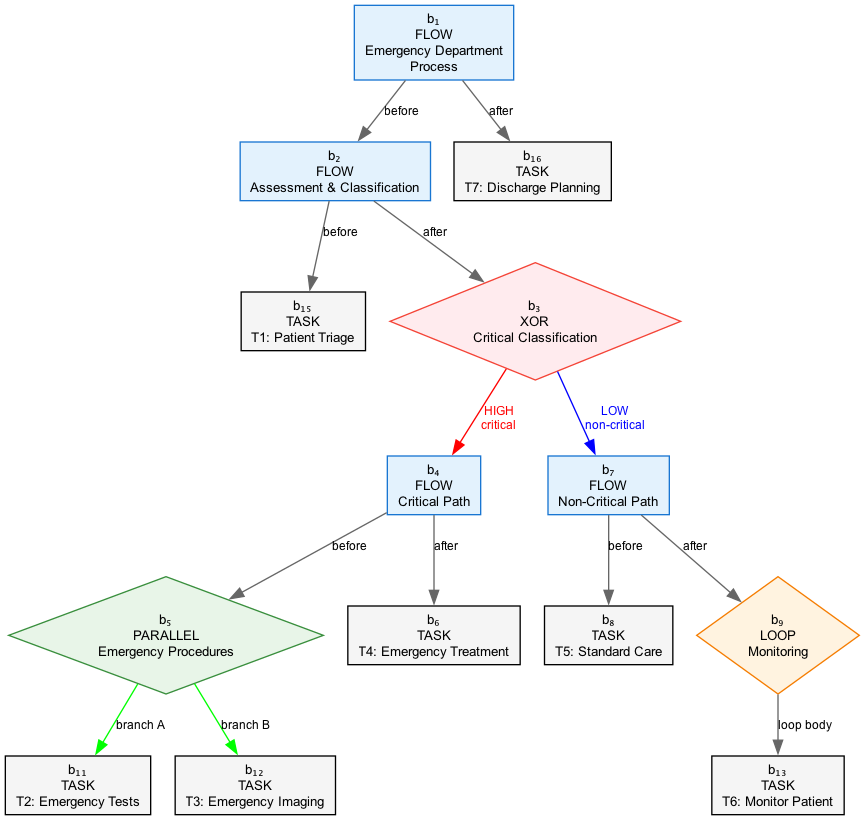}
\caption{\label{fig:sesetree}SESE Block Decomposition Tree for Emergency Department Process}
\end{figure}

The parse tree structure in Figure~\ref{fig:sesetree} reveals the hierarchical organization of the process through its binary decomposition. The root block $b_1$ represents the entire process as a flow block containing two sequential phases: the assessment and classification phase ($b_2$) executed before the final discharge planning ($b_{16}$). The assessment phase $b_2$ further decomposes into initial patient triage ($b_{15}$) followed by critical classification ($b_3$). The XOR block $b_3$ implements the branching logic that routes patients to either the critical path ($b_4$) or non-critical path ($b_7$) based on their condition severity. The critical path $b_4$ sequences emergency procedures ($b_5$) before emergency treatment ($b_6$), where the emergency procedures block coordinates concurrent execution of emergency tests ($b_{11}$) and imaging ($b_{12}$). The non-critical path $b_7$ sequences standard care ($b_8$) before the monitoring loop ($b_9$) containing the patient monitoring task ($b_{13}$). This decomposition enables systematic translation where each SESE block type maps to specific eager timeline synchronization rules that preserve the intended control flow semantics.

%

Let us provide now the general translation rules, then we will apply them to our working example.
The construction of BPMN semantics into timeline-based planning is not a novel concept. In \cite{DBLP:conf/time/CombiOS19}, the authors present a comprehensive approach that utilizes the full quantitative fragment of timeline-based planning, which allows for disjunctions and encompasses all possible relations among temporal points within a clause. However, this comprehensive expressiveness comes at a significant computational cost, as the resulting planning problems are EXPSPACE-complete. In contrast, the encoding proposed in this work adopts a more restrictive but computationally efficient approach. By limiting ourselves to the eager fragment of qualitative timeline-based planning, we achieve a more tractable complexity class of PSPACE-complete, while still maintaining sufficient expressiveness to capture all major BPMN control flow patterns as we demonstrate in this section.

To systematically translate BPMN diagrams into timeline-based planning problems, we define a translation function $\mathcal{R}$ that maps each SESE block to a set of eager timeline synchronization rules. This function decomposes into two complementary components:
$$\mathcal{R}(b) = \mathcal{R}_{\text{forward}}(b) \cup \mathcal{R}_{\text{backward}}(b).$$
This decomposition into forward and backward rules serves just for conceptual clarity
with the dichotomy determined by which block serves as the trigger 
in each synchronization rule:

\begin{itemize}
\item rules in $\mathcal{R}_{\text{forward}}(b)$ are triggered by parent blocks (rules of the form $a_0[x_{\text{parent}} = \top] \implies \ldots$) and capture the natural control flow semantics where parent blocks orchestrate the activation patterns of their children, implementing the specific behavioral constraints of each block type.

\item rules in $\mathcal{R}_{\text{backward}}(b)$ are triggered by child blocks (rules of the form $a_0[x_{\text{child}} = \top] \implies \ldots$) and establish the essential parent-child dependency relationships, ensuring that child blocks can only execute when their parent blocks permit such activation.

\end{itemize}

\noindent 
Given a SESE tree decomposition with blocks $B = \{b_1, b_2, \ldots, b_n\}$, the complete timeline-based planning problem encoding the original BPMN diagram is defined as:

$$P = (\mathcal{SV}, \mathcal{S})$$

\noindent  where $\mathcal{SV}$ is the set of all state variables introduced for the blocks, and $\mathcal{S}$ is the complete set of synchronization rules:

$$\mathcal{S} = \bigcup_{b \in B} \mathcal{R}(b) \cup \{\exists t[x_{b_{\text{root}}} = \top]. \top\} = \bigcup_{b \in B} \left(\mathcal{R}_{\text{forward}}(b) \cup \mathcal{R}_{\text{backward}}(b)\right) \cup \{\exists t[x_{b_{\text{root}}} = \top]. \top\}$$

\noindent The singleton triggerless rule $\exists t[x_{b_{\text{root}}} = \top]. \top$ added to $\mathcal{S}$ will be discussed below as it constitutes a technical requirement for proper process initiation.

\noindent Let us define the state variables first. For each SESE block $b$ in the decomposition, we introduce state variables that capture the activation status and control flow decisions. The specific variables as well as their domain depend on the block type.
For each block $b$ in the SESE tree, we introduce a state variable
$$x_b \text{ with domain } V_{x_b} = \{\top, \bot\}$$
where $\top$ indicates that the block is active (executing) and $\bot$ indicates that the block is inactive. This applies uniformly to all block types: TASK, FLOW, PARALLEL, LOOP, and XOR blocks.
The following transition functions for block state  avoids consecutive disabled intervals that make no sense in process execution:
\begin{align*}
T_{x_b}(\top) &= V_{x_b} = \{\top, \bot\} \\
T_{x_b}(\bot) &= \{\top\}
\end{align*}

\noindent  For XOR blocks, which implement exclusive choice semantics, we require an additional state variable to track which branch has been selected. Therefore, for each XOR block $b$, we introduce a second state variable
$$x_{b\_\text{dec}} \text{ with domain } V_{x_{b\_\text{dec}}} = \{\bot, \top_{\text{high}}, \top_{\text{low}}\}$$
This decision variable determines which branch is selected, where $\top_{\text{high}}$ selects the high-priority branch, $\top_{\text{low}}$ selects the low-priority branch, and $\bot$ indicates no decision has been made.
The transition functions for all state variables mentioned above allow transitions between any values in their respective domains, that is, $T_{x_b}(v) = V_{x_b}$ for all values $v$ in the domain.

Finally for FLOW blocks, which implement sequential execution semantics, we require an additional control flow timeline to ensure proper sequential execution and mutual exclusion between phases. Therefore, for each FLOW block $b$, we introduce a second state variable
$$x_{b\_\text{flow}} \text{ with domain } V_{x_{b\_\text{flow}}} = \{\bot, \top_{\text{before}}, \top_{\text{after}}\}$$
This control variable enforces the sequential progression from the before phase to the after phase, with automatic mutual exclusion between the two phases.
For such a reason this is the only variable that has a constrained  transition function which enforces sequential execution:
\begin{align*}
T_{x_{b\_\text{flow}}}(\bot) &= \{\top_{\text{before}}\} \\
T_{x_{b\_\text{flow}}}(\top_{\text{before}}) &= \{ \top_{\text{after}}\} \\
T_{x_{b\_\text{flow}}}(\top_{\text{after}}) &= \{\bot, \top_{\text{before}}\}
\end{align*}

This transition structure ensures that the flow begins in the disabled state, 
must transition to the before phase first, can transition from before to after, and prohibits direct transitions from disabled to after. 

\noindent Given a SESE tree with blocks $B = \{b_1, b_2, \ldots, b_n\}$, the complete set of state variables is:
$$\mathcal{SV} = \{x_b \mid b \in B\} \cup 
\begin{array}{c}\{x_{b\_\text{dec}} \mid b \in B \text{ and } b \text{ is an XOR block}\} \\\cup \\
    \{x_{b\_\text{flow}} \mid b \in B \text{ and } b \text{ is a FLOW block}\}\end{array}$$

We must address a special case in our encoding: the root region of the SESE decomposition. Unlike other blocks that can transition between enabled and disabled states, the root region has a unique semantics. The root region state variable $x_b$ (where $b$ denotes the root block) has only the $\top$ (enabled) value in its domain $V_{x_b} = \{\top\}$, and its transition function is defined as $T_{x_b}(\top) = \emptyset$, meaning that once the root region becomes active, it remains active throughout the entire process execution. This represents a semantic constraint ensuring that each plan corresponds to exactly one complete process computation.

As a goal specification, we represent the requirement for process initiation as a triggerless rule: $\exists t[x_b = \top]. \top$. This triggerless rule mandates that the root region must be activated, which will cascadingly trigger all due synchronization rules according to the hierarchical structure of the SESE decomposition. This design choice ensures that every valid plan represents a complete execution of the BPMN process from start to finish, rather than partial or incomplete executions.

\noindent We now define the functions $\mathcal{R}_{\text{forward}}$ and $\mathcal{R}_{\text{backward}}$ 
responsible of generating all the synchronization rules for each block type. It is crucial to note that all synchronization rules involved in our BPMN encoding are eager, which ensures the deterministic automaton construction presented in this paper. The eager nature of these rules typically manifests in three common patterns: rules where the trigger token equals another token (trigger $=$ another token), rules where the trigger token is started by or starts another token (corresponding to specific rows in Table~\ref{tab:allen-eagerness}   of Section~\ref{sec:allen}), and rules where the trigger token is ended by a non-trigger token, which amounts to non-trigger token ending the trigger token.
When rules of these kinds appear for the first time in the encoding below, we reference the corresponding 
row in Table~\ref{tab:allen-eagerness}   of Section~\ref{sec:allen}.
The fact that all such rows  have ``Overall ambiguous'' value equal to ``no'' establishes the eagerness of the encoding.
 A notable aspect of our encoding is the use of reflexive versions of Allen relations. For instance, when we say that token $a$ starts token $b$, we include the case where $a = b$, 
 meaning that the loose end constraint is $e(a) \leq e(b)$ rather than the strict inequality $e(a) < e(b)$ required by the traditional Allen relation. 
 As we will point out in Section~\ref{sec:allen}, reflexivity for relations such as starts, ends, and during (where both endpoints have non-strict equality) 
 does not affect the eagerness status: if a rule is eager with strict relations, it remains eager with reflexive relations, and if it is not eager with strict relations, 
 it remains non-eager with reflexive relations.

\begin{figure}[t]
\centering
\begin{tikzpicture}[x=0.8cm,y=1.2cm,
    customfont/.style={font=\fontsize{7}{9}\selectfont},
    customsmall/.style={font=\fontsize{6}{8}\selectfont}]

    \foreach \t in {0,1,2,3,4,5,6,7,8,9,10,11}
    {
        \draw[gray!35] (\t,8.5) -- (\t,1);
        \node[gray, customsmall] at (\t,8.7) {\t};
    }

    \node[left, customfont] at (-1.5, 8) {$x_{b_{1}}$ (Root)};
    \node[left, customfont] at (-1.5, 7.5) {$x_{b_{1}\_flow}$ (Root Flow)};
    \node[left, customfont] at (-1.5, 7) {$x_{b_{2}}$ (Assessment)};
    \node[left, customfont] at (-1.5, 6.5) {$x_{b_{2}\_flow}$ (Assessment Flow)};
    \node[left, customfont] at (-1.5, 6) {$x_{b_{15}}$ (T1: Triage)};
    \node[left, customfont] at (-1.5, 5.5) {$x_{b_{3}}$ (XOR Critical)};
    \node[left, customfont] at (-1.5, 5) {$x_{b_{3}\_dec}$ (XOR Decision)};
    \node[left, customfont] at (-1.5, 4.5) {$x_{b_{7}}$ (Non-Critical Path)};
    \node[left, customfont] at (-1.5, 4) {$x_{b_{7}\_flow}$ (Non-Critical Flow)};
    \node[left, customfont] at (-1.5, 3.5) {$x_{b_{8}}$ (T5: Standard Care)};
    \node[left, customfont] at (-1.5, 3) {$x_{b_{9}}$ (Loop Block)};
    \node[left, customfont] at (-1.5, 2.5) {$x_{b_{13}}$ (T6: Monitor)};
    \node[left, customfont] at (-1.5, 2) {$x_{b_{16}}$ (T7: Discharge)};

    \draw[thick, blue] (0,8) -- (11,8);
    \draw[thick, blue] (0,7.9) -- (0,8.1);
    \draw[thick, blue] (11,7.9) -- (11,8.1);
    \node[above, customfont] at (5.5,8) {$\top$ (Active)};

    \draw[thick, blue!70] (0,7.5) -- (9,7.5);
    \draw[thick, blue!70] (0,7.4) -- (0,7.6);
    \draw[thick, blue!70] (9,7.4) -- (9,7.6);
    \node[above, customfont] at (4.5,7.5) {$\top_{before}$ (Assessment Phase)};
    
    \draw[thick, blue!70] (9,7.5) -- (11,7.5);
    \draw[thick, blue!70] (9,7.4) -- (9,7.6);
    \draw[thick, blue!70] (11,7.4) -- (11,7.6);
    \node[above, customfont] at (10,7.5) {$\top_{after}$ (Discharge Phase)};

    \draw[thick, blue] (0,7) -- (9,7);
    \draw[thick, blue] (0,6.9) -- (0,7.1);
    \draw[thick, blue] (9,6.9) -- (9,7.1);
    \node[above, customfont] at (4.5,7) {$\top$ (Active)};

    \draw[thick, blue!70] (0,6.5) -- (2,6.5);
    \draw[thick, blue!70] (0,6.4) -- (0,6.6);
    \draw[thick, blue!70] (2,6.4) -- (2,6.6);
    \node[above, customfont] at (1,6.5) {$\top_{before}$ (Triage)};
    
    \draw[thick, blue!70] (2,6.5) -- (9,6.5);
    \draw[thick, blue!70] (2,6.4) -- (2,6.6);
    \draw[thick, blue!70] (9,6.4) -- (9,6.6);
    \node[above, customfont] at (5.5,6.5) {$\top_{after}$ (Classification)};

    \draw[thick, gray] (0,6) -- (2,6);
    \draw[thick, gray] (0,5.9) -- (0,6.1);
    \draw[thick, gray] (2,5.9) -- (2,6.1);
    \node[above, customfont] at (1,6) {$\top$ (Executing)};

    \draw[thick, red] (2,5.5) -- (9,5.5);
    \draw[thick, red] (2,5.4) -- (2,5.6);
    \draw[thick, red] (9,5.4) -- (9,5.6);
    \node[above, customfont] at (5.5,5.5) {$\top$ (Active)};

    \draw[thick, red!70] (2,5) -- (9,5);
    \draw[thick, red!70] (2,4.9) -- (2,5.1);
    \draw[thick, red!70] (9,4.9) -- (9,5.1);
    \node[above, customfont] at (5.5,5) {$\top_{low}$ (Non-Critical Selected)};

    \draw[thick, blue] (2,4.5) -- (9,4.5);
    \draw[thick, blue] (2,4.4) -- (2,4.6);
    \draw[thick, blue] (9,4.4) -- (9,4.6);
    \node[above, customfont] at (5.5,4.5) {$\top$ (Active)};

    \draw[thick, blue!70] (2,4) -- (4,4);
    \draw[thick, blue!70] (2,3.9) -- (2,4.1);
    \draw[thick, blue!70] (4,3.9) -- (4,4.1);
    \node[above, customfont] at (2.5,4) {$\top_{before}$ (Standard Care)};
    
    \draw[thick, blue!70] (4,4) -- (9,4);
    \draw[thick, blue!70] (4,3.9) -- (4,4.1);
    \draw[thick, blue!70] (9,3.9) -- (9,4.1);
    \node[above, customfont] at (7,4) {$\top_{after}$ (Monitoring Phase)};

    \draw[thick, gray] (2,3.5) -- (4,3.5);
    \draw[thick, gray] (2,3.4) -- (2,3.6);
    \draw[thick, gray] (4,3.4) -- (4,3.6);
    \node[above, customfont] at (3,3.5) {$\top$ (Standard Care)};

    \draw[thick, orange] (4,3) -- (9,3);
    \draw[thick, orange] (4,2.9) -- (4,3.1);
    \draw[thick, orange] (9,2.9) -- (9,3.1);
    \node[above, customfont] at (6.5,3) {$\top$ (Loop Active)};

    \draw[thick, gray] (4,2.5) -- (7,2.5);
    \draw[thick, gray] (4,2.4) -- (4,2.6);
    \draw[thick, gray] (7,2.4) -- (7,2.6);
    \node[above, customfont] at (5.2,2.5) {$\top$ (Monitor Iter 1)};

    \draw[thick, gray] (7,2.5) -- (9,2.5);
    \draw[thick, gray] (7,2.4) -- (7,2.6);
    \draw[thick, gray] (9,2.4) -- (9,2.6);
    \node[above, customfont] at (8.7,2.5) {$\top$ (Monitor Iter 2)};

    \draw[thick, gray] (9,2) -- (11,2);
    \draw[thick, gray] (9,1.9) -- (9,2.1);
    \draw[thick, gray] (11,1.9) -- (11,2.1);
    \node[above, customfont] at (10.5,2) {$\top$ (Discharge)};

    \draw[dashed, gray, thick] (0,1.5) -- (0,8.5);
    \node[below, customsmall] at (0,1.2) {Triage→Classification};
    
    \draw[dashed, gray, thick] (2,1.5) -- (2,8.5);
    \node[below, customsmall] at (2,1.0) {Non-Critical Start};
    
    \draw[dashed, gray, thick] (7,1.5) -- (7,8.5);
    \node[below, customsmall] at (7,1.2) {Monitor Iter 1→2};

    \draw[dashed, gray, thick] (9,1.5) -- (9,8.5);
    \node[below, customsmall] at (9,1.0) {Monitoring→Discharge};

    \draw[->, thick, gray] (2,6.2) -- (2,5.3);
    \draw[->, thick, gray] (2,5.2) -- (2,4.3);
    \draw[->, thick, gray] (9,4.2) -- (9,1.8);

\end{tikzpicture}

\caption{Timeline execution for the non-critical path in the Emergency Department BPMN process with two iterations of the monitoring loop. The diagram shows the execution flow when the XOR decision selects the low branch (non-critical patient). The monitoring loop ($b_9$) executes twice with consecutive monitoring task iterations: first iteration from time 4-7 (length 3), second iteration from time 7-9 (length 2). The loop block covers exactly the concatenation of the two monitoring executions with no gaps, with the patient becoming stable after the second monitoring cycle, allowing progression to discharge planning.}
\label{fig:non-critical-loop-timeline}
\end{figure}
\begin{figure}[t]
\centering
\begin{tikzpicture}[x=0.8cm,y=1.2cm,
    customfont/.style={font=\fontsize{7}{9}\selectfont},
    customsmall/.style={font=\fontsize{6}{8}\selectfont}]

    \foreach \t in {0,1,2,3,4,5,6,7,8,9,10,11}
    {
        \draw[gray!35] (\t,8.5) -- (\t,1);
        \node[gray, customsmall] at (\t,8.7) {\t};
    }

    \node[left, customfont] at (-1.5, 8) {$x_{b_{1}}$ (Root)};
    \node[left, customfont] at (-1.5, 7.5) {$x_{b_{1}\_flow}$ (Root Flow)};
    \node[left, customfont] at (-1.5, 7) {$x_{b_{2}}$ (Assessment)};
    \node[left, customfont] at (-1.5, 6.5) {$x_{b_{2}\_flow}$ (Assessment Flow)};
    \node[left, customfont] at (-1.5, 6) {$x_{b_{15}}$ (T1: Triage)};
    \node[left, customfont] at (-1.5, 5.5) {$x_{b_{3}}$ (XOR Critical)};
    \node[left, customfont] at (-1.5, 5) {$x_{b_{3}\_dec}$ (XOR Decision)};
    \node[left, customfont] at (-1.5, 4.5) {$x_{b_{4}}$ (Critical Path)};
    \node[left, customfont] at (-1.5, 4) {$x_{b_{4}\_flow}$ (Critical Flow)};
    \node[left, customfont] at (-1.5, 3.5) {$x_{b_{5}}$ (Parallel Block)};
    \node[left, customfont] at (-1.5, 3) {$x_{b_{11}}$ (T2: Tests)};
    \node[left, customfont] at (-1.5, 2.5) {$x_{b_{12}}$ (T3: Imaging)};
    \node[left, customfont] at (-1.5, 2) {$x_{b_{6}}$ (T4: Treatment)};
    \node[left, customfont] at (-1.5, 1.5) {$x_{b_{16}}$ (T7: Discharge)};

    \draw[thick, blue] (0,8) -- (11,8);
    \draw[thick, blue] (0,7.9) -- (0,8.1);
    \draw[thick, blue] (11,7.9) -- (11,8.1);
    \node[above, customfont] at (5.5,8) {$\top$ (Active)};

    \draw[thick, blue!70] (0,7.5) -- (9,7.5);
    \draw[thick, blue!70] (0,7.4) -- (0,7.6);
    \draw[thick, blue!70] (9,7.4) -- (9,7.6);
    \node[above, customfont] at (4.5,7.5) {$\top_{before}$ (Assessment Phase)};
    
    \draw[thick, blue!70] (9,7.5) -- (11,7.5);
    \draw[thick, blue!70] (9,7.4) -- (9,7.6);
    \draw[thick, blue!70] (11,7.4) -- (11,7.6);
    \node[above, customfont] at (10,7.5) {$\top_{after}$ (Discharge Phase)};

    \draw[thick, blue] (0,7) -- (9,7);
    \draw[thick, blue] (0,6.9) -- (0,7.1);
    \draw[thick, blue] (9,6.9) -- (9,7.1);
    \node[above, customfont] at (4.5,7) {$\top$ (Active)};

    \draw[thick, blue!70] (0,6.5) -- (2,6.5);
    \draw[thick, blue!70] (0,6.4) -- (0,6.6);
    \draw[thick, blue!70] (2,6.4) -- (2,6.6);
    \node[above, customfont] at (1,6.5) {$\top_{before}$ (Triage)};
    
    \draw[thick, blue!70] (2,6.5) -- (9,6.5);
    \draw[thick, blue!70] (2,6.4) -- (2,6.6);
    \draw[thick, blue!70] (9,6.4) -- (9,6.6);
    \node[above, customfont] at (5.5,6.5) {$\top_{after}$ (Classification)};

    \draw[thick, gray] (0,6) -- (2,6);
    \draw[thick, gray] (0,5.9) -- (0,6.1);
    \draw[thick, gray] (2,5.9) -- (2,6.1);
    \node[above, customfont] at (1,6) {$\top$ (Executing)};

    \draw[thick, red] (2,5.5) -- (9,5.5);
    \draw[thick, red] (2,5.4) -- (2,5.6);
    \draw[thick, red] (9,5.4) -- (9,5.6);
    \node[above, customfont] at (5.5,5.5) {$\top$ (Active)};

    \draw[thick, red!70] (2,5) -- (9,5);
    \draw[thick, red!70] (2,4.9) -- (2,5.1);
    \draw[thick, red!70] (9,4.9) -- (9,5.1);
    \node[above, customfont] at (5.5,5) {$\top_{high}$ (Critical Selected)};

    \draw[thick, blue] (2,4.5) -- (9,4.5);
    \draw[thick, blue] (2,4.4) -- (2,4.6);
    \draw[thick, blue] (9,4.4) -- (9,4.6);
    \node[above, customfont] at (5.5,4.5) {$\top$ (Active)};

    \draw[thick, blue!70] (2,4) -- (6,4);
    \draw[thick, blue!70] (2,3.9) -- (2,4.1);
    \draw[thick, blue!70] (6,3.9) -- (6,4.1);
    \node[above, customfont] at (3,4) {$\top_{before}$ (Emergency Procedures)};
    
    \draw[thick, blue!70] (6,4) -- (9,4);
    \draw[thick, blue!70] (6,3.9) -- (6,4.1);
    \draw[thick, blue!70] (9,3.9) -- (9,4.1);
    \node[above, customfont] at (7.5,4) {$\top_{after}$ (Treatment)};

    \draw[thick, green] (2,3.5) -- (6,3.5);
    \draw[thick, green] (2,3.4) -- (2,3.6);
    \draw[thick, green] (6,3.4) -- (6,3.6);
    \node[above, customfont] at (4,3.5) {$\top$ (Parallel Active)};

    \draw[thick, gray] (2,3) -- (6,3);
    \draw[thick, gray] (2,2.9) -- (2,3.1);
    \draw[thick, gray] (6,2.9) -- (6,3.1);
    \node[above, customfont] at (4,3) {$\top$ (Testing)};

    \draw[thick, gray] (2,2.5) -- (6,2.5);
    \draw[thick, gray] (2,2.4) -- (2,2.6);
    \draw[thick, gray] (6,2.4) -- (6,2.6);
    \node[above, customfont] at (4,2.5) {$\top$ (Imaging)};

    \draw[thick, gray] (6,2) -- (9,2);
    \draw[thick, gray] (6,1.9) -- (6,2.1);
    \draw[thick, gray] (9,1.9) -- (9,2.1);
    \node[above, customfont] at (7.5,2) {$\top$ (Treatment)};

    \draw[thick, gray] (9,1.5) -- (11,1.5);
    \draw[thick, gray] (9,1.4) -- (9,1.6);
    \draw[thick, gray] (11,1.4) -- (11,1.6);
    \node[above, customfont] at (10,1.5) {$\top$ (Discharge)};

    \draw[dashed, gray, thick] (0,1) -- (0,8.5);
    \node[below, customsmall] at (0,0.8) {Triage→Classification};
    
    \draw[dashed, gray, thick] (2,1) -- (2,8.5);
    \node[below, customsmall] at (2,0.6) {Critical Path Start};
    
    \draw[dashed, gray, thick] (6,1) -- (6,8.5);
    \node[below, customsmall] at (6.5,0.6) {Parallel→Treatment};

    \draw[dashed, gray, thick] (9,1) -- (9,8.5);
    \node[below, customsmall] at (9,0.8) {Treatment→Discharge};

    \draw[->, thick, gray] (2,6.2) -- (2,5.3);
    \draw[->, thick, gray] (2,5.2) -- (2,4.3);
    \draw[->, thick, gray] (6,4.2) -- (6,1.8);
    \draw[->, thick, gray] (9,4.2) -- (9,1.3);

\end{tikzpicture}

\caption{Timeline execution for the critical path in the Emergency Department BPMN process, including all state variables and flow control timelines. The diagram shows both the main block activations and their corresponding flow control variables that enforce sequential execution semantics. Flow variables with subscript ``flow'' implement the before/after phases for FLOW blocks, while the decision variable for the XOR block tracks branch selection. Tasks T2 (Emergency Tests) and T3 (Emergency Imaging) execute in parallel between time points 2 and 6, synchronized by the parallel block constraint, both completing simultaneously before proceeding to emergency treatment (T4).}
\label{fig:critical-path-timeline}
\end{figure}

To illustrate the behavior of the synchronization rules across different block types, we present two concrete timeline executions that demonstrate alternative paths through the Emergency Department process depicted in Figure~\ref{fig:example}. The first execution, depicted in Figure~\ref{fig:non-critical-loop-timeline}, represents a plan satisfying the non-critical path, where patients receive standard care followed by iterative monitoring until their condition stabilizes. The second execution, shown in Figure~\ref{fig:critical-path-timeline}, represents a plan satisfying the critical path of the BPMN diagram, where patients classified as critical undergo emergency procedures including concurrent testing and imaging followed by emergency treatment.  These timeline figures will be referenced throughout the following block definitions to demonstrate how each type of synchronization rule enforces the intended BPMN semantics.
\\
\\
\noindent For TASK blocks $b$:
$$\mathcal{R}_{\text{forward}}(b) =  \mathcal{R}_{backward}(b) = \emptyset$$

\noindent Task blocks generate no synchronization rules as they represent atomic activities that do not decompose into sub-blocks. 
The positive duration constraint is automatically enforced by the qualitative timeline-based planning framework.
\\
\\ 
\noindent For FLOW blocks $b$ with children $b_{\text{before}}$ and $b_{\text{after}}$, we define:

$$\mathcal{R}_{\text{forward}}(b) = \left\{ \begin{array}{ll}
\text{(Ff.1)} & a_0[x_b = \top] \implies \exists a_1[x_{b\_\text{flow}} = \top_{\text{before}}]. \\ 
& \qquad (\text{start}(a_0) = \text{start}(a_1) \wedge \text{end}(a_1) \leq \text{end}(a_0)), \\[0.5em]
\text{(Ff.2)} & a_0[x_b = \top] \implies \exists a_1[x_{b\_\text{flow}} = \top_{\text{after}}]. \\ 
& \qquad (\text{start}(a_1) \leq \text{start}(a_0) \wedge \text{end}(a_0) = \text{end}(a_1)), \\[0.5em]
\text{(Ff.3)} & a_0[x_{b\_\text{flow}} = \top_{\text{before}}] \implies \exists a_1[x_b = \top]. \\ 
& \qquad (\text{start}(a_1) = \text{start}(a_0) \wedge \text{end}(a_0) \leq \text{end}(a_1)), \\[0.5em]
\text{(Ff.4)} & a_0[x_b = \bot] \implies \exists a_1[x_{b\_\text{flow}} = \bot]. \\ 
& \qquad (\text{start}(a_0) = \text{start}(a_1) \wedge \text{end}(a_0) = \text{end}(a_1)), \\[0.5em]
\text{(Ff.5)} & a_0[x_{b\_\text{flow}} = \bot] \implies \exists a_1[x_b = \bot]. \\ 
& \qquad (\text{start}(a_0) = \text{start}(a_1) \wedge \text{end}(a_0) = \text{end}(a_1)), \\[0.5em]
\text{(Ff.6)} & a_0[x_{b\_\text{flow}} = \top_{\text{before}}] \implies \exists a_1[x_{b_{\text{before}}} = \top]. \\ 
& \qquad (\text{start}(a_0) = \text{start}(a_1) \wedge \text{end}(a_0) = \text{end}(a_1)), \\[0.5em]
\text{(Ff.7)} & a_0[x_{b\_\text{flow}} = \top_{\text{after}}] \implies \exists a_1[x_{b_{\text{after}}} = \top]. \\ 
& \qquad (\text{start}(a_0) = \text{start}(a_1) \wedge \text{end}(a_0) = \text{end}(a_1))
\end{array} \right\}$$

$$\mathcal{R}_{\text{backward}}(b) = \left\{ \begin{array}{ll}
\text{(Fb.1)} & a_0[x_{b_{\text{before}}} = \top] \implies \exists a_1[x_{b\_\text{flow}} = \top_{\text{before}}]. \\ 
& \qquad (\text{start}(a_0) = \text{start}(a_1) \wedge \text{end}(a_0) = \text{end}(a_1)), \\[0.5em]
\text{(Fb.2)} & a_0[x_{b_{\text{after}}} = \top] \implies \exists a_1[x_{b\_\text{flow}} = \top_{\text{after}}]. \\ 
& \qquad (\text{start}(a_0) = \text{start}(a_1) \wedge \text{end}(a_0) = \text{end}(a_1))
\end{array} \right\}$$

\noindent Rules Ff.1-3, when paired with the transition function of $x_{b\_\text{flow}}$, effectively impose that block $b$ represents the sequential concatenation of $b_{\text{before}}$ and $b_{\text{after}}$. This constraint emerges through several key mechanisms. The if-and-only-if correspondence forced by rule Ff.6 paired with Fb.1, and rule Ff.7 paired with Fb.2, establishes a bidirectional relationship between the flow control phases and the actual child block executions. Additionally, rule Ff.3 imposes that any $b\_\text{flow}$ interval with value $\top_{\text{before}}$ must be a prefix of some $b$ interval, which, combined with the transition function constraint that $\top_{\text{before}}$ can only transition to $\top_{\text{after}}$, ensures that we can have exactly one $\top_{\text{before}}$ phase followed by just $\top_{\text{before}}$ phase for $x_{b_{flow}}$ inside any $\top$ interval of $b$. The remaining rules (Ff.4-5) provide the necessary constraints for proper disabled state management, ensuring consistency between the parent block and its control flow variable when neither is active.

This sequential execution behavior enforced by FLOW blocks can be observed in the timeline executions depicted in Figures~\ref{fig:critical-path-timeline} and~\ref{fig:non-critical-loop-timeline}. In both execution scenarios, the flow control variables (such as $x_{b_1\_\text{flow}}$, $x_{b_2\_\text{flow}}$, $x_{b_4\_\text{flow}}$, and $x_{b_7\_\text{flow}}$) demonstrate the strict sequential progression from $\top_{\text{before}}$ to $\top_{\text{after}}$ phases, with no overlap between consecutive phases and automatic mutual exclusion ensuring that only one phase can be active at any given time within each flow block execution interval.

The Allen relations appearing in the FLOW block encoding establish its eagerness according to Table~\ref{tab:allen-eagerness}. Rule Ff.1 implements the reflexive relation $a_1 \starts a_0$ with $a_0$ as trigger (row 11), rule Ff.2 implements the reflexive relation $a_1 \ends a_0$ with $a_0$ as trigger (row 8), and rule Ff.3 implements the reflexive relation $a_0 \starts a_1$ with $a_0$ as trigger (row 10). All remaining rules (Ff.4, Ff.5, Ff.6, Ff.7, Fb.1, Fb.2) implement equality relations $a_0$ equals $a_1$ with the respective trigger token (row 19). According to Table~\ref{tab:allen-eagerness}, all these configurations yield ``no'' in the ``Overall ambiguous'' column, confirming that the entire FLOW block encoding maintains eagerness properties essential for deterministic automaton construction. As noted earlier, the reflexive nature of these relations does not affect their eagerness status.
\\ 
\\ 
\noindent For PARALLEL blocks $b$ with children $b^1$ and $b^2$, we define:

$$\mathcal{R}_{\text{forward}}(b) = \left\{ \begin{array}{ll}
\text{(Pf.1)} & a_0[x_b = \top] \implies \exists a_1[x_{b^1} = \top]. \\ 
& \qquad (\text{start}(a_0) = \text{start}(a_1) \wedge \text{end}(a_0) = \text{end}(a_1)), \\[0.5em]
\text{(Pf.2)} & a_0[x_b = \top] \implies \exists a_1[x_{b^2} = \top]. \\ 
& \qquad (\text{start}(a_0) = \text{start}(a_1) \wedge \text{end}(a_0) = \text{end}(a_1))
\end{array} \right\}$$

$$\mathcal{R}_{\text{backward}}(b) = \left\{ \begin{array}{ll}
\text{(Pb.1)} & a_0[x_{b^1} = \top] \implies \exists a_1[x_b = \top]. \\ 
& \qquad (\text{start}(a_0) = \text{start}(a_1) \wedge \text{end}(a_0) = \text{end}(a_1)), \\[0.5em]
\text{(Pb.2)} & a_0[x_{b^2} = \top] \implies \exists a_1[x_b = \top]. \\ 
& \qquad (\text{start}(a_0) = \text{start}(a_1) \wedge \text{end}(a_0) = \text{end}(a_1))
\end{array} \right\}$$

\noindent These rules establish the bidirectional equivalence where block $b$ is enabled if and only if both children $b^1$ and $b^2$ are enabled on exactly the same interval. Note that explicit disabled state constraints are unnecessary since the enabled and disabled states are mutually exclusive by definition of block state variables.

This parallel execution behavior can be observed in Figure~\ref{fig:critical-path-timeline}, where the parallel block $x_{b_5}$ coordinates the simultaneous execution of emergency tests ($x_{b_{11}}$) and emergency imaging ($x_{b_{12}}$) between time points 2 and 6. Both child tasks execute concurrently with identical start and end times, synchronized by the parallel block constraint that enforces their exact temporal alignment. The PARALLEL block encoding introduces no new types of Allen relations beyond those discussed for FLOW blocks, as all rules implement equality relations between tokens.
\\ \\
\noindent For LOOP blocks $b$ with body child $b_{\text{body}}$, we define:

$$\mathcal{R}_{\text{forward}}(b) = \left\{ \begin{array}{ll}
\text{(Lf.1)} & a_0[x_b = \top] \implies \exists a_1[x_{b_{\text{body}}} = \top]. \\ 
& \qquad (\text{start}(a_0) = \text{start}(a_1) \wedge \text{end}(a_1) \leq \text{end}(a_0)), \\[0.5em]
\text{(Lf.2)} & a_0[x_b = \top] \implies \exists a_1[x_{b_{\text{body}}} = \top]. \\ 
& \qquad (\text{start}(a_1) \leq \text{start}(a_0) \wedge \text{end}(a_0) = \text{end}(a_1)), \\[0.5em]
\text{(Lf.3)} & a_0[x_b = \bot] \implies \exists a_1[x_{b_{\text{body}}} = \bot]. \\ 
& \qquad (\text{start}(a_0) = \text{start}(a_1) \wedge \text{end}(a_0) = \text{end}(a_1))
\end{array} \right\}$$

$$\mathcal{R}_{\text{backward}}(b) = \left\{ \begin{array}{ll}
\text{(Lb.1)} & a_0[x_{b_{\text{body}}} = \bot] \implies \exists a_1[x_b = \bot]. \\ 
& \qquad (\text{start}(a_0) = \text{start}(a_1) \wedge \text{end}(a_0) = \text{end}(a_1))
\end{array} \right\}$$

\noindent The bidirectional correspondence between disabled states (Lf.3 and Lb.1) ensures that the loop and its body are disabled on exactly the same intervals, meaning that when one is enabled the other must be too without gaps due to mutual exclusion of enabled and disabled states on block state variables. With only the bidirectional disabled correspondence, we would allow for chains of body executions inside loop intervals and chains of loop intervals inside body executions. To avoid this erroneous latter case, we explicitly require that any enabled loop admits both a prefix body execution (Lf.1) and a suffix body execution (Lf.2), thereby establishing the proper containment ordering where body executions are nested inside loop intervals rather than vice versa.

This iterative execution behavior can be observed in Figure~\ref{fig:non-critical-loop-timeline}, where the loop block $x_{b_9}$ enables multiple consecutive iterations of the monitoring task ($x_{b_{13}}$). The figure demonstrates two monitoring iterations: the first from time 4-7 (length 3) and the second from time 7-9 (length 2), with the loop block spanning exactly the concatenation of both iterations from time 4-9. The LOOP block encoding introduces no new types of Allen relations beyond those discussed for FLOW blocks, utilizing the same starts, ends, and equality relations to establish proper containment and sequencing constraints.
\\ \\

\noindent For XOR blocks $b$ with children $b^{\text{high}}$ and $b^{\text{low}}$, we define:

$$\mathcal{R}_{\text{forward}}(b) = \left\{ \begin{array}{ll}
\text{(Xf.1)} & a_0[x_b = \bot] \implies \exists a_1[x_{b\_\text{decision}} = \bot]. \\ 
& \qquad (\text{start}(a_0) = \text{start}(a_1) \wedge \text{end}(a_0) = \text{end}(a_1)), \\[0.5em]
\text{(Xf.2)} & a_0[x_{b\_\text{decision}} = \bot] \implies \exists a_1[x_b = \bot]. \\ 
& \qquad (\text{start}(a_0) = \text{start}(a_1) \wedge \text{end}(a_0) = \text{end}(a_1)), \\[0.5em]
\text{(Xf.3)} & a_0[x_{b\_\text{decision}} = \top_{\text{high}}] \implies \exists a_1[x_b = \top]. \\ 
& \qquad (\text{start}(a_0) = \text{start}(a_1) \wedge \text{end}(a_0) = \text{end}(a_1)), \\[0.5em]
\text{(Xf.4)} & a_0[x_{b\_\text{decision}} = \top_{\text{low}}] \implies \exists a_1[x_b = \top]. \\ 
& \qquad (\text{start}(a_0) = \text{start}(a_1) \wedge \text{end}(a_0) = \text{end}(a_1)), \\[0.5em]
\text{(Xf.5)} & a_0[x_{b\_\text{decision}} = \top_{\text{high}}] \implies \exists a_1[x_{b^{\text{high}}} = \top]. \\ 
& \qquad (\text{start}(a_0) = \text{start}(a_1) \wedge \text{end}(a_0) = \text{end}(a_1)), \\[0.5em]
\text{(Xf.6)} & a_0[x_{b\_\text{decision}} = \top_{\text{low}}] \implies \exists a_1[x_{b^{\text{low}}} = \top]. \\ 
& \qquad (\text{start}(a_0) = \text{start}(a_1) \wedge \text{end}(a_0) = \text{end}(a_1))
\end{array} \right\}$$

$$\mathcal{R}_{\text{backward}}(b) = \left\{ \begin{array}{ll}
\text{(Xb.1)} & a_0[x_{b^{\text{high}}} = \top] \implies \exists a_1[x_{b\_\text{decision}} = \top_{\text{high}}]. \\ 
& \qquad (\text{start}(a_0) = \text{start}(a_1) \wedge \text{end}(a_0) = \text{end}(a_1)), \\[0.5em]
\text{(Xb.2)} & a_0[x_{b^{\text{low}}} = \top] \implies \exists a_1[x_{b\_\text{decision}} = \top_{\text{low}}]. \\ 
& \qquad (\text{start}(a_0) = \text{start}(a_1) \wedge \text{end}(a_0) = \text{end}(a_1))
\end{array} \right\}$$

\noindent The forward rules establish the decision-driven behavior where the decision variable and main block are synchronized in their disabled states (Xf.1-2), and where decision values trigger both the main block (Xf.3-4) and the corresponding child branches (Xf.5-6). The backward rules ensure that child block activations trigger their respective decision values (Xb.1-2). The values on the decision variable are mutually exclusive by definition since they reside on the same state variable, which automatically ensures that child blocks are disabled when the decision is disabled.

This exclusive choice behavior can be observed in both Figures~\ref{fig:critical-path-timeline} and~\ref{fig:non-critical-loop-timeline}, where the XOR block $x_{b_3}$ implements the critical classification decision. In the critical path scenario, the decision variable $x_{b_3\_\text{dec}}$ takes value $\top_{\text{high}}$ to select the critical path ($x_{b_4}$) with emergency procedures, while in the non-critical scenario it takes value $\top_{\text{low}}$ to select the non-critical path ($x_{b_7}$) with standard care and monitoring. The XOR block encoding introduces no new types of Allen relations beyond those discussed for FLOW blocks, as all rules implement equality relations that ensure perfect synchronization between the decision variable, main block, and selected child branch.

This concludes our encoding of BPMN constructs into the eager fragment. Returning to the comparison with \cite{DBLP:conf/time/CombiOS19}, we note that while limited to the interval relations discussed in Section~\ref{sec:allen}, we can also express constraints in the style of \cite{DBLP:conf/time/CombiOS19} through eager rules by exploiting the compositionality of timelines. This means that plans may always be enriched through additional constraints implemented via synchronization rules that involve the newly introduced timelines and the existing ones, as we will show in the following. As an oversimplified example, consider adding a timeline for patient condition that transitions between unstable and stable states, where once stability is reached it persists: $T_{\text{condition}}(\text{unstable}) = \{\text{stable}, \text{unstable}\}$ and $T_{\text{condition}}(\text{stable}) = \{\text{stable}\}$. We can then specify that the critical path decision always starts with an unstable patient condition through the rule:
$$a_0[x_{b_4} = \top] \implies \exists a_1[x_{\text{condition}} = \text{unstable}]. (\text{start}(a_0) = \text{start}(a_1) \wedge \text{end}(a_1) \leq \text{end}(a_0))$$
This rule is activated only in case of critical path execution. In contrast, we place no constraints on the non-critical path (patients may be unstable but only mildly so). In such situations, we contemplate the case that the monitoring loop, activated only in case of non-critical execution, may cycle through stable conditions to verify that the patient remains stable, and finally we can require that discharge always begins with a stable patient condition through the rule:
$$a_0[x_{b_{16}} = \top] \implies \exists a_1[x_{\text{condition}} = \text{stable}]. (\text{start}(a_0) = \text{start}(a_1) \wedge \text{end}(a_0) = \text{end}(a_1))$$
This second rule is eventually activated for all patients, regardless of their treatment path. Such domain-specific constraints demonstrate how the eager fragment can still express some interesting temporal constraints beyond pure control flow through compositional timeline integration.

\subsection{Scope, extensibility, and limitations of the encoding}

The BPMN encoding presented above is intended as an illustrative case study 
demonstrating the expressiveness of the eager fragment on the major control 
flow patterns. The BPMN constructs not explicitly covered can, in principle, 
be encoded within our formalism; however, several limitations and drawbacks 
should be taken into account, as they may significantly affect the size of 
the resulting encoding and, consequently, the efficiency of the subsequent 
decidability goal, whether plan existence, strategy synthesis, or others.

\paragraph{SESE decomposition.} The efficiency and succinctness of our encoding relies heavily on the SESE 
property of the input BPMN diagram. Since disjunctions are not allowed in 
eager rules, the tree structure of the SESE decomposition is essential: it 
guarantees that, upon completion of the current block, the next block to be 
executed can be identified with certainty, whether it is the next sibling 
in a flow or the next sibling of the parent block, without requiring any 
nondeterministic guessing. Well-structuredness is widely regarded as a 
desirable design principle in the BPMN community, as it promotes readability, 
reduces modeling errors, and facilitates formal 
analysis~\cite{PolyvyanyyGBD12}. Moreover, in most practical cases, 
unstructured diagrams can be transformed into equivalent SESE ones by 
duplicating components: this has been shown to be always possible for acyclic 
process models~\cite{PolyvyanyyGBD12}, and partial results exist for cyclic 
models as well~\cite{KoehlerH04,ChoiKJZ15,PrinzCH22}, although some 
pathological loop constructions are known to resist structuring. All in all, 
the SESE requirement is not restrictive in practice; however, when 
restructuring of a non-SESE diagram is needed, the transformation may 
introduce additional components, resulting in a larger parse tree and, 
consequently, a higher number of state variables. Since $|\mathcal{SV}|$ is 
the parameter that governs the complexity of the resulting planning problem, 
this duplication directly impacts the efficiency.
\paragraph{Swimlanes and message passing.}
Swimlanes and pools are naturally accommodated by our encoding: they simply 
partition the state variables into organizational groups, resulting in a 
collection of independent SESE trees that do not affect the encoding in any 
way. Similarly, message passing constructs --- such as sending and receiving 
messages, signals, or events --- can be handled by introducing a dedicated 
timeline for each communication channel. Such a timeline behaves like a task 
and is shared between the two SESE trees of the communicating pools. The flow 
constraints on each side determine when the respective pool is ready to send 
or receive, and synchronization between the two pools happens in a natural 
and elegant way, practically for free.
\paragraph{Clock events and duration constraints.}
Although our framework is qualitative, clock events and duration constraints 
can be encoded within the eager fragment, at the cost of a more involved 
construction. We briefly sketch the idea for clock events; duration 
constraints for tasks can be handled similarly.
First, a \emph{global clock} timeline $x_{\mathit{clk}}$ is introduced, with a 
single value $\mathit{tick}$ (i.e., $V_{x_{\mathit{clk}}} = \{\mathit{tick}\}$). 
Every token on every other timeline is then required to be \emph{started by} a 
clock tick, corresponding to row~11 of Table~\ref{tab:allen-eagerness} in 
Section~\ref{sec:allen}, so that any value change on any timeline is 
aligned with a fresh tick of the global clock.
For each clock event of duration $d$, two dedicated timelines are introduced. 
The first, the \emph{counter} timeline, has domain 
$\{\bot, 0, 1, \ldots, d\}$ with value transition function 
$T(\bot) = \{0\}$, $T(i) = \{i{+}1\}$ for $0 \leq i < d$, and 
$T(d) = \{\bot\}$, so that the counter cycles through exactly $d$ ticks 
before resetting. The second, the \emph{envelope} timeline, alternates 
between two values: one that spans the entire $\bot$ phase and one that 
spans the full counting phase from $0$ to $d$, providing a coarser 
view of the timer's active and inactive periods.
It should be noted that this construction encodes durations in 
\emph{unary}: a clock event of duration $d$ requires $d{+}2$ values in the 
domain of the counter timeline, so the size of the encoding grows linearly 
with the magnitude of the max durations expressed in the process.
A more compact binary encoding of clocks would not be feasible within the 
eager fragment, as it would necessarily require \emph{disjunctions} to 
express the bit-manipulation logic of binary counting, and 
\emph{equality constraints between non-trigger existential tokens} to 
synchronize the individual bit timelines, the latter corresponding to 
row~21 of Table~\ref{tab:allen-eagerness}, which is ambiguous. Both 
features are incompatible with eager rules.
Another key observation is that, due to the qualitative nature of the 
synchronization rules, individual ticks may have different actual durations; 
however, since all tokens across all timelines are forced to respect the 
same global clock, a successful plan exists in the qualitative encoding if 
and only if a successful plan exists in the intended real-time 
interpretation.
\paragraph{Gateways.}
The most notable BPMN gateway not covered by our encoding is the 
\emph{inclusive OR gateway}, which allows one or more branches to be 
activated depending on runtime conditions. A binary inclusive OR with 
branches $A$ and $B$ can in principle be encoded as a three-way XOR over 
the activation patterns $\{A\}$, $\{B\}$, and $\{A, B\}$, thus avoiding 
disjunctions. However, this encoding requires duplicating the entire 
downstream process subtree for each activation pattern, with all child 
timelines in the duplicated regions renamed to fresh state variables. 
Since $n$-ary inclusive OR gateways can always be decomposed into a tree 
of binary ones, the critical cost factor is the nesting depth: $k$ nested 
binary inclusive OR gateways result in $3^k$ copies of the innermost 
subtree, yielding an encoding that is exponential in the nesting depth.
The \emph{event-based gateway} (deferred choice), where the routing 
decision depends on which among several competing events occurs first, 
can instead be treated as an XOR gateway: the decision is resolved by 
the plan itself, which selects the winning event. In the presence of 
clock events, the encoding becomes more involved due to the need for 
the timer machinery described above, but this additional complexity does 
not affect the size of the plan in the way that inclusive OR gateways do.\\

In summary, the eager fragment can accommodate a broader range of BPMN 
constructs than those explicitly covered in our case study. Swimlanes, 
message passing, and event-based gateways are handled naturally and 
essentially for free, without increasing the complexity of the encoding. 
Clock events and duration constraints can be encoded at the cost of a 
unary representation of durations and a more involved rule structure, 
but without departing from the eager fragment. The main limitations 
arise from inclusive OR gateways, whose encoding requires an exponential 
blowup in the nesting depth, and from unstructured (non-SESE) diagrams, 
whose restructuring, when possible, may introduce additional 
components. In both cases, the resulting increase in the number of state 
variables directly impacts the complexity of the subsequent decidability 
tasks, as $|\mathcal{SV}|$ is the governing parameter of our 
construction.




\section{A maximal subset of Allen's relations} \label{sec:allen}


Allen's interval algebra is a formalism for temporal reasoning introduced in~\cite{Allen83}.
It identifies all possible relations between pairs of time intervals over a linear order and specifies a machinery to reason about them.
%
In this section, we isolate the maximal subset of Allen's relations captured by
the eager fragment of qualitative timeline-based planning. To this end, we show
how to map Allen's relations over tokens in terms of their endpoints, that is,
as conjunctions of atoms over terms $\tokstart(a), \tokstart(b), \tokend(a),
\tokend(b)$, for token names $a$ and $b$.
%
%
Then, we identify the relation encodings that can be expressed by the eager
fragment, according to \autoref{def:eager:rule}.
%
%
Let $a, b \in \toknames$ and $t_1 = t_2$ be an abbreviation for $t_1 \before t_2
\wedge t_2 \before t_1$, for all pairs of terms $t_1,t_2$.
\begin{itemize}
\item$a \ibefore b$ ($b \iafter a$) can be defined as $\tokend(a) \before* \tokstart(b)$.

\item$a \meets b$ ($b \ismet a$) can be defined as $\tokend(a) = \tokstart(b)$.

\item$a \ends b$ ($b \isended a$) can be defined as $\tokstart(b) \before* \tokstart(a) \land \tokend(a) = \tokend(b)$.

\item$a \starts b$ ($b \isstarted a$) can be defined as $\tokstart(a) = \tokstart(b) \land \tokend(a) \before* \tokend(b)$.

\item$a \overlaps b$ ($b \isoverlapped a$) can be defined as $\tokstart(a) \before* \tokstart(b) \land \tokstart(b) \before* \tokend(a) \land \tokend(a) \before* \tokend(b)$.

\item$a \during b$ ($b \icontains a$) can be defined as $\tokstart(b) \before* \tokstart(a) \land \tokend(a) \before* \tokend(b)$.

\item$a \isequal b$ can be defined as $\tokstart(a) = \tokstart(b) \land \tokend(a) = \tokend(b)$.
\end{itemize}

It is not difficult to see that, if one of the tokens, say $a$, is the trigger
token, then the encodings not complying with the definition of eager rule
(\autoref{def:eager:rule}) are the ones for Allen's relations $a \ends b$,
$a \overlaps b$, $a \isoverlapped b$, and $a \during b$ (see also
\autoref{tab:allen-eagerness}).
Thus, the maximal subset of Allen's relations that can be captured by an
instance of the eager fragment of the timeline-based planning problem consists
of relations $a \ibefore b$, $a \iafter b$, $a \meets b$, $a \ismet b$,
$a \isended b$, $a \starts b$,
$a \isstarted b$, $a \icontains b$, and $a \isequal b$.
As an example, consider the constraint $a \overlaps b$ and let $\rulebody = \{
\tokstart(a) \before* \tokstart(b), \tokstart(b) \before* \tokend(a),
\tokend(a) \before* \tokend(b) \}$ be its encoding.
Clearly, the transitive closure $\hat{\rulebody}$ of \rulebody
(cf.~\autoref{sec:fragment}) includes also $\tokstart(b) \before \tokend(a)$,
but it does not include any of $\tokstart(b) \before \tokstart(a)$, $\tokend(a)
\before \tokstart(b)$, and $\tokend(b) \before \tokend(a)$, implying that token
name $b$ is left-ambiguous
(\autoref{def:eager:rule}{\it (\ref{item:left-ambiguous})}).
Moreover, we have that $\hat{\rulebody}$ includes $\tokend(a) \before
\tokend(b)$ but, as already pointed out, it does not include $\tokend(a) \before
\tokstart(b)$, which means that $b$ is right-ambiguous as well
(\autoref{def:eager:rule}{\it (\ref{item:right-ambiguous})}).
Therefore, $b$ is ambiguous (since it is not the trigger token --
see \autoref{def:eager:rule}{\it
  (\ref{item:ambiguous})}), and thus $a \overlaps b$ is not captured by
an eager rule.
A similar argument can be used for $a \ends b$, $a \isoverlapped b$, and
$a \during b$.
If, instead, none of the tokens is a trigger token, then the only Allen's
relations that can be captured by eager rules are are $\ibefore$, $\iafter$,
$\meets$, and $\ismet$.


\autoref{tab:allen-eagerness} provide a complete picture of the status of the
Allen's interval relations with respect to the property of being expressible by
means of eager rules.
As a matter of fact, the table only includes Allen's relations $\ibefore$,
$\meets$, $\ends$, $\starts$, $\overlaps$, $\during$, and $=$.
The status of the remaining relations ($\iafter$, $\ismet$, $\isended$,
$\isstarted$, $\isoverlapped$, $\icontains$) can be easily derived, as each is
the inverse of one of the listed relations.

Every Allen relation can be expressed as a conjunction of atoms, without using
disjunction.
Therefore, being expressible by means of an eager rule amounts to being
expressible by means of an unambiguous one (\cf~\autoref{def:eager:rule}).
Moreover, according to~\autoref{def:eager:rule}, the property of being
unambiguous depends on the role (being or not a trigger token) of the token
names involved in the rule.
Therefore, every relation, which establishes constraints between two token names
$a$ and $b$, is considered with respect to three different scenarios, according
to which token name is the trigger one (column ``Trigger token''):
\begin{enumerate*}[label={\it (\roman*)}]
\item $a$ is the trigger token,
\item $b$ is the trigger token,
\item neither $a$ nor $b$ is the trigger token.
\end{enumerate*}
For example, the first of the three lines for Allen relation $\ends$ (value
``$a$'' in the column ``Trigger token'')
%
%
depicts the status of the corresponding synchronization rule, which constrains
token name $a$ to be a strict suffix of token name $b$, when $a$ is the trigger
token:
\begin{align*}
  a[x_a=v_a]\implies \exists b[x_b=v_b]. \ \tokstart(b) \before* \tokstart(a)
  \wedge \tokend(a) = \tokend(b).
\end{align*}
In this case, being $a$ the trigger token, it is trivially not ambiguous.
This is expressed in the table by the character ``$-$''.
The table also shows, in the columns relevant for token $b$, that $b$ is
left-ambiguous, right-ambiguous, and ambiguous.
Since at least one among $a$ and $b$ is ambiguous, the entire rule is ambiguous,
as indicated by the string ``yes'' in the last column (meaning that the rule
does not belong to the eager fragment of qualitative timeline-based planning).
The next line of the table (value ``$b$'' in the column ``Trigger token'')
considers the rule obtained from the above one, using $b$ as trigger token:
\begin{align*}
  b[x_b=v_b]\implies \exists a[x_a=v_a]. \ \tokstart(b) \before* \tokstart(a)
  \wedge \tokend(a) = \tokend(b).
\end{align*}
In this case, $b$ is trivially not ambiguous.
Token name $a$ is not ambiguous as well, since it is not left-ambiguous (even
though it is right-ambiguous).
Since neither $a$ nor $b$ is ambiguous, the entire rule is unambiguous, as
indicated by the string ``no'' in the last column (and thus it belongs to the
eager fragment of qualitative timeline-based planning).
Furthermore, the next line of the table (value ``none'' in the column ``Trigger
token'') considers the triggerless rule for the same Allen relation:
\begin{align*}
  \top \implies \exists a[x_a=v_a] b[x_b=v_b]. \ \tokstart(b) \before*
  \tokstart(a) \wedge \tokend(a) = \tokend(b).
\end{align*}
As shown in the table, $a$ is not ambiguous (as it is not left-ambiguous), while
$b$ is ambiguous (because it is not the trigger token, and it is both left- and
right-ambiguous); therefore, the
entire rule is ambiguous (and thus it does not belong to the eager fragment of
qualitative timeline-based planning).
As a last example, consider the following rule for Allen relation $\isended$,
constraining token name $b$ to be a strict suffix of token name $a$, when $a$ is
the trigger token:
\begin{align*}
  a[x_a=v_a]\implies \exists b[x_b=v_b]. \ \tokstart(a) \before* \tokstart(b)
  \wedge \tokend(a) = \tokend(b).
\end{align*}
Saying $a \isended b$, with the first token name being the trigger one, amounts
to saying $b \ends a$, with the second token name being the trigger one;
therefore, the rule is not ambiguous, since it corresponds to the second of the
three lines in the table for Allen relation $\ends$, the one where the second
token name is the trigger token.
More precisely, $a$ is not ambiguous since it is the trigger token, and $b$
is not ambiguous since it is not left-ambiguous (it is right-ambiguous, though).

Finally, we observe that \emph{reflexive} variants of the Allen's relations can
also be considered.
These variants are obtained by replacing $\before*$ with $\before$ in each of
the mappings (of Allen's relations in terms of conjunctions of atoms) given
above.
The status of the reflexive variants does not change, for any Allen relation.

\begin{table}[htbp]
\centering
\caption{Eagerness analysis for Allen's relations}
\label{tab:allen-eagerness}
\resizebox{\textwidth}{!}{%
\begin{tabular}{|c|l|c||c|c|c||c|c|c||c|}
\hline
\multirow{2}{*}{\textbf{\#}} & \multirow{2}{*}{\textbf{Allen Relation}} & \multirow{2}{*}{\textbf{\parbox[t]{20mm}{\centering Trigger \\token}}} &
\multicolumn{3}{c||}{\textbf{Token $a$}} & \multicolumn{3}{c||}{\textbf{Token $b$}} & \multirow{2}{*}{\textbf{\parbox[t]{25mm}{\centering Overall ambiguous}}} \\
\cline{4-9}
& & & \textbf{Left-amb} & \textbf{Right-amb} & \textbf{Ambiguous} & \textbf{Left-amb} & \textbf{Right-amb} & \textbf{Ambiguous} & \\
\hline
1 & $a \ibefore b$ & $a$ & - & - & - & no & no & no & no \\
2 & $a \ibefore b$ & $b$ & no & no & no & - & - & - & no \\
3 & $a \ibefore b$ & none & no & no & no & no & no & no & no \\
\hline
4 & $a \meets b$ & $a$ & - & - & - & no & no & no & no \\
5 & $a \meets b$ & $b$ & no & yes & no & - & - & - & no \\
6 & $a \meets b$ & none & no & yes & no & yes & no & no & no \\
\hline
7 & $a \ends b$ & $a$ & - & - & - & yes & yes & yes & yes \\
8 & $a \ends b$ & $b$ & no & yes & no & - & - & - & no \\
9 & $a \ends b$ & none & no & yes & no & yes & yes & yes & yes \\
\hline
10 & $a \starts b$ & $a$ & - & - & - & no & yes & no & no \\
11 & $a \starts b$ & $b$ & no & no & no & - & - & - & no \\
12 & $a \starts b$ & none & yes & no & no & yes & yes & yes & yes \\
\hline
13 & $a \overlaps b$ & $a$ & - & - & - & yes & yes & yes & yes \\
14 & $a \overlaps b$ & $b$ & yes & yes & yes & - & - & - & yes \\
15 & $a \overlaps b$ & none & yes & yes & yes & yes & yes & yes & yes \\
\hline
16 & $a \during b$ & $a$ & - & - & - & yes & yes & yes & yes \\
17 & $a \during b$ & $b$ & no & no & no & - & - & - & no \\
18 & $a \during b$ & none & no & no & no & yes & yes & yes & yes \\
\hline
19 & $a = b$ & $a$ & - & - & - & no & yes & no & no \\
20 & $a = b$ & $b$ & no & yes & no & - & - & - & no \\
21 & $a = b$ & none & yes & yes & yes & yes & yes & yes & yes \\

\hline
\end{tabular}%
}
\end{table}

\section{Conclusions}
\label{sec:conclusion}

In this paper, 
we identified a meaningful fragment of qualitative timeline-based planning (the \emph{eager} fragment)
whose solutions
can be recognized by DFAs of singly exponential size. Specifically, we
identified restrictions on the allowed synchronization rules, which we
called \emph{eager} rules, for which we showed how to build the
corresponding deterministic automaton of exponential size, that
can then be directly exploited to synthesize strategies.
%
We also showed that it is not possible to encode the solution plans for
qualitative timeline-based planning problems (when non-eager rules are allowed)
using deterministic finite automata of exponential size.
Moreover, we demonstrated the practical relevance of the eager fragment
 through a comprehensive BPMN case study that systematically translates all major control
  flow patterns into eager synchronization rules. Last but not least,
we isolated a maximal subset of Allen's relations captured by
the eager fragment.

Whether the eager fragment of qualitative timeline-based planning is maximal or
not is an open question currently under investigation.
Further research directions include
%
a parameterized complexity analysis over the number of synchronization rules and
a characterization in terms of temporal logics, like the one
in~\cite{della2017bounded}. 
Another interesting direction for future work concerns the extension of our approach to dense temporal domains. Timeline-based planning over dense time has been studied by Bozzelli et al. \cite{BozzelliMMP18a}, who showed that the problem is undecidable in full generality, even with a single state variable. Our work deliberately focuses on the qualitative fragment, which abstracts away metric temporal information and retains only the relative order of events. The DFA construction at the core of our approach fundamentally relies on this discrete, qualitative setting, where each letter of the input word encodes a single time step and the automaton tracks rule satisfaction incrementally. Lifting the eager fragment to dense time would require fundamentally different techniques, and we regard this as a challenging and rewarding avenue for future investigation.


\section*{Acknowledgments}

Partially supported by the GNCS 2024 project ``Certificazione, monitoraggio, ed
interpretabilit\`a in sistemi di intelligenza artificiale''.


\appendix

\section{Proof of \autoref{lem:automatonSolutions}}
\label{app:proofSolutionAutomaton}

In this section, we proof \autoref{lem:automatonSolutions} from
\autoref{sec:dfaForSolutions}.
We restate the lemma for the reader's convenience.

\lemAutomatonSolutions*

We first proof the exponential upper bound for the automata size, which is
easier.
Then, we focus on the main claim of the statement, concerning the correctness of
the proposed automata construction.

\subsection{Exponential upper bound}
\label{app:proofSolutionAutomaton:upperbound}

\begin{lemma}
  \label{lem:automatonSolutions:upperbound}
  Let $P = (\SV, S)$ be an eager qualitative timeline-based planning problem.
  The size of $\automAP$ is at most exponential in the size of $P$.
\end{lemma}
\begin{proof}
%
%
  Let $k$ be the largest number of token names in a rule of $P$.
%
%
  Thanks to the linearity condition enjoyed by states of $\automAP$
  (cf. \autoref{def:linearity}), it is not difficult to convince oneself that
  the number of different viewpoints for the same rule in a state $q \in Q_P$ to
  be at most $k$.
  Thus, each state in $Q_P$ contains at most $|S| \times k$ different
  viewpoints.
%

  Therefore, the size of $Q_P$ is at most $|\viewpointsetP|^{(|S| \times k)}$.
  Clearly, $|S| \times k$ is at most polynomial in the size of $P$.
  Since $|\viewpointsetP| \leq \sum_{\Rule \in S} |\viewpointsetR|$ and, as
  already pointed out, $|\viewpointsetR|$ is at most exponential in the size of
  $P$, we can conclude that the size of $Q_P$ is at most exponential in the size
  of $P$.
\end{proof}

\subsection{Correctness of the automata construction}
\label{app:proofSolutionAutomaton:correctness}


\begin{lemma}
  \label{lem:automatonSolutions:onlyif}
  Let $P = (\SV, S)$ be an eager qualitative timeline-based planning problem.
  Each finite word over $\Sigma_\SV$ that encodes a plan over \SV is accepted by
  \automAP if and only if it encodes a solution plan for $P$.
\end{lemma}

First, we define a notion of witness that characterizes solution plans for (not
necessarily eager) disjunction-free qualitative timeline-based planning problems
(cf. \autoref{def:planningProblem}).
Then, in Section~\ref{app:automatonSolutions:canonicalWitness} we introduce a
particular canonical witness, which characterizes solution plans for eager
qualitative timeline-based planning problems.
The thesis follows from the fact that the automata that we defined in
\autoref{sec:dfaForPlans} and \autoref{sec:dfaForSolutions} recognize such
canonical witnesses.

\subsubsection{Matching and relaxed witnesses for (not necessarily eager)
  planning problems}
\label{app:automatonSolutions:matchingAndRelaxedWitness}

Let $P = (\SV, S)$ be a (not necessarily eager) qualitative timeline-based
planning problem, $\Rule \in S$ a rule with trigger $a_0[x_0=v_0]$, and
$\lambda=\seq{\sigma_0,\ldots,\sigma_{|\lambda|-1}}$ a finite word over
$\Sigma_\SV$.
We denote by \triggerslambdaR the set of pairs of natural numbers that
correspond to tokens matching the trigger $a_0$ of \Rule, i.e., tokens for
timeline $x_0$ of value $v_0$.
Formally, for every $m,n \in \mathbb N$, with $m < n$, we have that $(m,n) \in
\triggerslambdaR$ if and only if
\begin{itemize}
\item $\mathit{start}(x_0,v_0) \in \eventsOneBasic{\sigma_m}$,
\item $\mathit{end}(x_0,v_0) \in \eventsOneBasic{\sigma_n}$, and
\item $\{ \mathit{start}(x_0,v_0), \mathit{end}(x_0,v_0) \} \cap
  \eventsOneBasic{\sigma_i} = \emptyset$ for all $i \in \mathbb N$ with $m < i <
  n$.
\end{itemize}
For the sake of a uniform treatment of triggerless and non-triggerless rules, we
set $\triggerslambdaR = \{ \top \}$ for every triggerless rule \Rule and
$\lambda \in \Sigma_\SV^*$.

\begin{definition}[trigger-level witness]
  \label{def:triggerWitness}
  Let $P = (\SV, S)$ be a (not necessarily eager) disjunction-free qualitative
  timeline-based planning problem, $\Rule \in S$, and $\lambda \in
  \Sigma_\SV^*$.
  Moreover, let \rulebody be the only clause of \Rule and $T$ be the set of
  terms (of the form $\tokstart(a)$ and $\tokend(a)$) appearing in \Rule.
  For every $\trgr \in \triggerslambdaR$, a \emph{(trigger-level) witness} of
  \trgr in $\lambda$ is a function $f : T \rightarrow \mathbb N$ such that:
  \begin{enumerate}[label={\em (\roman*)}]
  \item \label{item:fcompatible}
    $f$ is \emph{compatible} with $\lambda$, that is, if
    $\lambda=\seq{\sigma_0,\ldots,\sigma_{|\lambda|-1}}$ and $a[x=v]$ occurs in
    \Rule, then
    \begin{itemize}
    \item $\tokstart(a) \in T$ implies $\mathit{start}(x,v) \in
      \eventsOneBasic{\sigma_{f(\tokstart(a))}}$ and
    \item $\tokend(a) \in T$ implies $\mathit{end}(x,v) \in
      \eventsOneBasic{\sigma_{f(\tokend(a))}}$
    \end{itemize}
  \item \label{item:fSatC}
    $f$ \emph{satisfies} \rulebody, that is, replacing in \rulebody every
    occurrence of every term $t \in T$ with $f(t)$ makes all atoms in \rulebody
    true, and
  \item \label{item:trigger}
    if $tr = (m,n) \in \mathbb N \times \mathbb N$, then $f(\tokstart(a_0)) = m$
    and $f(\tokend(a_0)) = n$, where $a_0[x_0=v_0]$ is the trigger of \Rule
    (this condition does not apply if $tr = \top \not\in \mathbb N \times
    \mathbb N$, i.e, when \Rule is triggerless).
  \end{enumerate}
\end{definition}

\begin{definition}[matching and relaxed (trigger-level) witness]
  \label{def:matchingAndRelaxedWitness}
  Let $P = (\SV, S)$ be a (not necessarily eager) disjunction-free qualitative
  timeline-based planning problem, $\Rule \in S$,
  $\lambda=\seq{\sigma_0,\ldots,\sigma_{|\lambda|-1}} \in \Sigma_\SV^*$, $\trgr
  \in \triggerslambdaR$, and $f : T \rightarrow \mathbb N$ a witness of \trgr in
  $\lambda$, and suppose that $a[x=v]$ occurs in \Rule.

  Then, token name $a$ is \emph{matching} (in $f$ and $\lambda$) if and only if
  $\{ \tokstart(a), \tokend(a) \} \not\subseteq T$ or $\{ \mathit{start}(x,v),
  \mathit{end}(x,v) \} \cap \eventsOneBasic{\sigma_i} = \emptyset$ for all $i
  \in \mathbb N$ with $f(\tokstart(a)) < i < f(\tokend(a))$.
  Note that if $a$ is the trigger token of \Rule, then it is matching.

  If $a$ is not matching, then we define $\matchStartlambdafa = \max_i \{ i \in
  \mathbb N \mid f(\tokstart(a)) < i < f(\tokend(a)) \text{ and }
  \mathit{start}(x,v) \in \eventsOneBasic{\sigma_i} \}$.
  Observe that if $\lambda$ encodes a plan over \SV, then \matchStartFunlambdaf
  is well defined and $f(\tokstart(a)) < \matchStartlambdafa$.

  Moreover, witness $f$ is a \emph{matching witness} of \trgr in $\lambda$ if
  and only if all token names appearing in $T$ are matching.
  Instead, it is a \emph{relaxed witness} of \trgr in $\lambda$ if and only if
  all non-matching token names $a$ appearing in $T$
%
%
  violate \autoref{item:left-ambiguous-ii} of \autoref{def:eager:rule}, that is,
  for all terms $t \not\in \{ \tokstart(a), \tokend(a) \}$ we have that
  $\tokstart(a) \before t \in \hat \rulebody$ implies $\tokend(a) \before t \in
  \hat \rulebody$.
\end{definition}

We show in the following how to obtain matching trigger-level witnesses from
relaxed ones.
\begin{definition}
  \label{def:fromRelaxedToMatchingWitness}
  Let $P = (\SV, S)$ be a (not necessarily eager) disjunction-free qualitative
  timeline-based planning problem and $f : T \rightarrow \mathbb N$ a relaxed
  witness of \trgr in $\lambda$, for some finite word $\lambda$ over
  $\Sigma_\SV$ that encodes a plan over \SV and $\trgr \in \triggerslambdaR$,
  with $\Rule \in S$.
  We define the \emph{matching version} of $f$, denoted $\hat f$, as the
  function from $T$ to $\mathbb N$ such that:
  \begin{itemize}
  \item $\hat f(\tokend(a)) = f(\tokend(a))$ for all $\tokend(a) \in T$,
  \item $\hat f(\tokstart(a)) = f(\tokstart(a))$ for all $\tokstart(a) \in T$
    where $a$ is a matching token name,
  \item $\hat f(\tokstart(a)) = \matchStartlambdafa$ for all $\tokstart(a) \in
    T$ where $a$ is not a matching token name.
  \end{itemize}

\end{definition}

\begin{lemma}
  \label{lem:fromRelaxedToMatchingWitness}
  Let $P = (\SV, S)$ be a (not necessarily eager) disjunction-free qualitative
  timeline-based planning problem and $f : T \rightarrow \mathbb N$ a relaxed
  witness of \trgr in $\lambda$, for some finite word $\lambda$ over
  $\Sigma_\SV$ that encodes a plan over \SV and $\trgr \in \triggerslambdaR$,
  with $\Rule \in S$.
  Then, $\hat f$ is a matching witness of \trgr in $\lambda$.

\end{lemma}
\begin{proof}
  We only need to show that $\hat f$ is a witness of \trgr in $\lambda$.
  Being a matching one follows from the definition of function
  \matchStartFunlambdaf.
  \autoref{item:fcompatible} and \autoref{item:trigger} of
  \autoref{def:triggerWitness} are clearly fulfilled.

  Thus we focus on \autoref{item:fSatC}.
  Let \rulebody be the only clause of \Rule and $\alpha \in \rulebody$ be $t_1
  \triangleleft t_2$ for some $t_1,t_2 \in T$ and $\triangleleft \in \{ \leq,
  <\}$.
  We show that $\hat f$ \emph{satisfies} $\alpha$, that is $\hat f(t_1)
  \triangleleft \hat f(t_2)$ holds true.
  By definition of $\hat f$, it holds that $f(t) \leq \hat f(t)$ for all $t \in
  T$.
  If $f(t_1) = \hat f(t_1)$, then we have $\hat f(t_1) = f(t_1) \triangleleft
  f(t_2) \leq \hat f(t_2)$, which implies $\hat f(t_1) \triangleleft \hat
  f(t_2)$.
  Thus, let us assume $f(t_1) < \hat f(t_1)$.
  Since $\hat f(\tokend(a)) = f(\tokend(a))$ for all token names $a$, it must be
  $t_1 = \tokstart(a)$ for some token name $a$; moreover, since $\hat
  f(\tokstart(a)) = f(\tokstart(a))$ for all $\tokstart(a) \in T$ where $a$ is a
  matching token name, we have that $a$ is not matching.
  If $t_2 = t_1$, then $\triangleleft = \leq$ (indeed, if it was $\triangleleft
  = <$, then we would have a contradiction with the hypothesis that $f$
  satisfies $t_1 < t_2$), and $\hat f(t_1) \leq \hat f(t_2)$ trivially holds.
  If $t_2 = \tokend(a)$, then, by definition of $\hat f$, it holds that $\hat
  f(t_1) = \hat f(\tokstart(a)) < \hat f(\tokend(a)) = \hat f(t_2)$, which
  implies $\hat f(t_1) < \hat f(t_2)$, and thus $\hat f(t_1) \triangleleft \hat
  f(t_2)$.
  Finally, let us consider the last case, with $t_2 \not\in \{ \tokstart(a),
  \tokend(a) \}$.
  Since $f$ is a relaxed witness and $a$ is not matching, for all terms $t
  \not\in \{ \tokstart(a), \tokend(a) \}$ we have that $\tokstart(a) \before t
  \in \hat \rulebody$ implies $\tokend(a) \before t \in \hat \rulebody$, which
  means that $\tokend(a) \before t_2 \in \hat \rulebody$.
  Because $f$ satisfies $\rulebody$, we have that $f(\tokend(a)) \leq f(t_2)$,
  and thus it holds that $\hat f(\tokstart(a)) < \hat f(\tokend(a)) =
  f(\tokend(a)) \leq f(t_2) \leq \hat f(t_2)$, which, in turn, implies $\hat
  f(\tokstart(a)) < \hat f(t_2)$.
  Hence, $\hat f(t_1) \triangleleft \hat f(t_2)$ holds true.
\end{proof}

Next, we lift the notion of witness in order to characterize solution plans for
(not necessarily eager) disjunction-free qualitative timeline-based planning
problem.

\begin{definition}[rule-level witness]
  Let $P = (\SV, S)$ be a (not necessarily eager) disjunction-free qualitative
  timeline-based planning problem, $\Rule \in S$, and $\lambda \in
  \Sigma_\SV^*$.
  A \emph{(rule-level) witness} of \Rule in $\lambda$ is a sequence $w$ of
  functions $\langle f^{\trgr} \rangle_{\trgr \in \triggerslambdaR}$, where
  $f^{\trgr}$ is a (trigger-level) witness of \trgr in $\lambda$ for all $\trgr
  \in \triggerslambdaR$.
  Moreover, $w$ is matching (resp., relaxed) if so are all its components.
\end{definition}

\begin{definition}[problem-level witness]
  Let $P = (\SV, S)$ be a (not necessarily eager) disjunction-free qualitative
  timeline-based planning problem and $\lambda \in \Sigma_\SV^*$.
  A \emph{(problem-level) witness} of $S$ in $\lambda$ is a sequence $\mathfrak
  w$ of functions $\langle w^{\Rule} \rangle_{\Rule \in S}$, where $w^{\Rule}$
  is a (rule-level) witness of \Rule in $\lambda$ for all $\Rule \in S$.
  Moreover, $\mathfrak w$ is matching (resp., relaxed) if so are all its
  components.
\end{definition}

Clearly, a (trigger-/rule-/problem-level) matching witness is also a
(trigger-/rule-/problem-level) relaxed one, but the converse does not hold in
general.
Moreover, it is obvious that problem-level matching witnesses characterize
solution plans, also for non-eager instances of the planning problem.
\begin{remark}
  \label{rem:solutionIffWitness}
  Let $P = (\SV, S)$ be a (not necessarily eager) disjunction-free qualitative
  timeline-based planning problem and $\lambda$ a finite word over $\Sigma_\SV$
  that encodes a plan over \SV.
  Then, $\lambda$ encodes a solution plan for $P$ if and only if there is a
  matching witness of $S$ in $\lambda$.
\end{remark}

It is less obvious that relaxed witnesses characterize solution plans as well.
\begin{lemma}
  \label{lem:solutionIffRelaxedWitness}
  Let $P = (\SV, S)$ be a (not necessarily eager) disjunction-free qualitative
  timeline-based planning problem and $\lambda$ a finite word over $\Sigma_\SV$
  that encodes a plan over \SV.
  Then, $\lambda$ encodes a solution plan for $P$ if and only if there is a
  relaxed witness of $S$ in $\lambda$.
\end{lemma}
\begin{proof}
  Thanks to \autoref{rem:solutionIffWitness}, it is enough to show that if a
  relaxed witness of $S$ in $\lambda$ exists, then a matching one exists as
  well.
  Let $\mathfrak w = \langle w^\Rule \rangle_{\Rule \in S}$ be a (problem-level)
  relaxed witness of $S$ in $\lambda$, where $w^\Rule$ is a (rule-level) relaxed
  witness of \Rule in $\lambda$, for all $\Rule \in S$.
  Then, the sequence $\hat{w^\Rule}$, obtained from $w^\Rule$ by replacing each
  (trigger-level) witness $f$ with its matching version $\hat f$, is a
  (rule-level) matching witness of \Rule in $\lambda$, for all $\Rule \in S$.
  Finally, $\hat{\mathfrak w} = \langle \hat{w^\Rule} \rangle_{\Rule \in S}$ is
  a matching witness of $S$ in $\lambda$.
\end{proof}

\subsubsection{Canonical eager witnesses for eager planning problems}
\label{app:automatonSolutions:canonicalWitness}

For an eager synchronization rule \Rule, whose unique clause is \rulebody, and a
term $t$ occurring in it, we define sets \predeceqRt of \emph{predecessors} and
\predecstrRt of \emph{strict predecessors} of $t$:

\bigskip

{\centering

  $
  \begin{array}{l@{\hspace{1mm}}l}
    \predecstrRt
    & = \{ s \mid s \before* t \in \hat \rulebody \} \\[3mm]

    \predeceqRt
    & = \{ s \mid s \before t \in \hat \rulebody \text{ and } t \before
      s \not\in \hat \rulebody, \}
  \end{array}
  $

}

\bigskip

\noindent Clearly, if $s \before t \in \hat \rulebody$, then $\predecstr{R}{s}
\subseteq \predecstrRt$ and $\predeceq{R}{s} \subseteq \predeceqRt$; therefore,
$t_1 \equiv t_2 \in \hat \rulebody$ implies $\predecstr{R}{s} = \predecstrRt$
and $\predeceq{R}{s} = \predeceqRt$.

Now, let $P = (\SV, S)$ be an eager qualitative timeline-based planning problem,
$\Rule \in S$, and $\lambda=\seq{\sigma_0,\ldots,\sigma_{|\lambda|-1}} \in
\Sigma_\SV^*$.
Moreover, let \rulebody be the only clause of \Rule and $T$ be the set of terms
(of the form $\tokstart(a)$ and $\tokend(a)$) appearing in \Rule.
For every $\trgr \in \triggerslambdaR$, we define a function $f^\trgr_{\lambda}
: T \rightarrow \mathbb N$, and then we show that (since $P$ is eager) if there
is a relaxed witness of \trgr in $\lambda$, then $f^\trgr_{\lambda}$ is one such
witness.
To this end we need to define an auxiliary functions, namely $\lowFuntrlambda :
T \rightarrow \mathbb N$.
%
%
Functions $f^\trgr_{\lambda}$ and \lowFuntrlambda are defined by mutual
induction as follows.
Let $a_0$ be the trigger token of \Rule, unless \Rule is triggerless, and $t \in
T \setminus \{ \tokstart(a_0), \tokend(a_0) \}$.
\begin{itemize}
\item If \Rule is not triggerless (thus $\trgr \neq \top$)), let $\trgr = (m,n)
  \in \mathbb N \times \mathbb N$; then $f^\trgr_{\lambda}(\tokstart(a_0)) = m$
  and $f^\trgr_{\lambda}(\tokend(a_0)) = n$.
\item If $\predeceqRt \cup \predecstrRt = \emptyset$, then
  \begin{itemize}
  \item $\lowtrlambdat = 0$ and
  \item $f^{\trgr}_{\lambda}(t) = \min\{ x \in \mathbb N \mid
    \termToEvent{[t]_\equiv} \subseteq \eventsOnesigmax \}$.
  \end{itemize}
\item Otherwise (i.e., $\predeceqRt \cup \predecstrRt \neq \emptyset$)
  \begin{itemize}
  \item $\lowtrlambdat = \max( \{ f^\trgr_{\lambda}(s) \mid s \in \predeceqRt \}
    \cup \{ f^\trgr_{\lambda}(s)+1 \mid s \in \predecstrRt \})$ and
  \item $f^{\trgr}_{\lambda}(t) = \min\{ x \in \mathbb N \mid x \geq
    \lowtrlambdat \text{ and } \termToEvent{[t]_\equiv} \subseteq
    \eventsOnesigmax \}$.
  \end{itemize}
\end{itemize}

It is immediate to see that when $f^\trgr_\lambda$ is well defined (in
particular, it is total), it satisfies all atoms $t_1 \before t_2$ and $t_1
\before* t_2$ in \rulebody, as long as $t_2$ does not involve a trigger token,
and thus, in order to check whether it is a witness of \trgr in $\lambda$
(cf. \autoref{def:triggerWitness}), it suffices to verify atoms $t_1 \before
t_2$ and $t_1 \before* t_2$ in \rulebody, where $t_2$ involves a trigger token.
The other conditions of \autoref{def:triggerWitness} are clearly satisfied.

\begin{lemma}
  \label{lem:canonicWellDefined}
  Let $P = (\SV, S)$ be an eager qualitative timeline-based planning problem,
  $\Rule \in S$, $\lambda=\seq{\sigma_0,\ldots,\sigma_{|\lambda|-1}} \in
  \Sigma_\SV^*$, and $\trgr \in \triggerslambdaR$.
  If a witness $f : T \rightarrow \mathbb N$ of \trgr in $\lambda$ exists, then
  $f^{\trgr}_{\lambda} : T \rightarrow \mathbb N$ is well defined and such that
  $f^{\trgr}_{\lambda}(t) \leq f(t)$ for all $t \in T$.
\end{lemma}
\begin{proof}
  Let $t : T \rightarrow \mathbb N$ be a witness of \trgr in $\lambda$ and $a_0$
  be the trigger token of \Rule, unless \Rule is triggerless.
  If $\tokstart(a_0), \tokend(a_0) \in T$ (i.e., \Rule is not triggerless and
  thus $\trgr = (m,n)$ for some natural numbers $m$ and $n$), then
  $f^{\trgr}_{\lambda}(\tokstart(a_0)) = m = f(\tokstart(a_0))$ and
  $f^{\trgr}_{\lambda}(\tokend(a_0)) = n = f(\tokend(a_0))$.

  Let $t \in T \setminus \{ \tokstart(a_0), \tokend(a_0) \}$.
  Since $f$ is a witness of \trgr in $\lambda$, we have that
  \begin{equation}
    \label{eq:fImpliesEvents}
    f(t) = i \text{ implies } \termToEvent{[t]_\equiv} \subseteq
    \eventsOneBasic{\sigma_i}.
  \end{equation}
  We show by induction that
  \begin{enumerate*}[label={\em(\roman*)}]
  \item \label{it:lowLessThanf} \lowtrlambdat id defined and $\lowtrlambdat \leq
    f(t)$, and
  \item \label{it:canonicLessThanf} $f^{\trgr}_{\lambda}(t)$ is defined and
    $f^{\trgr}_{\lambda}(t) \leq f(t)$.
  \end{enumerate*}
  If $\predeceqRt \cup \predecstrRt = \emptyset$, then \ref{it:lowLessThanf}
  holds trivially, while \ref{it:canonicLessThanf} follows from
  \autoref{eq:fImpliesEvents}.
  If, instead, $\predeceqRt \cup \predecstrRt \neq \emptyset$, then
  \ref{it:canonicLessThanf} immediately follows from \ref{it:lowLessThanf} and
  \autoref{eq:fImpliesEvents}, while we proceed as follows to prove
  \ref{it:lowLessThanf}.
  By definition of \lowFuntrlambda, we have that $\lowtrlambdat =
  f^\trgr_{\lambda}(s)$ for some $s$ such that $s \before t \in \hat \rulebody$
  or $\lowtrlambdat = f^\trgr_{\lambda}(s)+1$ for some $s$ such that $s \before*
  t \in \hat \rulebody$.
  Observe that $f^\trgr_{\lambda}(s)$ is defined by induction, and thus
  \lowtrlambdat is defined, too.
  By induction, it also holds that $f^\trgr_{\lambda}(s) \leq f(s)$.
  In the former case ($\lowtrlambdat = f^\trgr_{\lambda}(s)$ and $s \before t
  \in \hat \rulebody$), we have that $f(s) \leq f(t)$, and thus it holds:

  {\centering

    $\lowtrlambdat = f^\trgr_{\lambda}(s) \leq f(s) \leq f(t)$;

  }

  \noindent hence \ref{it:lowLessThanf}.
  In the latter case ($\lowtrlambdat = f^\trgr_{\lambda}(s)+1$ and $s \before* t
  \in \hat \rulebody$), we have that $f(s) < f(t)$, and thus it holds:

  {\centering

    $\lowtrlambdat = f^\trgr_{\lambda}(s) + 1 \leq f(s) + 1 < f(t) + 1$;

  }

  \noindent hence \ref{it:lowLessThanf}.
\end{proof}
Thanks to \autoref{lem:canonicWellDefined}, if a witness of \trgr in $\lambda$
exists, then $f^\trgr_{\lambda}$ is a witness of \trgr in $\lambda$, too.
If, in addition, it is a relaxed one
(cf. \autoref{def:matchingAndRelaxedWitness}), then we call it the
\emph{canonical eager witness} of \trgr in $\lambda$.

\begin{lemma}
  \label{lem:canonicalEagerWitness}
  Let $P = (\SV, S)$ be an eager qualitative timeline-based planning problem,
  $\Rule \in S$, $\lambda$ a finite word over $\Sigma_\SV$ that encodes a plan
  over \SV, and $\trgr \in \triggerslambdaR$.
  Then, there is a relaxed witness of \trgr in $\lambda$ if and only if
  $f^\trgr_{\lambda} : T \rightarrow \mathbb N$ is the canonical eager witness of
  \trgr in $\lambda$.
\end{lemma}
\begin{proof}
  The ``if'' direction of the claim trivially holds, since the canonical eager
  witness of \trgr in $\lambda$ is, by definition, a relaxed one.
  Therefore, it suffices to prove that if a relaxed witness of \trgr in
  $\lambda$ exists, then $f^\trgr_{\lambda}$ is the canonical eager witness of
  \trgr in $\lambda$.
  As already observed, since a relaxed witness of \trgr in $\lambda$ exists,
  $f^\trgr_{\lambda}$ is indeed a witness of \trgr in $\lambda$.
  In the following, we show that all non-matching token names $a$ appearing in
  $T$ violate \autoref{item:left-ambiguous-ii} of \autoref{def:eager:rule}.
  Towards a contradiction, assume that there is a token name $a$ that is not
  matching in $f^\trgr_{\lambda}$ and $\lambda$ for which
  \autoref{item:left-ambiguous-ii} of \autoref{def:eager:rule} holds true.
  Notice that $a$ is matching in $f$ and $\lambda$, or $f$ would not be a
  relaxed witness.
  Therefore it must be $f^\trgr_{\lambda}(\tokstart(a)) < f(\tokstart(a))$:
  indeed, if it was $f^\trgr_{\lambda}(\tokstart(a)) = f(\tokstart(a))$, then it
  would be $f^\trgr_{\lambda}(\tokend(a)) > f(\tokend(a))$ (otherwise $a$ is
  matching in $f^\trgr_{\lambda}$ and $\lambda$), which is in contradiction with
  \autoref{lem:canonicWellDefined}.
  We show that $a$ satisfies also \autoref{item:left-ambiguous-i} and
  \autoref{item:right-ambiguous} of \autoref{def:eager:rule}.
  Since $a$ is not a matching token name, it is not a trigger token either
  (cf. \autoref{def:matchingAndRelaxedWitness}), and thus it is ambiguous, which
  means that \Rule, and thus $P$, are not eager (cf. \autoref{def:eager:rule}),
  which contradicts the hypothesis of $P$ being eager.

  First, we focus on \autoref{item:left-ambiguous-i} of
  \autoref{def:eager:rule}.
  If \Rule is triggerless, then the condition is vacuously verified.
  Otherwise, let $a_0$ be the trigger token of \Rule and let $\trgr = (m,n)$ for
  $m, n \in \mathbb N$.
  If $\tokstart(a) \equiv \tokstart(a_0) \in \hat \rulebody$, then
  $f^\trgr_{\lambda}(\tokstart(a)) = f^\trgr_{\lambda}(\tokstart(a_0)) = m =
  f(\tokstart(a_0)) = f(\tokstart(a))$, which is in contradiction with
  $f^\trgr_{\lambda}(\tokstart(a)) < f(\tokstart(a))$.
  If $\tokstart(a) \equiv \tokend(a_0) \in \hat \rulebody$, then
  $f^\trgr_{\lambda}(\tokstart(a)) = f^\trgr_{\lambda}(\tokend(a_0)) = n =
  f(\tokend(a_0)) = f(\tokstart(a))$, which is in contradiction with
  $f^\trgr_{\lambda}(\tokstart(a)) < f(\tokstart(a))$.
  Therefore, it must be $\tokstart(a) \equiv \tokstart(a_0) \notin \hat
  \rulebody$ and $\tokstart(a) \equiv \tokend(a_0) \notin \hat \rulebody$, which
  means that \autoref{item:left-ambiguous-i} of \autoref{def:eager:rule} is
  satisfied.

  We turn now to \autoref{item:right-ambiguous} of \autoref{def:eager:rule}.
  Let $\lambda=\seq{\sigma_0,\ldots,\sigma_{|\lambda|-1}}$ and suppose $a[x=v]$
  occurs in \Rule.
  Since $\lambda$ encodes a plan and $a$ is not a matching token name in
  $f^\trgr_{\lambda}$ and $\lambda$, there is a natural number $i$ such that
  $f^\trgr_{\lambda}(\tokstart(a)) < i < f^\trgr_{\lambda}(\tokend(a))$ and
  $\mathit{end}(x,v) \in \eventsOneBasic{\sigma_i}$.
  Let $\hat i$ be the smallest among such numbers, that is, $\hat i = \min \{ i
  \in \mathbb N \mid f^\trgr_{\lambda}(\tokstart(a)) < i <
  f^\trgr_{\lambda}(\tokend(a))$ and $\mathit{end}(x,v) \in
  \eventsOneBasic{\sigma_i}\}$.
  There are two possible reason for $f^\trgr_{\lambda}(\tokend(a)) > \hat i$:
  \begin{itemize}
  \item $\hat i < \lowtrlambda{\tokend(a)}$ or
  \item $\termToEvent{[\tokend(a)]_\equiv} \not\subseteq
    \eventsOneBasic{\sigma_{\hat i}}$.
  \end{itemize}
  In the former case, by the definition of \lowFuntrlambda, we have that there
  exists a term $t$ such that $t \before \tokend(a) \in \hat \rulebody$,
  $\tokend(a) \before t \notin \hat \rulebody$, and $f^\trgr_{\lambda}(t) \geq
  \hat i$.
  By $\tokend(a) \before t \notin \hat \rulebody$, it holds that $\tokend(a)
  \equiv t \notin \hat \rulebody$, and thus $t \neq \tokend(a)$.
  It also holds that $t \before \tokstart(a) \notin \hat \rulebody$, or
  $f^\trgr_{\lambda}$ would not satisfy \rulebody, because of
  $f^\trgr_{\lambda}(t) \geq \hat i > f^\trgr_{\lambda}(\tokstart(a))$.
  Therefore, $t \notin \{ \tokstart(a), \tokend(a) \}$, and
  \autoref{item:right-ambiguous} of \autoref{def:eager:rule} is verified.
  In the latter case (i.e., $\termToEvent{[\tokend(a)]_\equiv} \not\subseteq
  \eventsOneBasic{\sigma_{\hat i}}$), since $\mathit{end}(x,v) \in
  \eventsOneBasic{\sigma_{\hat i}}$, there is $t \neq \tokend(a)$ such that $t
  \equiv \tokend(a) \in \hat \rulebody$ and $\termToEvent{t} \notin
  \eventsOneBasic{\sigma_{\hat i}}$.
  Clearly, $t \neq \tokstart(a)$ and $t \before \tokstart(a) \notin \hat
  \rulebody$ (because $\tokstart(a) \before* \tokend(a) \in \hat \rulebody$).
  Therefore, $t \notin \{ \tokstart(a), \tokend(a) \}$, and
  \autoref{item:right-ambiguous} of \autoref{def:eager:rule} is verified.
\end{proof}

\begin{corollary}
  \label{cor:solutionIffCanonicalWitness}
  Let $P = (\SV, S)$ be an eager qualitative timeline-based planning problem and
  $\lambda$ a finite word over $\Sigma_\SV$ that encodes a plan over \SV.
  Then, $\lambda$ encodes a solution plan for $P$ if and only if
  $f^{\trgr}_{\lambda}$ is the canonical eager witness of \trgr in $\lambda$ for
  all $\Rule \in S$ and $\trgr \in \triggerslambdaR$.
\end{corollary}

\subsubsection{Correspondence between automata and witnesses}
\label{app:automatonSolutions:automataIffWitnesses}

In this section, we establish a correspondence between automata and witnesses.
The claim then follows.
Let $P = (\SV, S)$ be an eager qualitative timeline-based planning problem,
$\lambda=\seq{\sigma_0,\ldots,\sigma_{n-1}}$ a finite word over $\Sigma_\SV$
that encodes a plan over \SV, and $\pi = q_0 \trans{\sigma_0} q_1
\trans{\sigma_1} \ldots \trans{\sigma_{n-1}} q_n$ the run of \automAP on
$\lambda$.
We associate to $\pi$ a set \funsetpi of function and we show that \automAP
accepts $\lambda$ if and only if \funsetpi is the set of canonical eager
witnesses of \trgr in $\lambda$ for all $\Rule \in S$ and $\trgr \in \Rule$.

First, for every $i \in \{ 1, \ldots, n \}$, with $q_i \neq \sinkAP$, and every
$\viewpointV = (G,K) \in q_i$, we define the \emph{pre-image} of \viewpointV in
$q_{i}$, denoted \preimage{q_i}{\viewpointV}, as the set $\{ \viewpointVprime
\in q_{i-1} \mid \evolviewVprime{\sigma_{i-1}} = \viewpointV \}$.
Notice that $\preimage{q_i}{\viewpointV} \neq \emptyset$.
Then, \funsetqiV is a set of functions from terms (of the forms $\tokstart(a)$
and $\tokend(a)$) to natural numbers defined inductively as follows.
\begin{itemize}
\item $\funsetqZeroV = \{ \emptyset \}$ for all $\viewpointV \in q_0$ (the
  empty function is associated to \funsetqZeroV).
\item $\funsetqiV = \{ f[K \setminus K' \mapsto i-1] \mid f \in
  \viewpointVprime, \viewpointVprime = (G,K') \in \preimageqiV \}$, where, for a
  function $f$, a set of sets of terms $K$, with $\kappa \cap \domf = \emptyset$
  for all $\kappa \in K$, and a natural number $n$, we denote by $f[K \mapsto
  n]$ the function obtained from $f$ by assigning value $n$ to all terms $t \in
  \kappa$ for some set of terms $\kappa \in K$.
  Formally, $f[K \mapsto n](t) = f(t)$ if $t \in \dom{f}$, $f[K \mapsto n](t) =
  n$ if $t \in \bigcup_{\kappa \in K} \kappa$, $f[K \mapsto n](t)$ is undefined
  otherwise.
\end{itemize}
Notice that, for every $f \in \funsetqiV$, it holds that $\domf =
\bigcup_{[t]_{\equiv} \in K}[t]_{\equiv}$, and that $f(t) < i$, for all $t \in
\domf$.
Consequently, it is not difficult to see that $f$ satisfies all atoms $t_1
\before t_2 \in \rulebody$ (resp., $t_1 \before* t_2 \in \rulebody$), where
$t_1,t_2 \in \domf$ and \rulebody is the unique clause of \rulefunV.
Moreover, if \rulefunV has trigger $a_0[x_0=v_0]$ and \viewpointV is enabled
(i.e., $[\tokstart(a_0)]_{\equiv} \in K$), then $f$ identifies univocally a
trigger in \triggerslambdaR, denoted by \triggerFunlambdaf and defined as
$\triggerFunlambdaf = (f(\tokstart(a_0)), \matchEndlambdafaZero) \in \mathbb N
\times \mathbb N$, where, \matchEndlambdafaZero is defined analogously to
\matchStartlambdafaZero, that is, $\matchEndlambdafaZero = \min_i \{ i \in
\mathbb N \mid f(\tokstart(a_0)) < i \text{ and } \mathit{end}(x_0,v_0) \in
\eventsOneBasic{\sigma_i} \}$.
If \rulefunV is triggerless, then we set $\triggerFunlambdaf = \top$.

\begin{lemma}
  \label{lem:automataAcceptWitnesses}
  Let $P = (\SV, S)$ be an eager qualitative timeline-based planning problem,
  $\lambda=\seq{\sigma_0,\ldots,\sigma_{n-1}}$ a finite word over $\Sigma_\SV$
  that encodes a plan over \SV, and $\pi = q_0 \trans{\sigma_0} q_1
  \trans{\sigma_1} \ldots \trans{\sigma_{n-1}} q_n$ the run of \automAP on
  $\lambda$.
  For all $i \in \{ 0, \ldots, n \}$, with $q_i \neq \sinkAP$, all $\viewpointV
  \in q_i$, with \viewpointV enabled, and all $f \in \funsetqiV$, it holds that
  $f(t) = j$ if and only if $f^\trgr_{\lambda}(t) = j$ for all $t \in T$ and all
  natural numbers $j < i$,
%
%
  where $\trgr=\triggerFunlambdaf$ and $T$ is the
  set of terms (of the form $\tokstart(a)$ and $\tokend(a)$) appearing in
  \rulefunV.

\end{lemma}
\begin{proof}
  We prove, by induction on $t$, the following equivalent statement.
  For all $t \in T$
%
%
  and all $j \in \{ 0, \ldots,
  i \}$, it holds that $f(t) < j$ if and only if $f^\trgr_{\lambda}(t) < j$.
  Assume $f(t') < j$ if and only if $f^\trgr_{\lambda}(t') < j$ holds true for
  all $j \in \{ 0, \ldots, i \}$ and all terms $t'$ such that $t' \before t \in
  \hat \rulebody$ and $t \before* t' \notin \hat \rulebody$ (inductive
  hypothesis).

  \medskip

  \noindent{\em ``only if'' direction.} Assume $f(t) < j$.
  We show that $f^\trgr_{\lambda}(t) < j$ holds.
  From $f(t) < j$, there are a natural number $h < j$, viewpoints $\viewpointVh
  = (G, K_{h}) \in q_h$ and $ \viewpointVhPlusOne = (G, K_{h+1}) \in q_{h+1}$,
  and functions $f_h \in \funsetqhVh$ and $f_{h+1} \in
  \funsetqhPlusOneVhPlusOne$ such that $\evolviewBasic{\viewpointVh}{\sigma_{h}}
  = \viewpointVhPlusOne$, $f_{h+1} = f_{h}[K_{h+1} \setminus K_h \mapsto h]$,
  and $t \in \bigcup_{[s]_{\equiv} \in
    K_{h+1} \setminus K_h} [s]_{\equiv}$ (i.e., $t \in [s]_{\equiv}$ for some
  $[s]_{\equiv} \in K_{h+1} \setminus K_h$).
  To prove that $f^\trgr_{\lambda}(t) < j$ holds, we show the following:
  \begin{enumerate}[label={\em(\roman*)}]
  \item \label{it:onlyIf:leq}
    $h \geq f^\trgr_{\lambda}(t')$ for all $t'$ with $t' \before t \in \hat
    \rulebody$ and $t \before t' \notin \hat \rulebody$.
    By the downward closure of $K_{h+1}$, we have that $t' \in
    \bigcup_{[s]_\equiv \in K_{h+1}} [s]_{\equiv}$.
    Thus, $f(t') \leq h$.
    By inductive hypothesis, $f(t') \leq h$ implies $f^\trgr_\lambda(t') \leq h$.
  \item \label{it:onlyIf:lt}
    $h > f^\trgr_{\lambda}(t')$ for all $t'$ with $t' \before* t \in \hat
    \rulebody$.
    By the definition of \nextviewpoint{\viewpointVh} in
    \autoref{sec:viewpoints}, we have $t' \in \bigcup_{[s]_\equiv \in
      K_{h}} [s]_{\equiv}$.
    Thus, $f(t') < h$.
    By inductive hypothesis, $f(t') < h$ implies $f^\trgr_\lambda(t') < h$.
  \item \label{it:onlyIf:events}
    $\termToEvent{[t]_\equiv} \subseteq \eventsOnesigmah$.
    It immediately follows from $\termToEvent{K_{h+1} \setminus K_h} \subseteq
    \eventsOnesigmah$ (cf. \autoref{sec:viewpoints}) and $[t]_{\equiv} \in
    K_{h+1} \setminus K_h$
  \end{enumerate}
  From \ref{it:onlyIf:leq}, \ref{it:onlyIf:lt}, and \ref{it:onlyIf:events}
  above, it follows that $f^\trgr(t) \leq h < j$.

  \medskip

  \noindent{\em ``if'' direction.} Assume $f^\trgr_\lambda(t) = h < j$.
  We show that $f(t) < j$ holds.
  The assumption $f^\trgr_\lambda(t) = h$ implies that
    \begin{enumerate*}[label={\em(\roman*)}]
  \item \label{it:if:leq}
    $h \geq f^\trgr_{\lambda}(t')$ for all $t'$ with $t' \before t \in \hat
    \rulebody$ and $t \before t' \notin \hat \rulebody$,
  \item \label{it:if:lt}
    $h > f^\trgr_{\lambda}(t')$ for all $t'$ with $t' \before* t \in \hat
    \rulebody$, and
  \item \label{it:if:events}
    $\termToEvent{[t]_\equiv} \subseteq \eventsOnesigmah$.
  \end{enumerate*}
  From \ref{it:onlyIf:leq} and \ref{it:onlyIf:lt} above, it follows that there
  are viewpoints $\viewpointVh = (G, K_{h}) \in q_h$ and $ \viewpointVhPlusOne =
  (G, K_{h+1}) \in q_{h+1}$
%
%
  such that
%
%
%
  $t' \in \bigcup_{[s]_{\equiv} \in K_{h+1} } [s]_{\equiv}$ for all $t'$ with
  $t' \before t \in \hat \rulebody$ and $t \before t' \notin \hat \rulebody$,
  (due to \ref{it:if:leq}), $t' \in \bigcup_{[s]_{\equiv} \in K_{h} }
  [s]_{\equiv}$ for all $t'$ with $t' \before* t \in \hat \rulebody$, and (due
  to \ref{it:if:lt}).
  Therefore, $[t]_{\equiv} \in \nextviewpoint{\viewpointVh}$.
  From \ref{it:if:events} above, it follows that $[t]_{\equiv} \in
  \consumed{\viewpointVh}{\sigma_{h}} = K_{h+1}$.
  Hence, $f(t) = h < j$.
\end{proof}

The following corollary immediately follows (recall that the last state of an
accepting run is not the rejecting sink state \sinkAP and it is such that all
enabled viewpoints in it are final).

\begin{corollary} \label{cor:automataAcceptWitnesses}
  Let $P = (\SV, S)$ be an eager qualitative timeline-based planning problem,
  $\lambda=\seq{\sigma_0,\ldots,\sigma_{n-1}}$ a finite word over $\Sigma_\SV$
  that encodes a plan over \SV, and $\pi = q_0 \trans{\sigma_0} q_1
  \trans{\sigma_1} \ldots \trans{\sigma_{n-1}} q_n$ the run of \automAP on
  $\lambda$.
  If $\pi$ is accepting, then for all $\viewpointV \in q_n$, with \viewpointV
  enabled, and all $f \in \funsetqnV$, we have that $f=f^\trgr_\lambda$, where
  $\trgr=\triggerFunlambdaf$.
\end{corollary}

The next lemma establishes the correctness of out automata construction.

\begin{lemma}
  \label{lem:automataCorrectness}
  Let $P = (\SV, S)$ be an eager qualitative timeline-based planning problem,
  $\lambda=\seq{\sigma_0,\ldots,\sigma_{n-1}}$ a finite word over $\Sigma_\SV$
  that encodes a plan over \SV, and $\pi = q_0 \trans{\sigma_0} q_1
  \trans{\sigma_1} \ldots \trans{\sigma_{n-1}} q_n$ the run of \automAP on
  $\lambda$.

  \begin{enumerate}
  \item If $\pi$ is accepting, then $\bigcup_{\viewpointV \in q_n, \viewpointV
      \text{enabled}} \funsetqnV = \{ f^\trgr_\lambda \mid \Rule \in S, \trgr
    \in \triggerslambdaR \}$.
  \item If $\pi$ is accepting, then for every $f \in \bigcup_{\viewpointV \in
      q_n, \viewpointV \text{enabled}} \funsetqnV$, it holds that $f$ is the
    canonical eager witness of \triggerFunlambdaf in $\lambda$.
  \item If $\pi$ is rejecting, then there are $\Rule \in S$ and $\trgr \in
    \triggerslambdaR$ such that $f^{\trgr}_{\lambda}$ is not the canonical eager
    witness of \trgr in $\lambda$.
  \end{enumerate}
\end{lemma}
\begin{proof}
  \begin{enumerate}
  \item The left-to-right inclusion immediately follows from
    \autoref{cor:automataAcceptWitnesses}.
    In order to proof the converse inclusion, we proceed as follows.
    Let $\Rule \in S$ and $\trgr \in \triggerslambdaR$.
    We show that there are $\viewpointVn \in q_n$, with \viewpointVn enabled,
    and $f \in \funsetqnVn$ such that $f = f^\trgr_\lambda$.
    If \Rule is triggerless, then the claim follows immediately from
    \autoref{cor:automataAcceptWitnesses} (recall that viewpoints for
    triggerless rules are always enabled).
    If, instead, \Rule is not triggerless, with trigger $a_0[x_0=v_0]$, then
    $\trgr = (h,k)$, for some $h,k \in \mathbb N \times \mathbb N$, with
    $h<n-1$, and $\mathit{start}(x_0,v_0) \in \eventsOnesigmah$ (i.e.,
    $\sigma_h$ triggers \Rule).
    Note that, from $\pi$ being accepting, it follows that $q_i \neq \sinkAP$
    for all $i \in \{ 0, 1, \ldots, n \}$ and that $q_i$ is compatible with
    $\sigma_i$ for all $i \in \{ 0, 1, \ldots, n-1 \}$.
    Then, since $q_h$ is compatible with $\sigma_h$ and $\sigma_h$ triggers
    \Rule, there is $\viewpointVh = (G,K_h) \in q_h$ for \Rule such that
    $[\tokstart(a_0)]_\equiv \in \consumedVhsigmah \setminus K_{h}$.
    Therefore, there is $\viewpointVhPlusOne = (G,K_{h+1}) \in q_{h+1}$ such
    that $[\tokstart(a_0)]_\equiv \in K_{h+1} \setminus K_h$, which means that
    there are $\viewpointVn \in q_n$ and $f \in \funsetqnVn$ such that
    \viewpointVn is enabled and $f(\tokstart(a_0)) = h$.
    By \autoref{cor:automataAcceptWitnesses}, $f=f^{\trgr'}_\lambda$, where
    $\trgr' = \trgr$ (because $\trgr' = \triggerFunlambdaf = (f(\tokstart(a_0)),
    \matchEndlambdafaZero) = (h,k) = \trgr$).

  \item Let $\viewpointV \in q_n$, with \viewpointV enabled, and $f \in
    \funsetqnV$.
    By the previous item, $f = f^\trgr_\lambda$, with $\trgr =
    \triggerFunlambdaf$.
    Moreover, by definition, $f$ is a witness of \triggerFunlambdaf in
    $\lambda$.
    We show that it is a relaxed one, i.e., for all non-matching token names $a$
    appearing in \domf and all terms $t \not\in \{ \tokstart(a), \tokend(a) \}$
    it holds that $\tokstart(a) \before t \in \hat \rulebody$ implies
    $\tokend(a) \before t \in \hat \rulebody$, where \rulebody is the unique
    clause of \rulefunV (cf. \autoref{def:matchingAndRelaxedWitness}).
    Let $a$ be a non-matching token name in $f$ and $\lambda$, with $a[x=v]$
    appearing in \rulefunV, and let $m = \matchEndlambdafa < f(\tokend(a))$.
    Then, $\tokstart(a), \tokend(a) \in \domf$ and $\mathit{end}(x,v) \in
    \eventsOneBasic{\sigma_m}$.
%
%
    Since $\pi$ is accepting and given that $f(\tokend(a)) > m$, there are a
    viewpoint $\viewpointV \in q_m$ and a function $f' \in \funsetqmV$ such that
    $f'(\tokstart(a)) = f(\tokstart(a))$ and $\tokend(a) \notin \waitingV$ (or
    it would be $f(\tokend(a))=m$, leading to a contradiction).
    Therefore, by the definition of \waitingV, it must be that $\tokstart(a)
    \before t \in \hat \rulebody$ implies $\tokend(a) \before t \in \hat
    \rulebody$ for all terms $t \not\in \{ \tokstart(a), \tokend(a) \}$, which
    concludes the proof.

  \item Since $\pi$ is rejecting, there are two possibilities:

    \begin{enumerate}[label={\it \Alph*.},ref={\it \Alph*}]
    \item \label{item:not-sink}
      $q_n \neq \sinkAP$ and there is $\viewpointV \in q_n$ that is enabled but
      not final;
    \item \label{item:sink}
      there is $i \in \{ 0, 1, n-1 \}$ such that $q_i \neq \sinkAP$ and $q_{i+1}
      = \sinkAP$.
    \end{enumerate}

    In the former case, there is $f \in \funsetqnV$ that is not defined for some
    term $t$ appearing in \rulefunV.
    By \autoref{lem:automataAcceptWitnesses}, we have that $f =
    f^\trgr_{\lambda}$, where $\trgr=\triggerFunlambdaf$.
    Therefore, $f^\trgr_{\lambda}$ is a partial function, and thus it is not the
    canonical eager witness of \trgr in $\lambda$ (as a matter of fact
    $f^\trgr_{\lambda}$ is not even a witness \trgr in $\lambda$).

    In the latter case, we further distinguish two cases:
    \begin{enumerate}[label={\it \ref{item:sink}.\roman*}, align=left,
      labelindent=1mm]
    \item \label{item:not-compatible}
      there is a viewpoint $\viewpointV = (G,K) \in q_i$ that is not compatible
      with $\sigma_i$;
    \item \label{item:no-enable}
      there is a rule $\Rule \in S$ such that \Rule is not triggerless,
      $\sigma_i$ triggers \Rule, and there is no viewpoint $\viewpointV \in q_i$
      for \Rule such that $\sigma_i$ enables \viewpointV.
    \end{enumerate}
    We deal with these two cases separately.
    \begin{itemize}[align=left, labelindent=1mm,
      widest=\ref{item:no-enable}\hspace{3mm}]

    \item[\ref{item:not-compatible}] \viewpointV being not compatible with
      $\sigma_i$ implies that $\termToEvent{\waitingV} \cap \eventsOnesigmai
      \subseteq \termToEvent{\consumedVsigmai \setminus K}$.
      Thus, there is a token name $a$ such that $a[x=v]$ appears in \rulefunV,
      $\tokend(a) \in \waitingV$, $\mathit{end}(x,v) \in \eventsOnesigmai$, and
      $[\tokend(a)]_\equiv \notin \consumedVsigmai \setminus K$.

      Let $f \in \funsetqiV$.
      By $\tokend(a) \in \waitingV$, we know that $[\tokstart(a)]_\equiv \in K$,
      meaning that \viewpointV is enabled.
      Therefore, $f$ identifies univocally the trigger \triggerFunlambdaf in
      \triggerslambdaR.
      Moreover, by \autoref{lem:automataAcceptWitnesses}, we have $f(t) = j$ if
      and only if $f^\trgr_\lambda(t) = j$, for all terms $t$ appearing in
      \rulefunV and all $j \in \mathbb N$, with $j < i$.
      We show that $f^\trgr_\lambda$ is not the canonical eager witness of \trgr
      in $\lambda$.

      From $\tokend(a) \in \waitingV$, it follows $[\tokend(a)]_\equiv \notin
      K$; thus, since $[\tokend(a)]_\equiv \notin \consumedVsigmai \setminus K$,
      it holds that $[\tokend(a)]_\equiv \notin \consumedVsigmai$.
      By the definition of \consumedVsigmai, there are two possibilities, both
      leading to $f^\trgr_\lambda(\tokend(a)) > i$.
      \begin{itemize}
      \item If $[\tokend(a)]_\equiv \notin \nextV$, then there is $t$ such that
        $t \before* \tokend(a) \in \hat \rulebody$ and $[t]_\equiv \not \in K$.
        In this case, we have $f(t) \geq i$, that implies $f^\trgr_\lambda(t)
        \geq i$.
        If $f^\trgr_\lambda$ was a witness of \trgr in $\lambda$, then we would
        have $f^\trgr_\lambda(\tokend(a)) > f^\trgr_\lambda(t) \geq i$.

      \item Otherwise, there must be $t$ such that $t \before \tokend(a) \in
        \hat \rulebody$, $[t]_\equiv \not \in \consumedVsigmai$, and
        $\termToEventt \notin \eventsOnesigmai$.
        Therefore, also in this case, we have $f^\trgr_\lambda(\tokend(a)) > i$.
      \end{itemize}

      From $f^\trgr_\lambda(\tokend(a)) > i$, it follows that $a$ is not
      matching in $f^\trgr_\lambda$ and $\lambda$.
      Therefore, $a$ is not the
      trigger token of \rulefunV.
      By $\tokend(a) \in \waitingV$, it must then be the case that there is a
      term $t \not\in \{ \tokstart(a), \tokend(a) \}$ such that $\tokstart(a)
      \before t \in \hat \rulebody$ and $\tokend(a) \before t \not\in \hat
      \rulebody$.
      This, together with $a$ being not matching, implies that $f^\trgr_\lambda$
      is not a relaxed witness, and thus it is not the canonical eager witness
      of \trgr in $\lambda$.


    \item[\ref{item:no-enable}]
      Assume there is a rule $\Rule \in S$ such that \Rule is not triggerless,
      with trigger $a_0[x_0=v_0]$, $\sigma_i$ triggers \Rule, and there is no
      viewpoint $\viewpointV \in q_i$ for \Rule such that $\sigma_i$ enables
      \viewpointV.
      Then, from the fact that $\sigma_i$ triggers \Rule, it follows
      $\termToEvent{\tokstart(a_0)} \in \eventsOnesigmai$, and thus $(i,j) \in
      \triggerslambdaR$, for some $j \in \mathbb N$.
      Assume, towards a contradiction, that $f^{(i,j)}_\lambda$ is the canonical
      eager witness of $(i,j)$ in $\lambda$.
      Therefore, $f^{(i,j)}_\lambda(\tokstart(a)) = i$.
      However, let $\viewpointV \in q_i$ be a viewpoint for \Rule and let $f \in
      \funsetqiV$.
      Since $\sigma_i$ does not enable \viewpointV, we distinguish two
      scenarios.
      \begin{itemize}
      \item If there is $t$ such that $t \before* \tokstart(a) \in \hat
        \rulebody$ and $[t]_\equiv \not \in K$, then $f(t) \geq i$, that implies
        $f^{(i,j)}_\lambda(t) \geq i$.
        If $f^{(i,j)}_\lambda$ was a witness of $(i,j)$ in $\lambda$, then we
        would have $f^{(i,j)}_\lambda(\tokstart(a)) > f^{(i,j)}_\lambda(t) \geq
        i$, which is in contradiction with $f^{(i,j)}_\lambda(\tokstart(a)) =
        i$.

      \item Otherwise, there must be $t$ such that $t \before \tokstart(a) \in
        \hat \rulebody$, $[t]_\equiv \not \in \consumedVsigmai$, and
        $\termToEventt \notin \eventsOnesigmai$.
        Therefore, also in this case, we have $f^\trgr_\lambda(\tokstart(a)) >
        i$, which is in contradiction with $f^{(i,j)}_\lambda(\tokstart(a)) =
        i$. \qedhere
      \end{itemize}
    \end{itemize}
  \end{enumerate}
\end{proof}

\begin{corollary}\label{cor:automatonCorrectness}
  Let $P = (\SV, S)$ be an eager qualitative timeline-based planning problem and
  $\lambda$ a finite word over $\Sigma_\SV$ that encodes a plan over \SV.
  \automAP accepts $\lambda$ if and only if $f^{\trgr}_{\lambda}$ is the
  canonical eager witness of \trgr in $\lambda$ for all $\Rule \in S$ and $\trgr
  \in \triggerslambdaR$.
\end{corollary}

\autoref{lem:automatonSolutions:onlyif} now follows from
\autoref{cor:solutionIffCanonicalWitness} and
\autoref{cor:automatonCorrectness}, thus completing the proof of
\autoref{lem:automatonSolutions}.


\bibliographystyle{alphaurl}
\bibliography{biblio}

@book{hopcroft2006introduction,
  author    = {John E. Hopcroft and Rajeev Motwani and Jeffrey D. Ullman},
  title     = {Introduction to Automata Theory, Languages, and Computation},
  edition   = {3rd},
  publisher = {Addison-Wesley},
  year      = {2006},
  address   = {Boston},
  isbn      = {9780321455369}
}

@article{PolyvyanyyGBD12,
  author    = {Artem Polyvyanyy and Luciano Garc{\'\i}a-Ba{\~n}uelos and Marlon Dumas},
  title     = {Structuring acyclic process models},
  journal   = {Information Systems},
  volume    = {37},
  number    = {6},
  pages     = {518--538},
  year      = {2012},
  doi       = {10.1016/j.is.2011.10.005}
}

@inproceedings{KoehlerH04,
  author    = {Jana Koehler and Rainer Hauser},
  title     = {Untangling Unstructured Cyclic Flows -- {A} Solution Based on Continuations},
  booktitle = {On the Move to Meaningful Internet Systems 2004: CoopIS, DOA, and ODBASE},
  series    = {Lecture Notes in Computer Science},
  volume    = {3290},
  pages     = {121--138},
  publisher = {Springer},
  year      = {2004},
  doi       = {10.1007/978-3-540-30468-5\_10}
}

@article{ChoiKJZ15,
  author    = {Yongsun Choi and Petcharat Kongsuwan and Cheol Min Joo and J. Leon Zhao},
  title     = {Stepwise structural verification of cyclic workflow models with acyclic decomposition and reduction of loops},
  journal   = {Data \& Knowledge Engineering},
  volume    = {95},
  pages     = {39--65},
  year      = {2015},
  doi       = {10.1016/j.datak.2014.11.003}
}

@inproceedings{PrinzCH22,
  author    = {Thomas M. Prinz and Yongsun Choi and N. Long Ha},
  title     = {Understanding and Decomposing Control-Flow Loops in Business Process Models},
  booktitle = {Business Process Management},
  series    = {Lecture Notes in Computer Science},
  volume    = {13420},
  pages     = {291--308},
  publisher = {Springer},
  year      = {2022},
  doi       = {10.1007/978-3-031-16103-2\_21}
}

@inproceedings{DBLP:conf/time/CombiOS19,
  author       = {Carlo Combi and
                  Barbara Oliboni and
                  Pietro Sala},
  editor       = {Johann Gamper and
                  Sophie Pinchinat and
                  Guido Sciavicco},
  title        = {Customizing {BPMN} Diagrams Using Timelines},
  booktitle    = {26th International Symposium on Temporal Representation and Reasoning,
                  {TIME} 2019, October 16-19, 2019, M{\'{a}}laga, Spain},
  series       = {LIPIcs},
  volume       = {147},
  pages        = {5:1--5:17},
  publisher    = {Schloss Dagstuhl - Leibniz-Zentrum f{\"{u}}r Informatik},
  year         = {2019},
  url          = {https://doi.org/10.4230/LIPIcs.TIME.2019.5},
  doi          = {10.4230/LIPICS.TIME.2019.5},
  timestamp    = {Wed, 21 Aug 2024 22:46:00 +0200},
  biburl       = {https://dblp.org/rec/conf/time/CombiOS19.bib},
  bibsource    = {dblp computer science bibliography, https://dblp.org}
}

@article{GiganteMOCR20,
   abstract = {In timeline-based planning, domains are described as sets of independent, but interacting, components, whose behaviour over time (the set of timelines) is governed by a set of temporal constraints. A distinguishing feature of timeline-based planning systems is the ability to integrate planning with execution by synthesising control strategies for flexible plans. However, flexible plans can only represent temporal uncertainty, while more complex forms of nondeterminism are needed to deal with a wider range of real-world domains. In this paper, we propose a novel game-theoretic approach to timeline-based planning problems, generalising the state of the art while uniformly handling temporal uncertainty and nondeterminism. We define a general concept of timeline-based game and we show that the notion of winning strategy for these games is strictly more general than that of control strategy for dynamically controllable flexible plans. Moreover, we show that the problem of establishing the existence of such winning strategies is 2EXPTIME-complete.},
   author = {Nicola Gigante and Angelo Montanari and Andrea Orlandini and Marta Cialdea Mayer and Mark Reynolds},
   doi = {10.1016/j.tcs.2020.02.011},
   issn = {03043975},
   journal = {Theoretical Computer Science},
   keywords = {Complexity,Games,Planning under uncertainty,Timeline-based planning},
   month = {5},
   pages = {247-269},
   publisher = {Elsevier B.V.},
   title = {On timeline-based games and their complexity},
   volume = {815},
   year = {2020}
}

@inproceedings{AcamporaGGMP2022,
  author       = {Renato Acampora and
                  Luca Geatti and
                  Nicola Gigante and
                  Angelo Montanari and
                  Valentino Picotti},
  editor       = {Pierre Ganty and
                  Della Monica, Dario},
  title        = {Controller Synthesis for Timeline-based Games},
  booktitle    = {Proceedings of the 13th International Symposium on Games, Automata,
                  Logics and Formal Verification, GandALF 2022, Madrid, Spain, September
                  21-23, 2022},
  series       = {{EPTCS}},
  volume       = {370},
  pages        = {131--146},
  year         = {2022},
  doi          = {10.4204/EPTCS.370.9},
  timestamp    = {Sat, 30 Sep 2023 10:09:21 +0200},
  biburl       = {https://dblp.org/rec/journals/corr/abs-2209-10319.bib},
  bibsource    = {dblp computer science bibliography, https://dblp.org}
}

@inproceedings{DellaMonicaGTM20,
  author       = {Della Monica, Dario and
                  Nicola Gigante and
                  La Torre, Salvatore and
                  Angelo Montanari},
  editor       = {Emilio Mu{\~{n}}oz{-}Velasco and
                  Ana Ozaki and
                  Martin Theobald},
  title        = {Complexity of Qualitative Timeline-Based Planning},
  booktitle    = {27th International Symposium on Temporal Representation and Reasoning,
                  {TIME} 2020, September 23-25, 2020, Bozen-Bolzano, Italy},
  series       = {LIPIcs},
  volume       = {178},
  pages        = {16:1--16:13},
  publisher    = {Schloss Dagstuhl - Leibniz-Zentrum f{\"{u}}r Informatik},
  year         = {2020},
  doi          = {10.4230/LIPICS.TIME.2020.16},
  timestamp    = {Mon, 21 Dec 2020 13:23:22 +0100},
  biburl       = {https://dblp.org/rec/conf/time/MonicaGTM20.bib},
  bibsource    = {dblp computer science bibliography, https://dblp.org}
}

@inproceedings{GiganteMCO17,
  author    = {Nicola Gigante and
               Angelo Montanari and
               Marta {Cialdea Mayer} and
               Andrea Orlandini},
  editor    = {Laura Barbulescu and
               Jeremy Frank and
               Mausam and
               Stephen F. Smith},
  title     = {Complexity of Timeline-Based Planning},
  booktitle = {Proceedings of the 27th International Conference on Automated
               Planning and Scheduling},
  keywords   = {own},
  pages     = {116--124},
  publisher = {{AAAI} Press},
  doi       = {10.1609/icaps.v27i1.13830},
  year      = {2017}
}

@inproceedings{DellaMonicaGMS18,
  author    = {Della Monica, Dario and Nicola Gigante and
               Angelo Montanari and Pietro Sala},
  editor    = {Michael Thielscher and Francesca Toni and Frank Wolter},
  title     = {A Novel Automata-Theoretic Approach to Timeline-Based Planning},
  booktitle = {Proceedings
               of the 16th International Conference on Principles of Knowledge Representation and Reasoning},
  keywords   = {own},
  pages     = {541--550},
  publisher = {{AAAI} Press},
  year      = {2018}
}

@article{FoxL03,
  author       = {Maria Fox and
                  Derek Long},
  title        = {{PDDL2.1:} An Extension to {PDDL} for Expressing Temporal Planning
                  Domains},
  journal      = {J. Artif. Intell. Res.},
  volume       = {20},
  pages        = {61--124},
  year         = {2003},
  doi          = {10.1613/jair.1129}
}

@incollection{Muscettola94,
  author      = {Nicola Muscettola},
  title       = {{{HSTS}}: {I}ntegrating {P}lanning and {S}cheduling},
  editor      = {Monte Zweben and Mark S. Fox},
  booktitle   = {Intelligent Scheduling},
  publisher   = {Morgan Kaufmann},
  year        = {1994},
  pages       = {169--212},
  chapter     = {6}
}

@inproceedings{ChienSTCRCDLMFTHDSUBBGGDBDI04,
  author    = {Steve A. Chien and Rob Sherwood and Daniel Tran and
               Benjamin Cichy and Gregg Rabideau and Rebecca Casta{\~{n}}o and
               Ashley Davies and Rachel Lee and Dan Mandl and Stuart Frye and
               Bruce Trout and Jerry Hengemihle and Jeff D'Agostino and
               Seth Shulman and Stephen G. Ungar and Thomas Brakke and
               Darrell Boyer and Jim Van Gaasbeck and Ronald Greeley and
               Thomas Doggett and Victor R. Baker and James M. Dohm and
               Felipe Ip},
  title     = {The {EO-1} Autonomous Science Agent},
  booktitle = {3rd International Joint Conference on Autonomous Agents and
                Multiagent Systems},
  pages     = {420--427},
  publisher = {{IEEE} Computer Society},
  year      = {2004},
  doi       = {10.1109/AAMAS.2004.10022}
}

@misc{ChienRKSEMESFBST00,
  author = {Steve A. Chien and Gregg Rabideau and Russell L. Knight and Rob Sherwood and Barbara E.
  Engelhardt and D. Mutz and Tara Estlin and B. Smith and Forest Fisher and T.
  Barrett and G. Stebbins and Daniel Tran},
  title = {ASPEN - Automating Space Mission Operations using Automated Planning
           and Scheduling},
  booktitle = {Proceedings of the International Conference on Space Operations},
  year = {2000}
}

@misc{FratiniCORD11,
  author = {Fratini, Simone and Cesta, Amedeo and Orlandini, Andrea and Rasconi, Riccardo and De Benedictis, Riccardo},
  booktitle = {ASTRA 2011},
  publisher = {ESA},
  title = {APSI-based Deliberation in Goal Oriented Autonomous Controllers},
  volume = {11},
  year = {2011}
}

@article{CialdeaMayerOU16,
  author    = {Marta {Cialdea Mayer} and Andrea Orlandini and
               Alessandro Umbrico},
  title     = {Planning and execution with flexible timelines: a formal
               account},
  journal   = {Acta Informatica},
  volume    = {53},
  number    = {6-8},
  pages     = {649--680},
  year      = {2016},
  doi       = {10.1007/s00236-015-0252-z}
}

@inproceedings{UmbricoCMO17,
  author    = {Alessandro Umbrico and Amedeo Cesta and Marta {Cialdea Mayer} and
               Andrea Orlandini},
  editor    = {Floriana Esposito and Roberto Basili and Stefano Ferilli and
               Francesca A. Lisi},
  title     = {PLATINUm: {A} New Framework for Planning and Acting},
  booktitle = {Proceedings of the 16th International
               Conference of the Italian Association for Artificial Intelligence},
  series    = {LNCS},
  volume    = {10640},
  pages     = {498--512},
  publisher = {Springer},
  year      = {2017},
  doi       = {10.1007/978-3-319-70169-1\_37}
}

@inproceedings{BozzelliMMP18a,
  author    = {Laura Bozzelli and Alberto Molinari and
               Angelo Montanari and Adriano Peron},
  editor    = {Michael Thielscher and
               Francesca Toni and
               Frank Wolter},
  title     = {Decidability and Complexity of Timeline-Based Planning over Dense
               Temporal Domains},
  booktitle = {Proceedings of the 16th International Conference on Principles
               of Knowledge Representation and Reasoning},
  pages     = {627--628},
  publisher = {{AAAI} Press},
  year      = {2018},
  url       = {https://aaai.org/ocs/index.php/KR/KR18/paper/view/17995}
}

@inproceedings{BozzelliMMP18b,
  author    = {Laura Bozzelli and Alberto Molinari and
               Angelo Montanari and Adriano Peron},
  editor    = {Andrea Orlandini and Martin Zimmermann},
  title     = {Complexity of Timeline-Based Planning over Dense Temporal
               Domains: Exploring the Middle Ground},
  booktitle = {Proceedings of the 9th International Symposium on Games,
               Automata, Logics, and Formal Verification},
  series    = {{EPTCS}},
  volume    = {277},
  pages     = {191--205},
  year      = {2018},
  doi       = {10.4204/EPTCS.277.14}
}

@inproceedings{UmbricoCO23,
  author       = {Alessandro Umbrico and
                  Amedeo Cesta and
                  Andrea Orlandini},
  title        = {Human-Aware Goal-Oriented Autonomy through ROS-Integrated Timeline-based
                  Planning and Execution},
  booktitle    = {32nd {IEEE} International Conference on Robot and Human Interactive
                  Communication},
  pages        = {1164--1169},
  publisher    = {{IEEE}},
  year         = {2023},
  doi          = {10.1109/RO-MAN57019.2023.10309516}
}

@article{Allen83,
  author       = {James F. Allen},
  title        = {Maintaining Knowledge about Temporal Intervals},
  journal      = {Commun. {ACM}},
  volume       = {26},
  number       = {11},
  pages        = {832--843},
  year         = {1983},
  doi          = {10.1145/182.358434}
}

@inproceedings{PnueliR89,
  author       = {Amir Pnueli and
                  Roni Rosner},
  editor       = {Giorgio Ausiello and
                  Mariangiola Dezani{-}Ciancaglini and
                  Simona Ronchi Della Rocca},
  title        = {On the Synthesis of an Asynchronous Reactive Module},
  booktitle    = {16th International Colloquium on Automata, Languages and Programming},
  series       = {Lecture Notes in Computer Science},
  volume       = {372},
  pages        = {652--671},
  publisher    = {Springer},
  year         = {1989},
  doi          = {10.1007/BFB0035790}
}

@inproceedings{GiganteMMO16,
  author       = {Nicola Gigante and
                  Angelo Montanari and
                  Marta Cialdea Mayer and
                  Andrea Orlandini},
  editor       = {Curtis E. Dyreson and
                  Michael R. Hansen and
                  Luke Hunsberger},
  title        = {Timelines Are Expressive Enough to Capture Action-Based Temporal Planning},
  booktitle    = {23rd International Symposium on Temporal Representation and Reasoning,},
  pages        = {100--109},
  publisher    = {{IEEE} Computer Society},
  year         = {2016},
  doi          = {10.1109/TIME.2016.18}
}

@inproceedings{della2017bounded,
  title={Bounded timed propositional temporal logic with past captures timeline-based planning with bounded constraints},
  author={Della Monica, Dario and Gigante, Nicola and Montanari, Angelo and Sala, Pietro and Sciavicco, Guido and others},
  booktitle={IJCAI},
  pages={1008--1014},
  year={2017},
  doi={10.24963/IJCAI.2017/140},
  organization={International Joint Conferences on Artificial Intelligence}
}

@inproceedings{gandalf24,
  author    = {Renato Acampora and Della Monica, Dario
               and Luca Geatti and Nicola Gigante
	       and Angelo Montanari},
  title     = {Synthesis of Timeline-Based Planning Strategies
               Avoiding Determinization},
  booktitle = {Proc. of the 15th International Symposium on Games,
               Automata, Logics and Formal Verification (GandALF)},
  year      = {2024}
}

\end{document}